\newif\ifcolt

\ifcolt %

  \makeatletter
  \def\input@path{{COLT-style/}}
  \makeatother

  \documentclass[final,12pt]{colt2026} %

  \makeatletter
  \def\input@path{{./}}
  \makeatother

  \usepackage{times}

  \makeatletter
\let\origremark\remark
\let\endorigremark\endremark

\RenewDocumentEnvironment{remark}{o}
  {%
    \IfNoValueTF{#1}{\origremark}{\origremark[#1]}%
    \normalfont
  }
  {%
    \unskip\hfill$\clubsuit$\endorigremark
  }

\makeatother

\newtheorem{assumption}[theorem]{Assumption}
\newtheorem{informalprinciple}[theorem]{Informal Principle}

  \title[Provable Learning of RHMs]{Provable Learning of Random Hierarchy Models and Hierarchical Shallow-to-Deep Chaining}

  \coltauthor{%
    \Name{Yunwei Ren} \Email{yunwei.ren@princeton.edu} \\
    \addr Princeton University
    \AND
    \Name{Yatin Dandi} \Email{yatin.dandi@epfl.ch} \\
    \addr \'Ecole Polytechnique F\'ed\'erale de Lausanne 
    \AND
    \Name{Florent Krzakala} \Email{florent.krzakala@epfl.ch} \\
    \addr \'Ecole Polytechnique F\'ed\'erale de Lausanne
    \AND
    \Name{Jason D. Lee} \Email{jasondlee@berkeley.edu} \\
    \addr University of California, Berkeley
  }

\else %

  \ifdefined\usebigfont  
  
  \documentclass[12pt]{article}
  \usepackage[margin=1.25in]{geometry}

  \else

  \documentclass[11pt]{article}
  \usepackage[margin=1in]{geometry}
  
  \fi

  \usepackage[english]{babel}
\usepackage{natbib}
\bibliographystyle{alpha}
\usepackage{times}

\usepackage{newpxtext,newpxmath}

\usepackage{amsthm}

\usepackage{
  mathtools,
  thmtools,
  url
}

\usepackage[
    colorlinks=true,
    linkcolor=blue,
    anchorcolor=blue,
    citecolor=blue
]{hyperref}

\newtheorem{theorem}{Theorem}[section]
\newtheorem{informalprinciple}[theorem]{Informal Principle}

\newtheorem{lemma}{Lemma}[section]

\newtheorem{corollary}[lemma]{Corollary}
\newtheorem{definition}[lemma]{Definition}
\newtheorem{assumption}[lemma]{Assumption}

\newenvironment{remark}[1][Remark]
  {
  \begin{proof}[\textnormal{\textbf{#1}}]}
  {\end{proof}}

  \usepackage{makecell}

  \title{Provable Learning of Random~Hierarchy~Models and Hierarchical Shallow-to-Deep Chaining}
  \author{
  \begin{tabular}{cc}
    \makecell{ Yunwei Ren \\ \normalsize{Princeton University} \\ \normalsize{\texttt{yunwei.ren@princeton.edu}} } &
    \makecell{ Yatin Dandi \\ \normalsize{\'Ecole Polytechnique F\'ed\'erale de Lausanne} \\ \normalsize{\texttt{yatin.dandi@epfl.ch}} } \\ \\
    \makecell{ Florent Krzakala \\ \normalsize{\'Ecole Polytechnique F\'ed\'erale de Lausanne} \\ \normalsize{\texttt{florent.krzakala@epfl.ch}} } &
    \makecell{ Jason D.~Lee \\ \normalsize{University of California, Berkeley} \\ \normalsize{\texttt{jasondlee@berkeley.edu}}  } \\
  \end{tabular}
  }
\fi

\usepackage{
  bbm, 
  bm
}

\usepackage{tikz}
\usetikzlibrary{positioning,fit,shapes.geometric,calc}
\usepackage{caption}
\usepackage[dvipsnames]{xcolor}
\usepackage[normalem]{ulem}
\usepackage[shortlabels]{enumitem}

\usepackage{thm-restate}

\newcommand\locallabel[1]{\label{\currentprefix:#1}}
\newcommand\localref[1]{\ref{\currentprefix:#1}}

\newcommand{\mbb}{\mathbb}
\newcommand{\mrm}{\mathrm}
\newcommand{\mbf}{\bm}
\newcommand{\mcal}{\mathcal}
\newcommand{\tnbf}[1]{\textnormal{\textbf{#1}}}

\newcommand{\eps}{\varepsilon}
\newcommand{\A}{\mbf{A}}

\newcommand{\e}{\mbf{e}}
\newcommand{\G}{\mbf{G}}
\newcommand{\h}{\mbf{h}}
\newcommand{\p}{{\mbf{p}}}
\newcommand{\q}{{\mbf{q}}}
\newcommand{\Q}{{\mbf{Q}}}
\newcommand{\R}{\mathbb{R}}

\newcommand{\Id}{\mbf{I}}
\newcommand{\w}{{\mbf{w}}}
\newcommand{\W}{{\mbf{W}}}
\newcommand{\x}{{\mbf{x}}}
\newcommand{\X}{{\mbf{X}}}
\newcommand{\y}{\mbf{y}}
\newcommand{\z}{\mbf{z}}

\newcommand{\bE}{\mbf{E}}
\newcommand{\bP}{\mbf{P}}
\newcommand{\cD}{\mcal{D}}
\newcommand{\cF}{\mcal{F}}

\newcommand{\cP}{\mcal{P}}
\newcommand{\cQ}{\mcal{Q}}
\newcommand{\cR}{\mcal{R}}
\newcommand{\cS}{\mcal{S}}

\newcommand{\cV}{\mcal{V}}
\newcommand{\cX}{\mcal{X}}
\newcommand{\cZ}{\mcal{Z}}
\newcommand{\bbN}{\mbb{N}}
\newcommand{\rmR}{\mrm{R}}

\newcommand{\bDelta}{\mbf{\Delta}}
\newcommand{\bmu}{\mbf{\mu}}
\newcommand{\bPhi}{\mbf{\Phi}}
\newcommand{\bomega}{\mbf{\omega}}
\newcommand{\btau}{\mbf{\tau}}

\newcommand{\inprod}[2]{\left\langle #1, #2\right\rangle}
\newcommand{\norm}[1]{\left\|#1\right\|}
\newcommand{\abs}[1]{ {\left| #1 \right|} }
\newcommand{\braces}[1]{ \left\{ #1 \right\} }
\newcommand{\inv}{^{-1}}
\newcommand{\trans}{^\top}
\newcommand{\ps}[1]{^{(#1)}}
\newcommand{\indi}{\mathbbm{1}}
\DeclareMathOperator{\supp}{supp}
\newcommand{\E}{\mathop{\mathbb{E\/}}}
\renewcommand{\P}{\mathop{\mathbb{P\/}}}
\newcommand{\Var}{\mathop{\mathbf{Var\/}}}
\newcommand{\Cov}{\mathop{\mathbf{Cov\/}}}

\DeclareMathOperator*{\argmax}{argmax}
\DeclareMathOperator{\poly}{poly}

\newcommand{\One}{\mbf{1}}
\newcommand{\Gaussian}[2]{\mcal{N}\left(#1, #2\right)}
\newcommand{\Unif}{\mrm{Unif}}
\newcommand{\Proj}{\mbf{\Pi}}

\newcommand{\Loss}{\mcal{L}}
\newcommand{\Tmp}{\texttt{Tmp}}
\newcommand{\RF}{\mrm{RF}}
\renewcommand{\wr}{\mrm{wr}}
\newcommand{\wor}{\mrm{wor}}

\ifcolt %
  
\else
  
\fi

\ifdefined\usebigfont

\usepackage{times}
\usepackage[fontsize=13pt]{scrextend}
\makeatletter
\@ifpackageloaded{geometry}{\AtBeginDocument{\newgeometry{letterpaper,left=1.56in,right=1.56in,top=1.71in,bottom=1.77in}}}{\usepackage[letterpaper,left=1.56in,right=1.56in,top=1.71in,bottom=1.77in]{geometry}}
\AtBeginDocument{\newgeometry{letterpaper,left=1.56in,right=1.56in,top=1.71in,bottom=1.77in}}
\linespread{1}
\usepackage{hyperref} %
\else
\fi
\begin{document}

\maketitle

\begin{abstract}
  The empirical success of deep learning is often attributed to deep networks' ability to exploit hierarchical structure in data, constructing increasingly complex features across layers. 
  Yet despite substantial progress in deep learning theory, most \emph{optimization} results still focus on networks with only two or three layers, leaving the theoretical understanding of hierarchical learning in genuinely deep models limited. This leads to a natural question: 
  can we prove that deep networks, trained with gradient-based methods and standard input-label pairs, can efficiently exploit hierarchical structure?

  In this work, we consider Random Hierarchy Models --- a hierarchical context-free grammar introduced by \cite{cagnetta_how_2024} and conjectured to separate deep and shallow networks. 
  We prove that, under mild conditions, a deep convolutional network can be efficiently trained to learn this function class. 
  Our proof builds on a general observation: if intermediate layers can receive clean signal from the labels and the relevant features are weakly identifiable, then layerwise training each individual layer suffices to hierarchically learn the target function.
\end{abstract}

\ifcolt
\begin{keywords}%
  Random Hierarchical Model, deep neural network, hierarchical learning, layerwise training.
\end{keywords}
\fi

\section{Introduction}

A common heuristic explanation for the effectiveness of deep learning is that deeper models can more efficiently
exploit the hidden hierarchical structure of the target by building more complex features in later layers
from the simpler features learned in earlier layers (\cite{lecun_deep_2015,goodfellow_deep_2016}). 
Formally establishing this heuristic is one of the central goals of deep learning theory.

However, despite recent progress in deep learning theory, most theoretical results still concern networks with two or three 
layers and/or unstructured inputs (\cite{ben_arous_online_2021,bietti_learning_2022,damian_neural_2022,%
nichani_provable_2023,abbe_sgd_2023,dandi_how_2024,dandi_computational_2025,ren_emergence_2025}), 
limiting our understanding on how deep models can exploit hierarchical structure of data.
For deeper networks, existing results are largely confined to 
approximation/representation power (\cite{telgarsky_benefits_2016,yarotsky_phase_2020,lu_deep_2021}), 
the kernel/lazy training regime (\cite{allen-zhu_learning_2019}), 
linear networks (\cite{zou_global_2019}), 
and/or infinite-width mean-field limits (\cite{lu_mean_2020,nguyen_rigorous_2023}). 
For models with more than three layers and potentially hierarchically structured inputs, 
positive \emph{optimization} guarantees outside the kernel and mean-field regimes are rare, 
in part due to the complex training dynamics induced by the interactions across layers, and often require
heavily modified algorithms, involved analysis, and/or access to intermediate information (\cite{allen-zhu_backward_2023,wang_learning_2025,daniely_deep_2026}).
This raises a natural question: Can we provably exploit the deep hierarchical structure of data using 
a deep network?

Motivated by the above question, we study Random Hierarchy Models (RHMs), a class of probabilistic context-free grammars 
(PCFGs) with hierarchical structures. Introduced by \cite{cagnetta_how_2024}, RHMs provide an example where deeper models 
appear to require far fewer samples than shallow ones. 
More specifically, \cite{cagnetta_how_2024} conjectured, based on numerical results and heuristic arguments, that 
to learn an $L$-level RHM with branching factor $s$ and $m$ production rules per token, the sample complexity
of an $L$-level model scales with $m^L$, while that of a shallow model scales with $m^{s^L}$. 
See Section~\ref{subsec: rhm} and Figure~\ref{fig: rhm tree} for the formal definition of RHMs. 
Here, $m, s, L \in \bbN$ are RHM parameters. 
In particular, note that when $m, s = O(1), L \sim \log d$ or $m,s \sim \log d, L \sim \log d / \log\log d$,
where $d := s^L$ is the input length,
the sample complexity of a deep model ($\sim m^L$) is polynomial in $d$, whereas that of a 
shallow model ($\sim m^{s^L}$) is exponential. In short, RHMs are conjectured to separate deep and shallow networks. 
In this work, we formally prove the positive part of this conjecture. 
\begin{theorem}[Informal version of Theorem~\ref{thm: opt main}]
  Consider a non-degenerate $L$-level RHM with $m$ production rules per token. A polynomially wide $L$-layer convolutional network, trained layerwise  with gradient descent, 
  can efficiently learn this RHM using $O(m^{(1+o(1))L})$ samples and polynomially many gradient steps.
\end{theorem}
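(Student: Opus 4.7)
My plan is to realize the informal principle stated in the abstract: reduce end-to-end training of the $L$-layer convolutional network to $L$ essentially shallow subproblems chained from the bottom up. Concretely, I would train the layers one at a time, freezing all lower layers when working on the current one. When layer $\ell$ is being trained, its inputs are the outputs of the already-trained layer $\ell-1$, which should approximate the level-$(\ell-1)$ latent tokens of the RHM, and its supervision comes from the root label $y$, fed through a correlation-style loss that exposes how $y$ depends on a single $s$-token chunk of that layer's input. This converts the proof into $L$ copies of a single basic ingredient: learning a convolutional embedding of $s$-token chunks with $m$ parent classes per chunk.

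\textbf{Per-layer shallow analysis.} For each of these subproblems I would argue as follows. Because the production rules of the RHM determine the joint law of (parent token, $s$-children), the non-degeneracy assumption implies that a correlation of the form $\E[f(y)\,\phi(\x_{\text{chunk}})]$, for a suitable test function $f$ of the label and feature map $\phi$ on $s$-token chunks, takes distinct values across the $m$ parent tokens. This is the weak identifiability referenced in the abstract: even though the root label is a many-to-one function of intermediate tokens, conditioning on $y$ still biases each chunk in a direction that depends on which of the $m$ rules generated it. A few steps of gradient descent on a polynomially wide, random-initialized layer should then separate the $m$ parent classes within its filters, via a standard information-exponent / tensor-unfolding argument adapted to categorical inputs. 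Estimating the correlation sharply enough to discriminate the $m$ parents requires averaging against the conditional-on-$y$ distribution, whose effective signal-to-noise ratio shrinks exponentially in $L$; this is the origin of the $m^L$ factor in the sample complexity.

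\textbf{Chaining.} Define approximate recovery at level $\ell$ to mean that the output of layer $\ell$ equals the true level-$\ell$ token at each position with probability $1-\eps_\ell$. Assuming approximate recovery at level $\ell-1$, the shallow subproblem at layer $\ell$ differs from the idealized one (with perfect level-$(\ell-1)$ latents) only by an $O(\eps_{\ell-1})$ perturbation of every population quantity used in the per-layer analysis. Provided the shallow learner is robust to such perturbations, the inductive bound $\eps_\ell = O(\eps_{\ell-1}) + o(1)$ propagates through all $L$ levels. A union bound combined with the per-layer sample complexity then yields total sample complexity $L\cdot \tilde O(m^L) = m^{(1+o(1))L}$ and polynomially many gradient steps, matching the theorem.

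\textbf{Main obstacle.} The central technical difficulty, I expect, is establishing this robustness of the per-layer shallow learner to imperfections in its input features. Unlike the clean i.i.d.\ setups in which most shallow analyses live, the level-$(\ell-1)$ features feeding layer $\ell$ are the imperfect outputs of a previously trained network, with \emph{correlated} errors across neighboring chunks because sibling chunks share RHM ancestors above them. Controlling these correlated errors, and showing that they do not swamp the information-theoretic signal needed to identify the next level's production rules, is the delicate ingredient; once such a robustness lemma is in place, the per-layer shallow analysis and the aggregation over $L$ levels should follow from relatively standard tools.
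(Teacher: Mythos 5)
Your high-level strategy — layerwise bottom-up training, with each layer receiving label supervision through a patch-level correlation, and a geometric decay of the signal-to-noise ratio producing the $m^L$ factor — is aligned with the paper. However, there are several concrete gaps between your plan and what actually makes the argument go through.

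\textbf{Missing structural insight: conditional independence.} The paper's condition~(ii) is not established by a generic robustness argument; it comes from a specific structural fact about the RHM. When layer $l$ is trained on the first level-$l$ patch $\hat\bmu$, the randomness in the representation $\hat\x_{\hat\bmu}$ comes entirely from the subtree \emph{below} $\hat\bmu$ (the tokens generated from $\hat\bmu$ downward), while the label $\hat\zeta$ is generated from $\hat\bmu$ upward. Conditioned on $\hat\bmu$, these are independent. This is exactly what lets the ridge-regression cross-moment factor as $\hat\E[\e_{\hat\zeta}\hat\x_{\hat\bmu}^\top] = \sum_\bmu \hat p_\bmu \hat\q_\bmu \hat\E[\hat\x_\bmu]^\top$, so that the learned weights recover (a rescaling of) $\q_\bmu$ without being contaminated by the subtree noise. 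Your proposal instead identifies ``correlated errors across neighboring chunks because sibling chunks share RHM ancestors'' as the central obstacle. This is a red herring: the paper's training loss uses only the \emph{first} level-$l$ patch (so there are no neighboring chunks in the supervision), and the conditional independence removes the need for a robustness-to-correlated-noise lemma. Without this observation, your per-layer subproblem is much harder than the one the paper actually solves.

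\textbf{Wrong notion of approximate recovery, and synonyms ignored.} You define $\eps_\ell$-recovery as the output of layer $\ell$ equaling the true level-$\ell$ token with probability $1-\eps_\ell$. Exact token recovery is impossible in general because non-ambiguous RHMs have \emph{synonyms}: distinct level-$l$ patches generated by the same level-$(l-1)$ symbol are statistically indistinguishable from any data that only sees the label and the leaves. The paper instead maintains a cluster-structure invariant (Assumption~\ref{assumption: token embeddings}): representations of synonymous tokens are close, representations of non-synonymous tokens are far. The conditional probability vector $\q_\bmu^{(l)}$ is exactly the right surrogate, because $\q_\bmu^{(l)} = \q_{\bmu'}^{(l)}$ for synonyms and, under the nontrivial-signal assumption, they separate by $\rho^{(l)}\sim K_\rho m^{-l/2}$ for non-synonyms. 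You never mention synonyms, so your inductive invariant is not even well-posed.

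\textbf{Chaining bound.} Your recursion $\eps_\ell = O(\eps_{\ell-1}) + o(1)$ would blow up across $L = \Theta(\log d/\log\log d)$ levels unless the implicit constant is at most $1$, and you give no reason to expect that. The paper avoids this entirely: the random-Fourier-features layer with small bandwidth $\sigma$ squashes the intracluster perturbation, so that the required accuracy at level $l+1$ depends on the target accuracy at level $l$ only \emph{logarithmically} (see the $\log(Vm\kappa/\eps_*^{(l)})$ inside the square root in Lemma~\ref{lemma: training a single layer}). The geometric shrinkage comes from $\rho^{(l)}$, not from an error amplification constant; this is what yields the clean $m^L$ sample complexity without a factor exponential in some amplification rate.

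\textbf{Per-layer optimization.} The paper does not use an information-exponent or tensor-unfolding argument. Each layer's loss is a strongly convex ridge regression with a closed-form minimizer, and the analysis proceeds by a careful dual/block-diagonal decomposition exploiting the near-orthogonality of patch embeddings under the RBF features (Lemma~\ref{lemma: empirical solution (factoring out epsO)}). This is a genuinely more elementary per-layer analysis than what you sketch, and the convexity is in fact needed to guarantee that a few gradient steps suffice (Lemma~\ref{lemma: convergence of gradient descent}). Your sketch of separating $m$ parent classes by ``a few steps of gradient descent on a polynomially wide, random-initialized layer'' would require substantially more justification in the categorical, correlated setting you envision.

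In short, the bottom-up chaining scaffold is right, but the two ideas that actually carry the proof — (a) conditional independence of the label and the subtree-generated representation noise given the current patch, and (b) the conditional-probability vector $\q_\bmu^{(l)}$ as a synonym-respecting surrogate with logarithmic, not linear, error propagation through RBF random features — are absent from your proposal, and your chaining recursion as written would not close.
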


Our proof is based on Informal Principle~\ref{principle: shallow to deep chaining}, which gives criteria under 
which hierarchical learning can be achieved via layerwise training a deep network.\footnote{To illustrate the potential generality of this principle, 
we also include in Appendix~\ref{sec: deep quad} a different task to which it applies.} 
A motivation for pursuing such a strategy comes from the ubiquitous successful use of 
``global-to-local'' reductions in areas such as complexity theory (\cite{arora_computational_2016}), coding theory (\cite{yekhanin_locally_2012}), high-dimensional probability (\cite{van_handel_probability_2016}), 
and statistical physics under the frameworks of coarse-graining and renormalization group \cite{wilson1971renormalization}.
Our proof is inspired by the above examples: 
We will first prove optimization results for learning a single layer,
and then chain them to obtain the optimization guarantee for our deep learner model.

Heuristically, we call a function or data-generating process hierarchical if it can be represented as a composition of a sequence of simpler functions or processes. A learning process is said to be 
hierarchical if it efficiently exploits this compositional structure.
The learning task can become easier upon the recovery of intermediate functions due to, for example, their 
reduced dimension and/or increased low-degree correlations with the target, as in tree-structured distributions considered in \cite{cagnetta_how_2024}, and the progressive reduction in ``effective dimension"  in the class of targets considered in \cite{dandi_computational_2025}.

\begin{informalprinciple}[Shallow-to-deep chaining]
  \label{principle: shallow to deep chaining}
  Suppose that the data is hierarchical.
  If (i) the function output is ``correlated'' with its lower-level parts, 
  (ii) the signal received by the lower-level parts from the output is ``clean'', 
  in the sense that the lower-level parts of the learner will not overfit the higher-level part of the target
  when properly trained to fit the signal,
  (iii) the lower-level features are ``(weakly) identifiable'', 
  then shallow-to-deep chaining is possible, i.e., we can efficiently learn this target by layerwise training a 
  suitable deep network. 
\end{informalprinciple}
It is intended for the meaning of these three conditions to be high-level and problem-dependent. 
Nevertheless, we will briefly explain why intuitively these conditions can be useful. 

First, condition~(i) ensures that the lower portion of the learner can learn something about the target before the 
higher portions have learned anything. Without such a condition, learning a hierarchical
function can potentially be difficult even for deep networks (\cite{li_noise_2025}). 
A related notion also appeared in \cite{dandi_computational_2025} under the name ``compositional Information exponent".
See also \cite{allen-zhu_what_2019,allen-zhu_backward_2023,malach_is_2019}. 

Condition~(ii) ensures there is no overfitting bias, which is unfixable with layerwise training and can 
potentially make the error blow up across layers. 
(See \cite{allen-zhu_backward_2023} for an example where layerwise training fails.)
Note that this condition depends not only on the target function, but also the learner model, training algorithm, and/or 
regularization, since one can proactively restrict the power of each single layer to prevent overfitting and make this 
condition hold. 
There are various scenarios in which this condition can hold: In the RHM case, we rely on certain 
conditional independence property, while in the example in Appendix~\ref{sec: deep quad}, orthogonality is used. 
Moreover, this condition holds almost trivially when one has access to the intermediate results, as in 
curriculum learning and Chain-of-Thought learning (\cite{wang_learning_2025,kim_transformers_2024,daniely_deep_2026}).

Condition~(iii) means that when the model fits the lower portion of the target in value, 
it also learns useful lower-level features that can be reused by later layers to learn higher portion of the target. 
In many cases, including the RHMs considered in this work, it is not necessary and/or possible to recover
a specific set of lower-level features, and it often suffices to learn some set of lower-level features with certain 
problem-dependent properties, such as recovering the ground-truth low-level features through simple transformations.

Finally, we remark that we do \emph{not} claim Informal Principle~\ref{principle: shallow to deep chaining} is a 
necessary condition for successful hierarchical learning. Instead, we only claim it to be a reasonably general sufficient 
condition for successful hierarchical learning via layerwise training that covers non-trivial examples. 

\paragraph{Organization.}
The rest of this paper is organized as follows. We discuss the related work in the remaining part of the introduction.
In Section~\ref{sec: setup}, we formally define RHMs, and describe our learner model and training algorithm. 
Then, as a warm-up, we present in Section~\ref{sec: warm up approx} three approximation results. In Section~\ref{sec: opt results and proof sketch}, 
we formally state our main optimization result and give a proof sketch. Finally, we conclude in Section~\ref{sec: conclusion}.
The proofs can be found in Appendix~\ref{sec: opt proofs} and \ref{sec: signal lower bounds}. Appendix~\ref{sec: deep 
quad} contains another application of Informal Principle~\ref{principle: shallow to deep chaining}.

\subsection{Related work}

\paragraph{Depth separation.}
Beginning with \cite{eldan_power_2016,telgarsky_benefits_2016}, there is a long line of 
work that aims to separate the power of deep and shallow networks (\cite{daniely_depth_2017,safran_depth-width_2017,%
safran_depth_2019,malach_connection_2021,venturi_depth_2022,safran_optimization-based_2022,ren_depth_2023,safran_depth_2024}). 
Most of them concern the separation in approximation powers for two- versus three-layer networks.
\cite{telgarsky_benefits_2016} proves a separation between $\Theta(k^3)$- and $\Theta(k)$-depth networks,
but again the result is purely about approximation and relies on extreme oscillations. 
\cite{safran_optimization-based_2022,ren_depth_2023} prove optimization-based separations, but only for two-layer versus three-layer networks.

\paragraph{Hierarchical learning.}
Several recent works study learning hierarchical functions using networks with more than 
two layers. \cite{allen-zhu_what_2019,wang_learning_2023,nichani_provable_2023,dandi_computational_2025,fu_learning_2025} 
consider target functions that are compositions of two simpler functions and show that three-layer networks can 
efficiently learn such functions, outperforming two-layer networks and kernel methods by exploiting the compositional 
structure. 
Results addressing general depth, by contrast, are relatively scarce.
The work of \cite{mossel2016deep} introduced a class of ``generative hierarchical models'' similar to our work, with the data generated through a tree started from a label at the root. Subsequently they proved a separation in efficient recovery of the label between a class of deep and shallow algorithms. However, the relationship between this algorithmic framework and gradient-based training of deep neural networks was left unresolved.
More recently, for any constant $L>1$, \cite{dandi_computational_2025} proposed a class of target functions conjectured to require depth strictly greater than $L$. Their analysis, however, yields only a partial, conditional weak-recovery guarantee for this class.
\cite{daniely_deep_2026} proves deep networks can perform hierarchical learning when ground-truth intermediate results are provided to the training algorithm.

In a different vein, \cite{allen-zhu_backward_2023} studied learning deep quadratic functions using 
$\omega(1)$-depth residual networks. Our target class and techniques are substantially different from theirs. Conceptually,
their setting is similar to Informal Principle~\ref{principle: shallow to deep chaining} with only conditions~(i) and (iii). 
While their results show the power of $\omega(1)$-depth networks, the absence of condition~(ii) necessitates reliance on the backward 
feature correction mechanism and a highly specialized algorithm, and precludes a simple shallow-to-deep chaining argument.

\paragraph{Context-free grammars and Random Hierarchy Models.}
Since the mid-20th century, context-free grammars (CFGs, \cite{chomsky_certain_1959}) have been widely used in 
various areas of computer science, including complexity theory (\cite{sipser_introduction_2013}) and language 
modeling (\cite{jurafsky_speech_2009}). 
Recently, CFGs have also been used to study the capabilities of modern language models (\cite{zhao_transformers_2023,%
allen-zhu_physics_2025}). In \cite{cagnetta_how_2024}, the authors introduced Random Hierarchy Models (RHMs), a 
special class of CFGs with hierarchical structure, to empirically study how deep networks learn compositional
structure. 
This model has since been used to study various other phenomena in deep learning (\cite{cagnetta_towards_2024,cagnetta_scaling_2025,cagnetta_learning_2025,cagnetta_deriving_2026,sclocchi_probing_2025}). 
\cite{mei_u-nets_2024} analyzes a similar model and shows that deep networks can simulate the belief-propagation algorithm. 
None of these works, except \cite{mei_u-nets_2024} which proves an approximation result, provide formal approximation or learning guarantees.

\section{Setup}
\label{sec: setup}

In this section, we first define Random Hierarchy Models, describe the learning task and our learning model. 
After that, we formally state our main results. 

\paragraph{Notation.} 
For any vector $\x$, $\norm{\x}$ denotes the Euclidean norm of $\x$. For any matrix $\A$, $\norm{\A}_2$ denotes the 
spectral norm of $\A$ and $\norm{\A}_F$ denotes the Frobenius norm. 
For $a, b, \delta \in \R$, we write $a = b \pm \delta$ for $|a - b| \le \delta$. 
If $\mbf{a}, \mbf{b}$ are vectors or matrices, then $\mbf{a} = \mbf{b} \pm_2 \delta$ means $\norm{\mbf{a} - \mbf{b}}_2 \le \delta$. 
For any $n \in \bbN$, we define $[n] := \{1, 2, \dots, n\}$ and $[n]_0 := \{0, 1, \dots, n\}$.

\subsection{Random Hierarchy Models}
\label{subsec: rhm}

Random Hierarchy Models (RHMs) are special cases of probabilistic context-free grammars (PCFGs). 
A context-free grammar (CFG), in short, is a set $\cV$ of symbols, together with a collection $\cR$ of production rules. 
A production rule is a pair $(\nu, \bmu) \in \cV \times \cV^*$. Semantically, it means that we can replace the symbol 
$\nu$ with the sequence of symbols $\bmu$. Usually, a special symbol from $\cV$ is chosen to represent the starting symbol. 
To generate a sentence using a CFG, we start with the sentence consisting of a single starting symbol, 
and then repeatedly rewrite the sentence using the production rules until no rewriting can be done. 
A CFG is said to be probabilistic if the production rules used in each step are randomly sampled 
according to some distribution.

RHMs are PCFGs where we have multiple levels, each level has its own 
vocabulary, and the production rules always rewrite a level-$l$ symbol to $s$ level-$(l+1)$ symbols, where $s$
is a fixed positive integer. Equivalently, we can view RHMs as a variant of graphical models, where the underlying 
graph is a complete $s$-ary tree and the transition probability is associated with the level instead of the edges. 
See~Figure~\ref{fig: rhm tree} for an example of the tree description of a RHM with $3$ levels.
At a high level, a RHM can also be viewed as a Markov process with certain local structure over a space 
of growing dimension.

Note that, under certain non-ambiguity conditions, learning a RHM is easy if we have access to the latent intermediate tokens, 
since we can then directly recover the (intermediate) production rules. Unfortunately, without this intermediate information, exact recovery of the production rules is impossible in general. Nevertheless, we will show that using only input--label data, each layer of a deep convolutional 
network can still learn proxies for the true production rules of the corresponding RHM level, and that these proxies are sufficient for 
the model to hierarchically learn the RHM.

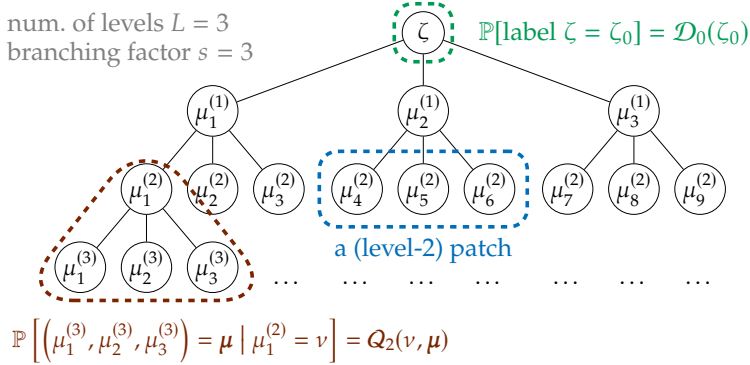
\begin{figure}[htbp]
  \centering  

  \begin{minipage}{0.62\textwidth}
    \scalebox{0.80}{
    \begin{tikzpicture}[
      level distance=13mm,
      level 1/.style={sibling distance=35mm},
      level 2/.style={sibling distance=11mm},
      level 3/.style={sibling distance=11mm},
      every node/.style={circle, draw, minimum size=7mm, inner sep=0.1mm},
      edge from parent/.style={draw},
    ]

    \def\ellipsisYShift{-3mm}

    \node (root) {$\zeta$}
      child { node{$\mu\ps{1}_1$}
        child { node (mu21) {$\mu\ps{2}_1$} 
          child { node (mu31) {$\mu\ps{3}_1$} }
          child { node (mu32) {$\mu\ps{3}_2$} }
          child { node (mu33) {$\mu\ps{3}_3$} }
        }
        child { node{$\mu\ps{2}_2$} }
        child { node{$\mu\ps{2}_3$} 
          child[edge from parent/.style={draw=none}] {node[yshift=\ellipsisYShift, xshift=1.5mm, draw=none]{$\cdots$}}
        }
      }
      child { node{$\mu\ps{1}_2$}
        child { node (mu24) {$\mu\ps{2}_4$} 
          child[edge from parent/.style={draw=none}] {node[yshift=\ellipsisYShift, draw=none]{$\cdots$}}
        }
        child { node (mu25) {$\mu\ps{2}_5$} 
          child[edge from parent/.style={draw=none}] {node[yshift=\ellipsisYShift, draw=none]{$\cdots$}}
        }
        child { node (mu26) {$\mu\ps{2}_6$} 
          child[edge from parent/.style={draw=none}] {node[yshift=\ellipsisYShift, draw=none]{$\cdots$}}
        }
      }
      child { node{$\mu\ps{1}_3$}
        child { node{$\mu\ps{2}_7$} 
          child[edge from parent/.style={draw=none}] {node[yshift=\ellipsisYShift, draw=none]{$\cdots$}}
        }
        child { 
          node{$\mu\ps{2}_8$} 
          child[edge from parent/.style={draw=none}] {node[yshift=\ellipsisYShift, draw=none]{$\cdots$}}
        }
        child { 
          node{$\mu\ps{2}_9$} 
          child[edge from parent/.style={draw=none}] {node[yshift=\ellipsisYShift, draw=none]{$\cdots$}}
        }
      };

      \node[anchor=north west, font=\large, align=left, shape=rectangle, draw=none]
        at ([xshift=-8mm,yshift=0mm]current bounding box.north west)
        {\color{gray}num.~of levels $L = 3$ \\ \color{gray}branching factor $s = 3$};
      
      \node (label) [
        fit=(root), inner sep=1.25mm, shape=rectangle,
        line width=0.6mm, dashed,
        line join=round, rounded corners=3mm, color=ForestGreen
      ] {};
      \node [
        right=of label, xshift=-6mm, yshift=0mm,
        shape=rectangle, color=ForestGreen, draw=none
      ] {
        \large $\P[ \text{label } \zeta = \zeta_0 ] = \cD_0(\zeta_0)$
      };

      \def\padx{8mm} 
      \def\pady{2.5mm} 
      \draw[line width=0.6mm, dashed, rounded corners=10mm, line join=round, color=Brown]
        ([xshift=-\padx, yshift=-\pady] mu31.south west) --
        ([xshift= \padx, yshift=-\pady] mu33.south east) --
        ([yshift=5mm] mu21.north) -- cycle;
      \node (left_subtree_box) [fit=(mu21)(mu31)(mu32)(mu33), inner sep=0, shape=rectangle, draw=none] {}; %
      \node [
        below=of left_subtree_box, xshift=14mm, yshift=6mm,
        shape=rectangle, color=Brown, draw=none
      ] {
        $\P\left[ \left(\mu\ps{3}_1, \mu\ps{3}_2, \mu\ps{3}_3\right) = \bmu \;\big|\; \mu\ps{2}_1 = \nu \right]
        = \cQ_2(\nu, \bmu)$
      };

      \node (mid_patch) [
        fit=(mu24)(mu25)(mu26), inner sep=2mm, shape=rectangle,
        line width=0.6mm, dashed,
        line join=round, rounded corners=3mm, color=NavyBlue
      ] {}; 
      \node [
        below=of mid_patch, xshift=0mm, yshift=10mm,
        shape=rectangle, color=NavyBlue, draw=none
      ] {
        \large a (level-$2$) patch
      };
    \end{tikzpicture}
    }
  \end{minipage}%
  \hfill
  \begin{minipage}{0.35\textwidth}
    \captionsetup{format=plain}
    \caption{
      Tree description of a $3$-level RHM instance with branching factor $3$. 
      Each node is a random variable. The root node is the label and the leaves are the input sequence. 
      At level $l$, the conditional probability of the children patch being $\bmu \in \cV_{l+1}^s$ 
      conditioned on the parent token being $\nu \in \cV_l$ is given by $\cQ_l(\nu, \bmu)$.
    }
    \label{fig: rhm tree}
  \end{minipage}
\end{figure}

Formally, we define a RHM as follows. 
This definition is slightly more general than the CFG description in \cite{cagnetta_how_2024}.
We will soon define RHM induced by a CFG as a RHM whose transition probabilities are induced by production rules. 
\begin{definition}[Random Hierarchy Model]
  \label{def: rhm}
  Let $L, s \in \bbN$ be parameters. 
  An $L$-level \tnbf{random hierarchy model (RHM)} instance with branching factor $s$ is a tuple
  $\left( (\cV_l)_{l \in [L]_0}, \cD_0, ( \cQ_l )_{l \in [L-1]_0} \right)$, where 
  \begin{itemize}
    \item $\cV_l$ is a set representing the vocabulary at level $l$,
    \item $\cD_0$ is a probability distribution over $\cV_0$ representing the distribution of the labels,  
    \item $\cQ_l: \cV_l \times \cV_{l+1}^s \to [0, 1]$ encodes the transition probabilities at level $l$, 
      i.e., for each $\nu \in \cV_l$, $\cQ_l(\nu, \cdot)$ is a probability distribution over  
      the collection of possible patches $\cV_{l+1}^s$. 
  \end{itemize}

  To generate a sentence using a RHM instance, we first sample the label $\zeta \in \cV_0$ according to $\cD_0$ to 
  form the level-$0$ sentence $\bmu\ps{0} = (\zeta) \in \cV_0^1$. Then, for each $l \in [L-1]_0$, we rewrite 
  the level-$l$ sentence $\bmu\ps{l} = ( \mu\ps{l}_1, \dots, \mu\ps{l}_{s^l} )$ into 
  a level-$(l+1)$ sentence $\bmu\ps{l+1} \in \cV_{l+1}^{s^{l+1}}$ by replacing each 
  $\mu\ps{l}_k$ with an independent sample from $\cQ_l(\mu\ps{l}_k, \cdot)$. The final sentence $\bmu\ps{L}$
  is the generated sentence. 
\end{definition}

In this work, we are mainly interested in RHMs induced by a collection of production rules, which is also how RHMs 
are originally formulated in \cite{cagnetta_how_2024}.
\begin{definition}[CFG-induced RHM]
  \label{def: cfg rhm}
  Let $L, s \in \bbN$ be parameters. For each $l \in [L-1]_0$, let $\cR_l \subset \cV_l \times \cV_{l+1}^s$
  represents a collection of production rules from level-$l$ symbols to level-$(l+1)$ patches. 
  We say the RHM $ ( (\cV_l)_{l\in[L]_0}, \cD_0, (\cQ_l)_{l \in [L-1]_0} )$ is 
  \tnbf{induced by the production rules $(\cR_l)_{l \in [L-1]_0}$}
  if $\cD_0 = \Unif(\cV_0)$ and 
  \[
    \cQ_l( \nu, \bmu ) 
    = \frac{\indi\braces{ (\nu, \bmu) \in \cR_l } }{ 
      \abs{ \braces{ (\nu, \bmu') \in \cR_l \mid \bmu' \in \cV_{l+1}^s } }
    } , \quad 
    \forall \nu \in \cV_l,\, \bmu \in \cV_{l+1}^s, 
  \]
  and the denominator is always nonzero. 
  Namely, $\cQ_l(\nu, \cdot)$ is the uniform distribution of its support, 
  which is determined by the given production rules $\cR_l$.
\end{definition}

\begin{remark}
  In the original formulation of RHMs in \cite{cagnetta_how_2024}, they explicitly assume 
  the production rules $\cR_l$ are randomly sampled. We do not require this and one should view $\cR_l$ and $\cQ_l$
  as deterministic objects that potentially satisfy certain mild pseudo-randomness assumption (cf.~Assumption~\ref{assumption: rhm}).

  When a RHM is induced by production rules, we can impose additional constraints on the RHM by imposing constraints
  on the production rules. In this paper, we will consider CFG-induced RHMs satisfying the following two conditions.
  
  Let $V, m \in \bbN$. 
  We say the CFG-induced RHM $( (\cV_l)_{l\in[L]_0}, \cD_0, (\cR_l)_{l \in [L-1]_0})$ is \tnbf{$(V, m)$-uniform} if 
  $|\cV_0| = |\cV_1| = \cdots = |\cV_L| = V$ and for every $l \in [L-1]_0$, every $\nu \in \cV_l$ has exactly 
  $m$ (distinct) production rules, i.e., $| \braces{ (\nu, \bmu') \in \cR_l \mid \bmu' \in \cV_{l+1}^s } | = m$. 
  Note that in this case, we have 
  \[
    \cQ_l( \nu, \bmu ) 
    =  \indi\braces{ (\nu, \bmu) \in \cR_l } / m  , \quad 
    \forall \nu \in \cV_l,\, \bmu \in \cV_{l+1}^s.
  \]
  For notational simplicity, we will write $\cP_{l+1} := \bigcup_{\nu \in \cV_l} \supp \cQ_l( \nu, \cdot)$. 
  That is, $\cP_{l+1}$ represents the collection of possible patches at level $l+1$. 

  We say the RHM is \tnbf{non-ambiguous} if for every level-$(l+1)$ patch $\bmu \in \cV_{l+1}^s$, there is at most 
  one level-$l$ symbol $\nu \in \cV_l$ that can generate it. That is, for every $l \in [L-1]_0$ and 
  $\bmu \in \cV_{l+1}^s$, $| \{ (\nu', \bmu') \in \cR_l \mid \bmu' =  \bmu \}| \le 1$. We say two patches 
  $\bmu, \bmu' \in \cV_{l+1}^s$ are \tnbf{synonyms} if they can be generated by the same level-$l$ symbol.
  Non-ambiguity is also assumed in \cite{cagnetta_how_2024}.
\end{remark}

\subsection{Learner model}

We assume the branching factor $s$ is known, so that we can design the model architecture to match
the tree structure of the RHM by grouping $s$ tokens into a patch. Note that the production rules are still unknown, 
and since the intermediate nodes are unobservable, learning the production rules is a nontrivial task.
As the RHM has an inherent $L$-layer structure and is shift-invariant at the patch level, it is natural to choose 
our learner model to be an $L$-layer convolutional network. 

\begin{figure}[tbp]
  \centering
  \begin{tikzpicture}[
    node distance=0mm and 25mm
  ]
    \node (token_embed) {  
      $\left. \begin{matrix} \h\ps{l}_{(k-1)s+1} \in \R^{d_e} \\ \vdots \\ \h\ps{l}_{ks} \in \R^{d_e} \end{matrix} \right\}$ 
    };
    \node [below=of token_embed, align=center] {embeddings of the tokens \\ in the $k$th patch of level $l$};

    \node (patch_embed) [right=of token_embed, xshift=-7mm] {$\x_{\bmu_k\ps{l}} \in \R^{d_x}$ };
    \node [below=of patch_embed, align=center, yshift=-6mm] {patch embedding};

    \node (output) [right=of patch_embed, xshift=15mm] {$\h\ps{l-1}_k \in \R^{d_e}$ };
    \node [below=of output, align=center, yshift=-6mm] {
        level-$l$ output;
        \\level-$(l-1)$ token embedding.
      };

    \draw [|->] (token_embed) to 
      node [above, align=center] {$\bPhi\ps{l}(\cdot)$}
      (patch_embed);

    \draw [|->] (patch_embed) to 
      node [above, align=center] {$\W\ps{l}\x_{\bmu_k\ps{l}} $} 
      node [below, align=center] { potentially normalized } 
      (output);
  \end{tikzpicture}
  \caption{
    One layer of our learner model. For each level $l$, the model acts on each patch $(\h\ps{l}_{(k-1)s+1}, \dots, 
    \h\ps{l}_{ks})$ independently using the same set of weights $\W\ps{l} \in \R^{d_y \times d_x}$ and nonlinearity
    $\bPhi\ps{l}: \R^{s d_e} \to \R^{d_e}$. The output on the $k$-th level-$l$ patch will be used as the embedding of 
    $k$-th token at level $l-1$.
  }
  \label{fig: level-l of the learner model}
\end{figure}
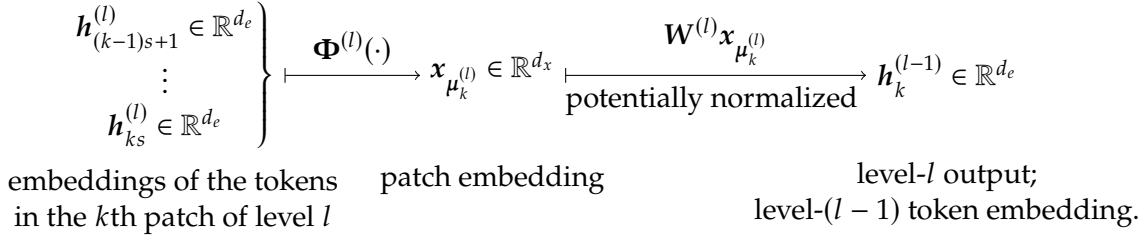

See Figure~\ref{fig: level-l of the learner model} for a schematic description of one layer of our learner model. 
We formally define our model below. 
Fix $l \in [L]$. Let $ \h_1\ps{l}, \dots, \h_{s^l}\ps{l} \in \R^{d_e}$ denote the input sequence at level $l$, 
where $d_e \ge V$ is the token embedding size. When $l = L$, $\h_i\ps{L}$ is the one-hot encoding of the $i$th input token. 
When $l < L$, these are the outputs of the previous layer. 
Because of the $s$-ary tree structure of the RHM, it is natural to group the tokens into patches of length $s$. 
We then apply a nonlinear transform $\bPhi\ps{l}: \R^{s d_e} \to \R^{d_x}$ to the concatenated token embeddings 
to obtain the patch embeddings of $\bmu\ps{l}_k$. Namely, we define 
\begin{equation}
  \label{eq: learner: x}
  \x_{\bmu_k\ps{l}}
  := \bPhi\ps{l}\big( \h\ps{l}_{(k-1)s + 1} \circ \cdots \circ \h\ps{l}_{ks} \big) 
  \in \R^{d_x} , \quad 
  \forall k \in \big[s^{l-1}\big], 
\end{equation}
where $\circ$ means vector concatenation. We call $\{ \x_{\bmu_k\ps{l}} \}_{k \in [s^{l-1}]}$ the patch embeddings. 
In our final optimization result, we will choose $\bPhi\ps{l}$ to be the random Fourier features associated with the radial 
basis function (RBF) kernel (see \eqref{eq: RBF random feature layer} for the exact definitions); however, other 
nonlinear functions can also be used (cf.~Section~\ref{sec: warm up approx}).  

Let $\W\ps{l} \in \R^{d_y \times d_x}$ denote the trainable parameter of the level-$l$ filter. We apply it to each 
of the patch embeddings to form the (unnormalized) outputs  $\W\ps{l} \x_{\bmu_k\ps{l}} \in \R^{d_e}$. For reasons
that will become clear later, we normalize the outputs to be probability vectors (with potential negative
entries):
\begin{equation}
  \label{eq: learner: output}
  \h\ps{l-1}_k :=  \W\ps{l} \x_{\bmu_k\ps{l}} \,/\, \big\langle \One, \W\ps{l} \x_{\bmu_k\ps{l}} \big\rangle \in \R^{d_e} , 
  \quad \forall k \in \big[s^{l-1}\big].
\end{equation}
Note that these are also the inputs of level $l-1$ and the normalization step can always be absorbed into the nonlinear
function $\bPhi\ps{l-1}$. 
The final output of our network is the level-$1$ output $\h\ps{0}_1 \in \R^{d_e}$. We will choose $d_e = V$ and train our 
network to predict $\e_{\zeta}$, where $\zeta \in [V] \cong \cV_0$ is the label. Hence, we can set 
$\argmax_{i \in [V]} [ \y\ps{1}_1 ]_i$ to be the prediction of our model. 

We will show that, after training, $\W\ps{l}$ approximately maps each $\x_{\bmu}$ to conditional probability vector
of the label given the first level-$l$ patch being $\bmu$, and this probability vector can be used as the proxy for 
the unique level-$(l-1)$ symbol that generates $\bmu$. 
See Section~\ref{sec: warm up approx} and Section~\ref{sec: opt results and proof sketch} for details.

\subsection{Training algorithm}
\label{subsec: training algorithm}

We train our learner network \eqref{eq: learner: x}--\eqref{eq: learner: output} layerwise using (empirical)
gradient descent. The algorithm proceeds in $L$ stages, from stage $L$ down to stage $1$. In stage $l \in [L]$,
we train $\W\ps{l}$ as follows. 

Fix $l \in [L]$. We first sample $N\ps{l}$ i.i.d.~samples of form $(\mu_1, \dots, \mu_{s^{L}}, \zeta) \in \cV_L^{s^L}
\times \cV_0$ from the RHM, where $\mu_1, \dots, \mu_{s^L}$ are the (bottom-level) input tokens and $\zeta$ is the label. 
For notational simplicity, let $\hat\E\ps{l}$ denote the empirical expectation w.r.t.~these $N\ps{l}$ samples, 
so that we do not have to explicitly write down the dataset in the future. 
Let $\x_{\smash{ \bmu\ps{l}_1} }, \W\ps{l}$ be defined as in \eqref{eq: learner: x}--\eqref{eq: learner: output}. Our loss is 
\begin{equation}
  \label{eq: training loss}
  \Loss\ps{l}(\W\ps{l})
  := \frac{1}{2} \hat{\E}\ps{l}\Big\| \e_\zeta - \W\ps{l}\x_{\bmu\ps{l}_1} \Big\|^2 + \frac{\lambda_W}{2} \norm{\W\ps{l}}_F^2 ,
\end{equation}
where $\lambda_W \ge 0$ is a hyperparameter controlling the regularization strength. 
Namely, we train the $l$th level of the network to predict the label based on the \emph{first} patch at level $l$. 
We make two remarks on the training procedure. 

First, at each stage, we use only the first patch of the corresponding level. This simplifies the analysis
and suffices for learning the production rules thanks to the patch-level shift invariance of RHMs. 
In addition, since level $1$ has only one patch and all subsequent levels are generated from that patch, 
when we reach level $1$, the model will be able to see all the information contained in the input sequence, so,
in principle, we lose no information by restricting the model to the first patch in earlier stages.

Second, we ignore all the subsequent layers of the learner model when training level $l$, as those untrained layers 
will only obscure the signals from the label. 
Another way to interpret this design choice is pretending there is a residual link 
(\cite{he_deep_2016}) from the $l$th level to the output during the training of the $l$th level, so that the $l$th level 
can directly receive signal from the label. 

We train each level by running gradient descent for $T\ps{l} \in \bbN$ steps. Namely, we update the weights using 
$\W\ps{l}_{t+1} = \W\ps{l}_t - \eta\ps{l} \nabla_{\W} \Loss\ps{l}(\W\ps{l}_t)$, where $\W\ps{l}_0 = 0$
and  $\eta\ps{l} > 0$ is the step size. 
Though the optimization problem as a whole is non-convex, for each 
fixed level $l$, \eqref{eq: training loss} is a standard ridge regression task, which is (strongly) convex and can be 
easily solved by gradient descent. One may as well minimize \eqref{eq: training loss} directly by computing the 
pseudo-inverse of the covariance matrix. Our analysis is rather flexible w.r.t.~the choice of the optimization algorithm.

\section{Warm-up: Approximation results}
\label{sec: warm up approx}

As a warm-up, in this section, we consider the problem of approximating the RHM using (variants of) our learner model. 
Throughout this section, we assume that the RHM is non-ambiguous and $(V, m)$-uniform. 

\paragraph{Construction 1: assuming access to the intermediate nodes.}
First, consider the bottom level of the network. At this level, the inputs are the one-hot encoding vectors 
$\e_{\mu_1\ps{L}}, \dots, \e_{\mu_{s^L}\ps{L}}$ of the level-$L$ tokens. In particular, this means we can easily
obtain orthonormal (equivalently, one-hot up to a rotation) embeddings for the patches. One robust but slightly wasteful approach 
is to choose the nonlinear function to be $\h_1 \circ \cdots \circ \h_s \mapsto \h_1 \otimes \cdots \otimes \h_s$. 
With this nonlinear function $\bPhi$, we have  $\inprod{\x_{\bmu}}{\x_{\bmu'}} = \indi\{ \bmu = \bmu' \}$. 

Recall that $\cP_L \subset \cV_L^s$ denotes the collection of possible patches at level $L$.
Since our RHM is assumed to be non-ambiguous, each patch $\bmu \in \cP_L$ corresponds to a 
unique level-$(L-1)$ token $\nu(\bmu) \in \cV_{L-1}$. Let $\{ \e_\nu \}_{\nu \in \cV_{L-1}}$ denote the one-hot 
encoding vectors of the vocabulary $\cV_{L-1}$. We can then set $\W\ps{L}
= \sum_{\bmu \in \cP_L } \e_{\nu(\bmu)} \x_{\bmu}\trans$,
so that, by the orthonormality of the patch embeddings, $\W\ps{L} \x_{\bmu_k} = \e_{\nu(\bmu_k)} = \e_{\mu\ps{L-1}_k}$. With this $\W\ps{L}$, the inputs of $(L-1)$-th level 
of the network will be exactly the one-hot encoding vectors of the level-$(L-1)$ sequence. We can then repeat the 
same construction, so that eventually in the top level, $\W\ps{1}$ maps the level-$1$ patches to the 
unique labels that generate them. 

\paragraph{Construction 2: conditional probabilities as the surrogate.}
It is unreasonable to hope that a trained model can recover the one-hot encoding scheme $\{ \e_{\nu} \}_{\nu \in \cV_{L-1}}$
in the previous construction without access to the intermediate sequences, and, fortunately, it does not have to. 
It suffices to map the patches $\bmu \in \cP_l$ to any 
embeddings satisfying (i) $\h\ps{l-1}_{\bmu} \approx \h\ps{l-1}_{\bmu'}$ if $\bmu, \bmu'$ are synonyms, and 
(ii) $\h\ps{l-1}_{\bmu}, \h\ps{l-1}_{\bmu'}$ are reasonably different when $\bmu, \bmu'$ are not synonyms. 
After that, it suffices to choose the nonlinear function $\bPhi\ps{l}$ to be some clustering algorithm that 
performs clustering on the level-$l$ outputs, and maps level-$l$ outputs to the one-hot embedding vector of the 
corresponding cluster. 

It was also observed in \cite{cagnetta_how_2024} that, under mild conditions, synonym detection can be reduced to estimating 
the conditional probability of the label given the patch. Formally, for each $l \in [L]$ and $\bmu \in \cP_l$,
we define 
\begin{equation}
  \label{eq: cond. prob. vec. q l mu}
  \q\ps{l}_{\bmu}
  := \big(
      \P\left[ \text{label is $\zeta$} \mid \text{the first patch at level $l$ is $\bmu$} \right]
    \big)_{\zeta \in \cV_0}
  \in \Delta_{\cV_0},
\end{equation}
where $\Delta_{\cV_0}$ denotes the probability simplex over $\cV_0$. Clear that if $\bmu, \bmu'$
are synonyms, then $\q\ps{l}_{\bmu} = \q\ps{l}_{\bmu'}$. Moreover, one can expect that, unless the RHM instance is 
generated adversarially, $\q\ps{l}_{\bmu}, \q\ps{l}_{\bmu'}$ will be different if $\bmu, \bmu'$ are not synonyms. 
In other words, condition~(i) of Informal~Principle~\ref{principle: shallow to deep chaining} holds. 
In \cite{cagnetta_how_2024}, the authors argue that the difference should be asymptotically proportional to $m^{-l/2}$
if the production rules are sampled uniformly at random. Their argument is based on some variance calculation and 
relies on various asymptotic approximations. We prove a rigorous, quantitative version of this claim for production rules 
that are randomly sampled while maintaining a certain marginal uniformity condition. See Proposition~\ref{prop: signal 
lower bound} and Appendix~\ref{sec: signal lower bounds}. We remark that the conditional probability vector 
$\q\ps{l}_{\bmu}$ is a natural quantity to estimate when the task is to predict the label, and also naturally arises from 
minimizing the ridge regression loss \eqref{eq: training loss} (see also \eqref{eq: hat E e xT}).

Thanks to the distinguishing property of $\q\ps{l}_{\bmu}$, we can set the trainable weights of level $l$ to be 
\(
  \W\ps{l} 
  = m\inv \sum_{\bmu \in \cP_L} \q\ps{l}_{\bmu} \x_{\bmu}\trans.
\)
By the orthonormality of $\{\x_{\bmu}\}_{\bmu}$, this maps each $\bmu$ to the corresponding $\q\ps{l}_{\bmu}$, which 
satisfies $\q\ps{l}_{\bmu} \approx \q\ps{l}_{\bmu'}$ if and only if $\bmu, \bmu'$ are synonyms.

\paragraph{Construction 3: replacing clustering with RBF random features.}
In the above construction, we choose $\bPhi$ to be a clustering algorithm to form orthonormal patch embeddings from the 
token embeddings, assuming that the token embeddings respect the synonym structure. 
We claim that this mapping can be replaced by a random feature mapping associated with the RBF kernel, which is a much more 
commonly used and generic component. Though the embeddings obtained using the RBF random features are only 
near orthonormal, one can still expect that a construction similar to Construction 2 can still approximately recover the 
label. Moreover, we show that this construction can be efficiently learned by layerwise training our learner model. 
See Section~\ref{sec: opt results and proof sketch} for the formal description and a proof sketch, and 
see Appendix~\ref{sec: opt proofs} for the proof.

\section{Optimization results and proof sketch}
\label{sec: opt results and proof sketch}

In this section, we sketch the proof of the end-to-end optimization guarantee corresponding to the approximation result (Construction 3) in Section~\ref{sec: warm up approx}.
The full proofs of the results in this section can be found in Appendix~\ref{sec: opt proofs}.
We emphasize that we do not assume access to the intermediate tokens. Instead, we show that the intermediate features 
will be automatically learned when the model is trained to fit the label.

As mentioned in the introduction, our proof consists of two parts: First, we will consider 
each single layer and show that certain properties --- in our case, clusters respecting the synonyms structure --- can 
be approximately preserved across layers. Then, we will chain those one-layer results, show that the error does not 
blow up during chaining, and derive our main optimization guarantees for learning RHMs with a deep convolutional network. 

The following is our main assumption on the RHM instance. See Definition~\ref{def: rhm}-\ref{def: cfg rhm} and 
the remarks following them for the basic definitions related to RHMs. 
\begin{assumption}[Assumptions on the RHM]
  \label{assumption: rhm}
  Let $L, s \in \bbN$ be parameters. Consider the $L$-level RHM $ ( (\cV_l)_{l\in[L]_0}, \cD_0, (\cQ_l)_{l \in [L-1]_0} )$
  with branching factor $s$ that is induced by a CFG, and is $(V, m)$-uniform and non-ambiguous.
  We assume the following are true. 
  \begin{enumerate}[(a)]
    \item \label{itm: rhm non-degeneracy}
      \tnbf{(Non-degeneracy)} For any $l \in [L]$ and $\bmu \in \cP_l$, $\P[ \text{the first patch at level $l$ is $\bmu$} ] 
      \ge ( \kappa |\cP_l|)\inv$.
    \item \label{itm: rhm nontrivial signals}
      \tnbf{(Nontrivial signal)} For each level $l \in [L]$ and level-$l$ non-synonyms $\bmu, \bmu' \in \cP_l$, 
      \begin{equation}
        \label{eq: identifiability, dist >= rho l}
        \smash{ 
        \big\| \q\ps{l}_{\bmu} - \q\ps{l}_{\bmu'} \big\| \ge K_\rho m^{-l/2} =: \rho\ps{l},
        }
      \end{equation}
      where $\q\ps{l}_{\bmu}$ is defined in \eqref{eq: cond. prob. vec. q l mu} and $K_\rho$ is an $l$-independent 
      positive real number that is at least inverse polynomial in $m, V, L, s, \kappa$ and $d^{o(1)}$.
  \end{enumerate}
\end{assumption}

Note that this assumption is deterministic in the sense that it does not rely on a specific production rules sampling procedure. 
We will show that it holds with high probability if the production rules are sampled without replacement uniformly at 
random while maintaining the uniformity of the first token. In general, one could hope this condition to hold, as long 
as the production rules are not generated adversarially. In \eqref{eq: identifiability, dist >= rho l}, the choice of 
$\rho\ps{l} := K_\rho m^{-l/2}$ is tailored to the case where the production rules are 
randomly sampled. Our analysis can be easily extended to situations where $\rho\ps{l}$ has a different form and yields 
better bounds if $\rho\ps{l}$ is much larger than $K_\rho m^{-l/2}$. 

We verify Assumption~\ref{assumption: rhm} in Appendix~\ref{sec: signal lower bounds} in the case where 
$L \sim \log d/\log\log d$ and $m, V, s \sim \log d$, where $d$ is the 
length of the input sequence, and the production rules are randomly sampled with the marginal distribution of the first token 
at each level being kept uniform. See Proposition~\ref{prop: signal lower bound}. Combining Proposition~\ref{prop: signal 
lower bound} and the following theorem gives an end-to-end optimization guarantee.

\begin{restatable}{rTheorem}{OptMain}
  \label{thm: opt main}
  Assume the RHM instance satisfies Assumption~\ref{assumption: rhm}. 
  Then, we can train a learner model~\eqref{eq: learner: x}--\eqref{eq: learner: output} with the RBF random features
  \eqref{eq: RBF random feature layer} using the layerwise gradient descent described 
  in Section~\ref{subsec: training algorithm} to learn this RHM instance with probability at least 
  $1 - \delta_{\P}$.\footnote{
    By the non-ambiguity condition, the label $\zeta$ can be uniquely determined by 
    the input $\x$. We say the model learns the RHM instance if it outputs $\e_\zeta \pm_2 o(1)$, so that we can recover
    the label by taking $\argmax$ on the output.
  }
  Moreover, the number of samples we need is at most $\poly(\kappa, m, V, s, L, \log(1/\delta_{\P})) K_\rho^{-2} m^L$, 
  and the model width and number of gradient descent steps are both bounded by  
  $\poly(K_\rho\inv, \kappa, V, s, L, \log(1/\delta_{\P}), m^L) $.
\end{restatable}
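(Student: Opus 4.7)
The strategy is to induct on the training stages from $l = L$ (bottom) up to $l = 1$ (top), maintaining at each stage the invariant that the trained weights $\W\ps{l}$ map every patch embedding $\x_{\bmu}$ with $\bmu \in \cP_l$ to within Euclidean distance $\eta\ps{l}$ of the conditional-probability vector $\q\ps{l}_{\bmu}$. By the Markov structure of the RHM, $\q\ps{l}_{\bmu}$ depends only on the parent level-$(l-1)$ token that generates $\bmu$, so this invariant means the induced level-$(l-1)$ token embeddings cluster by synonymy up to error $\eta\ps{l}$, which is exactly the input quality the next training stage needs. We will tune the per-stage sample sizes and the RBF bandwidth so that $\eta\ps{l} \ll \rho\ps{l}$ at every level, using Assumption~\ref{assumption: rhm}\ref{itm: rhm nontrivial signals} as the available signal margin.

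\textbf{Single-layer analysis and base case.} For fixed $l$ the loss in \eqref{eq: training loss} is a ridge regression in $\W\ps{l}$: strongly convex with condition number polynomial in the relevant parameters, so gradient descent with an appropriate step size converges geometrically in $\poly(K_\rho\inv, \kappa, V, s, L, m^L)$ iterations to the unique minimizer
\[
  \hat\W\ps{l} = \hat\E\ps{l}[\e_\zeta (\x\ps{l}_1)\trans] \big(\hat\E\ps{l}[\x\ps{l}_1 (\x\ps{l}_1)\trans] + \lambda_W \Id\big)\inv.
\]
Matrix-Bernstein concentration then shows $\hat\W\ps{l}$ is within $\tilde O(1/\sqrt{N\ps{l}})$ of its population counterpart, which, when applied to $\x_{\bmu}$, recovers $\E[\e_\zeta \mid \text{first level-$l$ patch} = \bmu] = \q\ps{l}_{\bmu}$ up to an additive bias controlled by $\lambda_W$ and the near-orthonormality of $\{\x_{\bmu}\}_{\bmu \in \cP_l}$. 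The normalization denominator $\langle \One, \W\ps{l}\x_{\bmu} \rangle$ in \eqref{eq: learner: output} concentrates around $1$ because each $\q\ps{l}_{\bmu}$ is a probability vector. In the base case $l = L$ the patch inputs are exact one-hot encodings of level-$L$ tokens, so distinct patches are perfectly separated and the RBF random features of \eqref{eq: RBF random feature layer} give near-orthonormal embeddings $\{\x_{\bmu}\}_{\bmu \in \cP_L}$ with polynomially many features.

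\textbf{Inductive step and error chaining.} Assume the invariant at level $l+1$, so each level-$l$ token embedding $\hat\h\ps{l}_{\mu}$ is within $\eta\ps{l+1}$ of $\q\ps{l+1}_{\bmu}$ for any patch $\bmu$ generated by $\mu$. Applying Assumption~\ref{assumption: rhm}\ref{itm: rhm nontrivial signals} to non-synonym patches generated by distinct parents $\mu, \mu'$ yields $\|\hat\h\ps{l}_{\mu} - \hat\h\ps{l}_{\mu'}\| \ge \rho\ps{l+1} - 2\eta\ps{l+1}$, so the $s$-fold concatenations that form the level-$l$ patch inputs are pairwise separated by this same margin whenever the underlying patches differ in any coordinate. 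Choosing the RBF bandwidth of $\bPhi\ps{l}$ just below this gap keeps the kernel matrix on $\{\x_{\bmu}\}_{\bmu \in \cP_l}$ well-conditioned and its off-diagonal entries small, so with polynomially many random features the patch embeddings are again near-orthonormal and the single-layer analysis applies. A direct calculation then yields $\eta\ps{l} \le C\bigl(\eta\ps{l+1} + (N\ps{l})^{-1/2}\,\poly(\kappa, V, s, L)\bigr)$ for a fixed polynomial $C$ in $\kappa, V, s, L$, and unrolling over the $L$ stages with $N\ps{l} = \poly(\kappa, V, m, s, L, \log(1/\delta_{\P}))\, K_\rho^{-2}\, m^l$ preserves $\eta\ps{l} \ll \rho\ps{l}$ throughout. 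The bottom stage $l = L$ dominates the total budget, producing the claimed $\poly(\cdot)\,K_\rho^{-2}\,m^L$ sample complexity.

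\textbf{Main obstacle.} The central difficulty is that the signal $\rho\ps{l} = K_\rho m^{-l/2}$ decays geometrically in $l$, while the distinguishable patch set at level $l$ has size $\Theta(Vm)$. This forces the one-layer statistical error to be $o(\rho\ps{l})$ already in a single layer, requiring $N\ps{l} \gtrsim |\cP_l|/(\rho\ps{l})^2$. The truly delicate point is preventing an exponential-in-$L$ blowup during chaining: the recursion $\eta\ps{l} \le C\,\eta\ps{l+1} + \mathrm{stat}$ must have $C$ polynomial in the model parameters and, in particular, not growing multiplicatively with $\rho\ps{l}/\rho\ps{l+1} = \sqrt{m}$. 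The probability-simplex geometry of $\q\ps{l}_{\bmu}$ and the non-degeneracy Assumption~\ref{assumption: rhm}\ref{itm: rhm non-degeneracy} (which lower-bounds every patch probability by $(\kappa|\cP_l|)\inv$) together keep the ridge bias below $\eta\ps{l+1}$ plus a polynomial statistical term, so the chained error grows only polynomially across the $L$ layers.
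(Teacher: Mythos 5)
Your proposal correctly identifies the high-level structure (bottom-up layerwise induction, ridge regression closed form at each stage, RBF features giving near-orthonormal patch embeddings, sample budget $N\ps{l} \sim K_\rho^{-2} m^l$ dominated by $l=L$) and you correctly flag the delicate issue of preventing an exponential-in-$L$ blowup during chaining. But the mechanism you invoke to resolve that issue is wrong, and this is the genuine gap.

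You posit the recursion $\eta\ps{l} \le C\bigl(\eta\ps{l+1} + (N\ps{l})^{-1/2}\poly\bigr)$ with $C$ a fixed polynomial in $\kappa, V, s, L$, and assert that this ``grows only polynomially across the $L$ layers.'' It does not. Unrolling gives $\eta\ps{1} \lesssim C^{L-1}\eta\ps{L} + \sum_{l=1}^{L} C^{l-1}\,\mathrm{stat}\ps{l}$, and since each stage already forces $\mathrm{stat}\ps{l} \gtrsim (N\ps{l})^{-1/2} \sim \rho\ps{l} = K_\rho m^{-l/2}$, the largest contribution is $C^{L-1}K_\rho m^{-1/2}$ from $l=1$. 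To preserve $\eta\ps{1} \ll \rho\ps{1} = K_\rho m^{-1/2}$ you would then need $C^{L-1} \lesssim 1$, which fails for any $C > 1$ once $L = \omega(1)$. The non-degeneracy assumption and simplex geometry do not repair this: they control a single layer's bias, not a multiplicative-across-layers constant. So the argument as stated would not deliver the theorem.

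The key idea the paper uses, and which your proposal is missing, is that the RBF feature layer \emph{resets} the error scale rather than propagating it additively. After training, the level-$(l+1)$ output lies within $\eps\ps{l+1}_*$ of the cluster centers $\q\ps{l+1}_\bmu$, giving the level-$l$ token embeddings intracluster radius $\tilde\eps_S \le 2\eps\ps{l+1}_*$ and intercluster separation $\tilde\rho \ge \rho\ps{l+1} - 2\eps\ps{l+1}_*$. Lemma~\ref{lemma: error propagation in the RF layer} then shows that, by choosing the bandwidth $\sigma \sim \tilde\rho / \sqrt{\log(1/\eps_O)}$ and a polynomially large feature count $M$, the post-RBF patch embeddings achieve \emph{fresh, freely tunable} errors $\eps_O$ and $\eps_S$, subject only to the hard \emph{threshold} condition $\tilde\eps_S \le \eps_S \sigma/\sqrt{s}$ — i.e., $\eps\ps{l+1}_*$ must be below a bound of order $\rho\ps{l+1}/(Vm\sqrt{s\log(1/\eps_O)})$. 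Once that threshold is met, the level-$l$ output accuracy $\eps\ps{l}_*$ is determined solely by $N\ps{l}$, the feature count $M\ps{l}$, and the GD iteration count, and does \emph{not} inherit $\eps\ps{l+1}_*$ as an additive term. Crucially, the threshold's dependence on $\eps\ps{l}_*$ enters only through $\log(1/\eps_O) \lesssim \log(1/\eps\ps{l}_*) \lesssim L\log m$, so the constraints chain with only logarithmic (not exponential) feedback. This is exactly the content of \eqref{eq: conditions on eps l* and N l} in Lemma~\ref{lemma: training a single layer}, and without it the chaining does not close. A secondary, minor inaccuracy: the normalization factor $\langle \One, \W\ps{l}\x_\bmu\rangle$ does not concentrate around $1$ but around $\hat p_\bmu \alpha_\bmu \gtrsim 1/\kappa$; the normalization step exists precisely to divide out this $\bmu$-dependent scalar, using the fact that $\q\ps{l}_\bmu$ is a probability vector.
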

\begin{remark}
  Here, $\poly(z)$ means a quantity that is bounded by $|z|^C$ for some universal constant $C$.
  In the regime considered in Proposition~\ref{prop: signal lower bound},  
  $\poly(K_\rho\inv, \kappa, m, V, s, L) = d^{o(1)}$, so the sample complexity is dominated by the 
  factor $m^L = \poly d$. 
  
  With those $d^{o(1)}$ terms ignored, our result matches the heuristic bound for deep networks in \cite{cagnetta_how_2024}.
  In contrast, the heuristic and empirically verified bound for shallow models in \cite{cagnetta_how_2024} is 
  $\Omega(m^{s^L})$, which is exponentially larger than $m^L = \poly(d)$. 
\end{remark}

\subsection{Training a single layer}
\label{subsec: trianing a single layer}

Fix an arbitrary level $l \in [L]$. In this subsection, we consider the training of the level-$l$ weights $\W\ps{l}$.
Our goal is to show that, when the level-$l$ inputs form clusters respecting the synonym structure
(cf.~Assumption~\ref{assumption: token embeddings}), the $l$th level of the model can approximately learn the mapping 
$\bmu \mapsto \q\ps{l}_{\bmu}$, as a result of which, its outputs will have the same cluster structure, up to potentially 
different parameters. 

For notational convenience, we will drop all superscript $l$ throughout this subsection. 
We will use $( \hat\zeta, \hat\nu, \hat\bmu ) \in \cV_0 \times \cV_{l-1} \times \cV_l^s$ to denote the random 
variables representing the label, the first level-$(l-1)$ token, and the first level-$l$ patch and reserve the unhatted 
versions of them for a generic label, level-$(l-1)$ token, and level-$l$ patch, respectively. 
For each $\mu \in \cV_l$, let $\hat{\h}_\mu \in \R^{d_e}$ denote the random vector representing the token embedding
of $\mu$. When $l = L$, $\hat{\h}_{\mu}$ is the (deterministic) one-hot encoding vector of $\mu$. When $l < L$, 
$\hat{\h}_\mu$ is the output of the $(l+1)$th level of the network at the corresponding patch. $\hat{\h}_\mu$ is random,
as it depends on the values taken in the subtree of $\mu$ (cf.~Figure~\ref{fig: rhm tree}). In addition, since 
we fix all other layers when training the $l$th level, this is the only source of randomness. 
For notational simplicity, for each $\mu \in \cV_l$, let $\tilde{\cP}_{\rmR, l, \mu}$ denote the support of 
$\hat{\h}_{\mu}$.\footnote{Here, the subscript R stands for ``representation''.} 
Finally, for $\hat\bmu = (\hat\mu_1, \dots, \hat\mu_s)$, note that $\hat{\h}_{\hat\mu_i}$ is conditionally independent
of all other $\{ \hat{\h}_{\hat{\mu}_j} \}_{j \ne i}$ and the label $\hat\zeta$ when conditioned on the value of 
$\hat\mu_i$. 

Recall that at the end of the day, we will chain the one-layer results, and to do so, we need to ensure certain 
properties hold across layers (with potentially different parameters). In our case, we will maintain the following
induction hypothesis. Note that at level $L$ (where the token embeddings are one-hot embeddings of $\cV_L$), 
this assumption holds with $\tilde{\eps}_S = 0$ and 
$\tilde{\rho} = \sqrt{2}$.
\begin{assumption}[Token embeddings]
  \label{assumption: token embeddings}
  There exists some $0 \le \tilde{\eps}_S < \tilde{\rho}$ such that the following hold: 
  \begin{enumerate}[(a)]
    \item \label{itm: token intracluster distance}
      \tnbf{(Intracluster)}
      $\norm{ \h_\mu - \h_\mu' } \le \tilde{\eps}_S$ for any $\mu \in \cV_l$ and $\h_\mu, \h_\mu' \in \tilde{\cP}_{\rmR, l, \mu}$.
    \item \label{itm: token intercluster distance}
      \tnbf{(Intercluster)}
      $\norm{\h_\mu - \h_{\mu'} } \ge \tilde{\rho}$ for any $\mu \ne \mu' \in \cV_l$ and 
      $\h_\mu \in \tilde{\cP}_{\rmR, l, \mu}, \h_{\mu'} \in \tilde{\cP}_{\rmR, l, \mu'}$.
  \end{enumerate}
\end{assumption}

\subsubsection{Error propagation in the random features layer}
Let $\h := \h_1 \circ \cdots \circ \h_s \in \R^{s d_e}$ be the concatenation of the input token embeddings (in the 
first patch). 
The nonlinear function $\bPhi: \R^{s d_e} \to \R^{d_x}$ we consider here is the random Fourier features associated with 
the RBF kernel $\psi_\sigma$. That is, 
\begin{equation}
  \label{eq: RBF random feature layer}
  \bPhi(\h)
  := \bPhi_{\sigma, M}(\h) 
  := M^{-1/2} \left( \cos\left( \bomega_k \cdot \h \right), \sin\left( \bomega_k \cdot \h \right) \right)_{k \in [M]},
\end{equation}
where $( \bomega_k )_k$ are i.i.d.~$\Gaussian{0}{\sigma^{-2} \Id_{s d_e}}$ random vectors. Note that $\| \bPhi(\h) \| = 1$
always holds. Moreover, it is well-known that this random feature map approximates the RBF kernel $\psi_\sigma$ uniformly
on compact sets (\cite{rahimi_random_2007}):
\[ 
  \inprod{\bPhi(\h)}{\bPhi(\h')}
  \approx
  \psi_\sigma(\h, \h') 
  := \exp\left( -  \norm{\h - \h'}^2 / (2 \sigma^2) \right)
  = \prod_{i=1}^{s} \exp\left( - \norm{\h_i - \h_i'}^2 / (2 \sigma^2) \right). 
\]
We choose $\bPhi$ to be the RBF random features, because it not only is widely used, but also naturally maps far away 
points to near-orthogonal representations. We conjecture that any random feature mapping that has a sufficiently large 
high-frequency component can work here. Note that, by Assumption~\ref{assumption: token embeddings}\ref{itm: token 
intercluster distance}, if $\h, \h'$ are (pre-$\bPhi$) representations of two distinct patches, then at least one 
$\| \h_i - \h_i' \|$ will be at least $\tilde{\rho}$. Hence, $\psi_\sigma(\h, \h') \approx 0$, as long as $\sigma 
\ll \tilde{\rho}$. Meanwhile, by Assumption~\ref{assumption: token embeddings}\ref{itm: token intracluster distance},
if $\h, \h'$ are (pre-$\bPhi$) representations of the same patch, then $\psi_\sigma(\h, \h') \approx 1$, provided that 
$\tilde{\eps}_S \ll \sigma$. Formally, we have the following lemma, whose proof can be found in Appendix~\ref{subsec: 
appendix: RF}.

\begin{restatable}[Error propagation in the random features layer]{rLemma}{ErrorPropRF}
  \label{lemma: error propagation in the RF layer}
  Let $\delta_{\P} \in (0, 1)$ be the target failure probability and $\eps_S, \eps_O$ the target patch-level 
  intracluster distance and near-orthogonality error. Suppose that Assumption~\ref{assumption: token embeddings} 
  holds and 
  \[
    \sigma^2 \le \frac{2 \tilde{\rho}^2}{\log( 2 / \eps_O )}, \quad 
    M \gtrsim \frac{s d_e}{ \eps_O^2 \wedge \eps_S^4 } 
      \log\left( 
        \frac{
          s d_e \log( 2 / \eps_O )
        }{ 
          \tilde{\rho}^2 
          ( \eps_O^2 \wedge \eps_S^4 )
          \delta_{\P} 
        } 
      \right) , \quad 
    \tilde{\eps}_S 
    \le \frac{\eps_S \sigma }{\sqrt{s}}
  \]
  Then, with probability at least $1 - \delta_{\P}$, we have 
  \begin{equation}
    \label{eq: Phi(h), eps}
    \abs{ \inprod{\bPhi(\h_{\bmu})}{\bPhi(\h_{\bmu'})} } \le \eps_O, \quad 
    \norm{ \bPhi(\h_{\bmu}) - \bPhi(\h_{\bmu}') } \le \eps_S, 
  \end{equation}
  where $\bmu \ne \bmu' \in \cP_l$ and $\h_{\bmu}, \h_{\bmu}'$ are arbitrary (pre-$\bPhi$) representations of $\bmu$ and 
  $\h_{\bmu'}$ is an arbitrary (pre-$\bPhi$) representation of $\bmu'$, all of them obtained by concatenating the corresponding
  token embeddings. 
\end{restatable}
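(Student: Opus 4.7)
The plan is to reduce the lemma to the well-known uniform approximation of the RBF kernel $\psi_\sigma$ by its random Fourier features, combined with two elementary bounds on $\psi_\sigma$ derived from Assumption~\ref{assumption: token embeddings}. First, I would control $\psi_\sigma$ in the two relevant cases. For $\bmu \ne \bmu' \in \cP_l$, at least one coordinate $i$ satisfies $\mu_i \ne \mu_i'$, so the intercluster part of Assumption~\ref{assumption: token embeddings} gives $\norm{\h_i - \h_i'} \ge \tilde{\rho}$; since $\psi_\sigma$ factorizes as a product across coordinates, this yields $\psi_\sigma(\h_{\bmu}, \h_{\bmu'}) \le \exp(-\tilde{\rho}^2/(2\sigma^2)) \le \eps_O/2$ under the prescribed choice of $\sigma$. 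For two representations $\h, \h'$ of the same $\bmu$, all coordinates satisfy $\norm{\h_i - \h_i'} \le \tilde{\eps}_S$, so $\norm{\h - \h'}^2 \le s\tilde{\eps}_S^2 \le \eps_S^2 \sigma^2$ by the hypothesis on $\tilde{\eps}_S$, and the elementary inequality $1 - e^{-x} \le x$ gives $1 - \psi_\sigma(\h, \h') \le \eps_S^2/2$.

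\textbf{Random features approximation.} Next I would invoke the standard Rahimi--Recht uniform bound: on any compact set $\cK \subset \R^{s d_e}$ of diameter $D$, with probability at least $1 - \delta_{\P}$,
\begin{equation*}
  \sup_{\h, \h' \in \cK}\, \bigl| \inprod{\bPhi(\h)}{\bPhi(\h')} - \psi_\sigma(\h, \h') \bigr| \le \eta,
\end{equation*}
whenever $M \gtrsim (s d_e / \eta^2) \log( D / (\sigma \eta \delta_{\P}) )$, applied with $\cK$ a ball containing every concatenated representation at level $l$ and with $\eta := (\eps_O/2) \wedge (\eps_S^2/4)$. Combined with $\norm{\bPhi(\cdot)} = 1$, this gives $\abs{\inprod{\bPhi(\h_\bmu)}{\bPhi(\h_{\bmu'})}} \le \eps_O/2 + \eta \le \eps_O$ for non-synonyms, and $\norm{\bPhi(\h_\bmu) - \bPhi(\h_\bmu')}^2 = 2 - 2\inprod{\bPhi(\h_\bmu)}{\bPhi(\h_\bmu')} \le 2(1 - \psi_\sigma) + 2\eta \le \eps_S^2$ for two representations of the same $\bmu$. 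The factor $1/\eps_S^4$ in the width requirement is forced precisely by $\eta \le \eps_S^2/4$ appearing as $1/\eta^2$ in the sample-size condition, while $1/\eps_O^2$ comes from the other branch of the minimum.

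\textbf{Main obstacle.} The kernel computations are routine; the main technical care goes into tracking the diameter $D$ of the set of admissible concatenated representations so that the $\log D$ overhead in Rahimi--Recht is absorbed into the stated logarithmic factor of $M$. Since the level-$l$ token embeddings are outputs of the already-trained $(l+1)$-th layer and inherit the normalization in~\eqref{eq: learner: output}, a simple inductive norm bound keeps $D$ polynomial in the problem parameters, so $\log D$ only enters the $\log(\cdot)$ factor and does not affect the overall polynomial rate. A minor bookkeeping point is the joint handling of the two target errors $\eps_O$ and $\eps_S$ within a single application of the uniform concentration bound, which is what produces the combined factor $\eps_O^2 \wedge \eps_S^4$ in the final width expression.
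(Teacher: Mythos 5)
Your proposal matches the paper's proof in every essential respect: bound $\psi_\sigma(\h_{\bmu},\h_{\bmu'})\le\exp(-\tilde\rho^2/(2\sigma^2))\le\eps_O/2$ for non-synonyms via the intercluster lower bound and the choice of $\sigma$, bound $1-\psi_\sigma\lesssim s\tilde\eps_S^2/(2\sigma^2)\le\eps_S^2/2$ for same-patch representations via the intracluster bound and $1-e^{-x}\le x$, then invoke the Rahimi--Recht uniform approximation with a single accuracy level $\eta\sim\eps_O\wedge\eps_S^2$ (the paper calls it $\eps_{\RF}$) and translate $1/\eta^2$ into the stated $\eps_O^2\wedge\eps_S^4$ width requirement, using $\|\bPhi(\cdot)\|=1$ to convert the inner-product bound into a distance bound. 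Your explicit attention to the diameter of the feasible set in the Rahimi--Recht bound is a minor refinement the paper's write-up glosses over (it states the uniform lemma on the unit ball), and both you and the paper carry the same benign constant-factor slack in the chain $2(1-\psi_\sigma)+2\eta\le\eps_S^2$, which is absorbed by the $\gtrsim$ in the hypothesis on $M$.
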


\subsubsection{Training $\mathbf{W}$}

Recall from \eqref{eq: training loss} that $\W := \W\ps{l}$ is trained to predict the label using the first level-$l$ 
patch. Namely, our optimization task is 
minimizing $\W \mapsto \frac{1}{2} \hat\E \| \e_{\hat\zeta} - \W\hat\x_{\hat\bmu} \|^2 + \frac{\lambda_W}{2} \norm{\W}_F^2$,
where $\hat\E$ represents the expectation over the empirical distribution of the training set and $\hat{\x}_{\hat\bmu}
:= \bPhi(\hat{\h}_{\hat\bmu})$ is the embedding of the first level-$l$ patch (cf.~\eqref{eq: learner: x} and 
Figure~\ref{fig: level-l of the learner model}). For notational simplicity, for each patch $\bmu \in \cP_l$, we write 
$\cP_{\rmR, l, \bmu}$ for the support of $\hat{\x}_{\bmu}$.

This objective is strongly convex and can be easily optimized using, say, gradient descent. 
See Lemma~\ref{lemma: convergence of gradient descent} for the convergence rate. 
In addition, by setting the gradient to zero, one can easily verify that the (unique) minimizer is given by 
\(
  \W_{N, *}
  = \hat\E[ \e_{\hat\zeta} \hat{\x}_{\hat\bmu}\trans ]
    \big(
      \hat\E[ \hat{\x}_{\hat\bmu} \hat{\x}_{\hat\bmu}\trans ]
      + \lambda_W \Id_{d_{\x}}
    \big)\inv,
\)
where $N$ denotes the number of samples we use in this stage. For notational simplicity, for each $\bmu' \in \cP_l$, we 
write $\hat{p}_{\bmu'} := \hat\E[ \indi\{ \hat{\bmu} = \bmu' \} ]$ for the ratio of samples in the dataset with the first 
patch being $\bmu'$. 

Note that $\hat\x_{\hat\bmu}$ and $\e_{\hat{\zeta}}$ are conditionally independent when conditioned 
on $\hat\bmu$. Hence, for the first factor in $\W_{N, *}$, we have 
\begin{equation}
  \label{eq: hat E e xT}
  \textstyle
  \hat\E\left[ \e_{\hat\zeta} \hat{\x}_{\hat\bmu}\trans \right]
  = \sum_{\bmu \in \cP_l} \hat{p}_{\bmu} \hat\E\left[ \e_{\hat\zeta} \hat{\x}_{\hat\bmu}\trans \mid \hat\bmu = \bmu \right] 
  = \sum_{\bmu \in \cP_l} \hat{p}_{\bmu} 
    \hat\q_{\bmu}
    \hat\E\left[ \hat{\x}_{\bmu} \right]\trans, 
\end{equation}
where $\hat\q_{\bmu} := \hat\E[ \e_{\hat\zeta} \mid \hat\bmu = \bmu ]$ is exactly the empirical version of $\q\ps{l}_{\bmu}$ (defined in 
\eqref{eq: cond. prob. vec. q l mu}). In addition, recall from Section~\ref{sec: warm up approx} that, given 
Assumption~\ref{assumption: rhm}\ref{itm: rhm nontrivial signals}, $\q\ps{l}_{\bmu}$ can be used to 
identify synonyms. At a conceptual level, these two properties correspond to condition (ii) and (iii) of 
Informal Principle~\ref{principle: shallow to deep chaining}, respectively. 

Now, consider the covariance matrix term in $\W_{N, *}$. For ease of presentation, assume that $\eps_O = 0$, so 
that the embeddings of different patches lie in orthogonal subspaces. Let $\Proj_{\bmu} \in \R^{d_x \times d_x}$
denote the projection matrix onto the subspace spanned by the representations of patch $\bmu$ and $\Proj_\perp 
\in \R^{d_x \times d_x}$ the projection matrix onto the subspace orthogonal to all patch embeddings. 
Then, we have 
\[
  \textstyle
  \left(
    \hat\E\left[ \hat{\x}_{\hat\bmu} \hat{\x}_{\hat\bmu}\trans \right]
    + \lambda_W \Id_{d_{\x}}
  \right)\inv
  = \sum_{\bmu \in \cP_l} \left(
        \hat{p}_{\bmu} \hat\E\left[ \hat{\x}_{\bmu} \hat{\x}_{\bmu}\trans \right]
        + \lambda_W \Proj_{\bmu}
      \right)^\dagger
      + \lambda_W\inv \Proj_\perp, 
\]
In particular, this, together with the orthogonality assumption ($\eps_O = 0$), imply
\begin{equation}
  \label{eq: W N* x, epsO=0}
  \W_{N, *} \x_{\bmu}
  = \hat{p}_{\bmu} 
    \left(
      \hat\E\left[ \hat{\x}_{\bmu} \right]\trans
      \left( \hat{p}_{\bmu} \hat\E\left[ \hat{\x}_{\bmu} \hat{\x}_{\bmu}\trans \right] + \lambda_W \Proj_{\bmu} \right)^\dagger
      \x_{\bmu}
    \right)
    \hat\q_{\bmu},
  \quad\forall \bmu \in \cP_l, \, \x_{\bmu} \in \cP_{\rmR, l, \bmu}.
\end{equation}
When $\eps_O \ne 0$, we show in 
Lemma~\ref{lemma: empirical solution (factoring out epsO)} that \eqref{eq: W N* x, epsO=0} still approximately holds,
and the error scales with $|\cP_l|^2 \eps_O$. The proof is based on the dual formula of the ridge regression solution, 
which allows us to exploit the near-orthogonality of the representations without touching the intracluster structure.

Note that $\hat{\q}_{\bmu}$ is a probability vector and the term in the parentheses of \eqref{eq: W N* x, epsO=0}
is a scalar. Thus, in principle, we can recover $\hat{\q}_{\bmu}$ by normalizing $\W_{N, *}\x_{\bmu}$ with 
$\One\trans\W_{N, *}\x_{\bmu}$. To this end, we need a lower bound on the term in the parentheses. Recall that 
$\| \x_{\bmu} \| = 1$ always holds. If $\eps_S = 0$, i.e., $\cP_{\rmR, l, \bmu} = \{ \x_{\bmu} \}$, then we have 
$\Proj_{\bmu} = \x_{\bmu}\x_{\bmu}\trans$ and that term in the parentheses is equal to 
\[
  \x_{\bmu}\trans
  \left( (\hat{p}_{\bmu} + \lambda_W ) \x_{\bmu} \x_{\bmu}\trans \right)^\dagger
  \x_{\bmu}
  = \x_{\bmu}\trans (\hat{p}_{\bmu} + \lambda_W )\inv \x_{\bmu} \x_{\bmu}\trans \x_{\bmu}
  = (\hat{p}_{\bmu} + \lambda_W )\inv, 
\]
for which we have $\hat{p}_{\bmu} / (\hat{p}_{\bmu} + \lambda_W ) \gtrsim  1/\kappa$ if we choose $\lambda_W = |\cP_l|\inv$. 
When $\eps_S \ne 0$, one can factor out the errors using the standard expectation-variance decomposition of the second 
moment and the Woodbury matrix identity. We show in Lemma~\ref{lemma: q coefficient lower bound} that 
when $\eps_S \le \lambda_W/4$, the term in the parentheses can be lower bounded by $(\hat{p}_{\bmu} + \lambda_W )\inv/2$.

Combining the above two estimates, we show that $\W_{N, *}\x_{\bmu} / \One\trans\W_{N, *}\x_{\bmu} 
\approx \hat{\q}_{\bmu} \approx \q_{\bmu}$ when $N$ is large, 
where the second approximation comes from standard concentration and the fact
that $\hat{\q}_{\bmu}$ is the empirical version of $\q_{\bmu}$.
By Assumption~\ref{assumption: rhm}\ref{itm: rhm nontrivial signals}, this implies that, when $N$ is sufficiently large,
we can ensure Assumption~\ref{assumption: token embeddings} still holds (with a different set of parameters) at the next level. 
Formally, we have the following lemma, which we prove in Appendix~\ref{subsec: appendix: chaining}. 
In particular, note that the dependence of $\eps\ps{l+1}_*$ on $\eps\ps{l}_*$ is logarithmic, so the accuracy requirements
will not blow up at the bottom levels. 

\begin{restatable}[Training a single layer]{rLemma}{TrainingASingleLayer}
  \label{lemma: training a single layer}
  Let $\eps\ps{l}_* > 0$ denote the target accuracy at level $l$, 
  and the level-$l$ weights $\W\ps{l} \in \R^{d_y \times M\ps{l}}$ be obtained by running gradient descent on \eqref{eq: training loss} with $N\ps{l}$ samples for $T\ps{l}$ steps.
  Suppose that level-$(l+1)$ accuracy $\eps\ps{l+1}_*$ and the number of samples $N\ps{l}$ satisfy
  \begin{equation}
    \label{eq: conditions on eps l* and N l}
    \eps\ps{l+1}_*
    \le \frac{ \rho\ps{l+1} }{\sqrt{ 2000 V^2 m^2 s \log( V m \kappa / \eps\ps{l}_* )}}, \quad 
    N\ps{l} \ge 20 \left( \kappa \vee (\eps\ps{l}_*)^{-2} \right) \log(V m /\delta_{\P}), 
  \end{equation}
  and $M\ps{l}, T\ps{l}$ are some large universal polynomials
  of the RHM parameters, (logarithm of) failure probability, and accuracies $\eps\ps{l}_*, \eps\ps{l+1}_*$.
  Then, with probability at least $1 - O(\delta_{\P})$, we have 
  \[
    \W\ps{l}\x_{\bmu} \big/ \langle \One, \W\ps{l}\x_{\bmu} \rangle
    = \q_{\bmu}\ps{l} \pm_2 \eps\ps{l}_*, \quad 
    \forall \bmu \in \cP_l, \, \x_{\bmu} \in \cP_{\rmR, l, \bmu}.
  \]
\end{restatable}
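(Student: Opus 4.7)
The plan is to decouple the analysis into three pieces following Section~\ref{subsec: trianing a single layer}: (a) push the inductive token-level cluster structure of Assumption~\ref{assumption: token embeddings} through the RF map to obtain patch embeddings that are near-orthonormal at the patch level, (b) solve the ridge-regression minimizer in closed form and identify its image at $\x_{\bmu}$ with a scaled copy of $\hat\q_{\bmu}$, and (c) convert empirical concentration plus GD convergence into the normalized accuracy bound. The induction on $l$ closes by checking that, under the nontrivial-signal condition in \eqref{eq: identifiability, dist >= rho l}, the resulting level-$(l-1)$ outputs satisfy Assumption~\ref{assumption: token embeddings} at the next level with $\tilde\rho, \tilde\eps_S$ that do not degrade; crucially, the dependence of $\eps\ps{l+1}_*$ on $\eps\ps{l}_*$ required by \eqref{eq: conditions on eps l* and N l} is only logarithmic, so the accuracy targets do not blow up as we chain downward.

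First I would instantiate Lemma~\ref{lemma: error propagation in the RF layer} with target patch-level errors $(\eps_O, \eps_S)$ chosen so that $\eps_S \le \lambda_W/4$ and $|\cP_l|^2 \eps_O \ll \eps\ps{l}_* / V$. The bandwidth $\sigma$ must sit below $\tilde\rho$ (to annihilate inter-patch inner products) yet comfortably above $\tilde\eps_S \sqrt{s}$ (to preserve intracluster closeness); together with the lemma's sample bound this pins down $M\ps{l} = \poly(\cdot)$. Next I would analyze the unique minimizer $\W_{N,*}$ of the strongly convex objective \eqref{eq: training loss}. Conditional independence of $\hat\zeta$ and $\hat\x_{\hat\bmu}$ given $\hat\bmu$ yields the factorization \eqref{eq: hat E e xT}, so the cross-covariance is a sum of rank-one pieces aligned with $\hat\q_{\bmu}$. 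For the inverse covariance I would first carry out the exact $\eps_O = 0$ computation, in which each patch-cluster lives in its own subspace and one obtains \eqref{eq: W N* x, epsO=0}, and then turn $\eps_O$ back on via the dual/Woodbury route (Lemma~\ref{lemma: empirical solution (factoring out epsO)}) to show the cross-cluster leakage contributes only $O(|\cP_l|^2 \eps_O)$. The scalar prefactor in front of $\hat\q_{\bmu}$ is lower-bounded by splitting $\hat\E[\hat\x_{\bmu}\hat\x_{\bmu}\trans]$ into its mean plus a variance part of operator norm $\le \eps_S^2$ and applying Woodbury; using $\eps_S \le \lambda_W/4$ yields the lower bound $\ge (\hat p_{\bmu} + \lambda_W)\inv / 2$ via Lemma~\ref{lemma: q coefficient lower bound}.

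Finally I would put in the sampling noise and the optimizer. A Bernstein/Chernoff bound over the $N\ps{l}$ i.i.d.\ samples, together with $p_{\bmu} \ge (\kappa|\cP_l|)\inv$ from Assumption~\ref{assumption: rhm}\ref{itm: rhm non-degeneracy} and the choice $\lambda_W \sim |\cP_l|\inv$, controls $|\hat p_{\bmu} - p_{\bmu}|$ and $\|\hat\q_{\bmu} - \q\ps{l}_{\bmu}\|$ uniformly in $\bmu$ under the sample bound in \eqref{eq: conditions on eps l* and N l}. Since \eqref{eq: training loss} is $\lambda_W$-strongly convex and smooth, GD converges geometrically (Lemma~\ref{lemma: convergence of gradient descent}), so a polynomially large $T\ps{l}$ drives $\|\W\ps{l} - \W_{N,*}\|_2$ below any target. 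Normalizing by $\langle \One, \W\ps{l}\x_{\bmu}\rangle$ cancels the positive scalar prefactor and leaves $\hat\q_{\bmu} \approx \q\ps{l}_{\bmu}$ up to $\eps\ps{l}_*$; combined with \eqref{eq: identifiability, dist >= rho l} and the condition on $\eps\ps{l+1}_*$, the resulting token embeddings at level $l-1$ inherit intracluster distance $\le \eps\ps{l}_*$ and intercluster distance $\ge \rho\ps{l} - 2\eps\ps{l}_*$, closing the induction hypothesis with room to spare.

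The main obstacle I expect is the $\eps_O$-leakage step: because $\lambda_W \sim 1/|\cP_l|$ is much smaller than the largest eigenvalue of $\hat\E[\hat\x_{\hat\bmu}\hat\x_{\hat\bmu}\trans]$, a naive operator-norm perturbation of $(\cdot + \lambda_W \Id)\inv$ loses a factor like $|\cP_l|^2/\lambda_W^2$ and does not close the induction. The dual (kernel) formulation of ridge regression keeps the $\hat\q_{\bmu}$ direction and the scalar normalization prefactor cleanly separated, so the off-diagonal leakage can be tracked as a genuine $|\cP_l|^2 \eps_O$ term rather than an $\eps_O/\lambda_W^2$ term; carrying out this perturbation without destroying the $\eps_S$-based lower bound on the prefactor, and simultaneously propagating the $\hat\q_{\bmu} \to \q\ps{l}_{\bmu}$ concentration through it, is the delicate part of the argument.
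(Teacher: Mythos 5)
Your proposal follows the paper's proof essentially step for step: push the token-level cluster structure through the RBF random-feature map (Lemma~\ref{lemma: error propagation in the RF layer}) to get near-orthonormal patch embeddings, use the push-through identity to pass to the Gram/dual form and factor out the $O(|\cP_l|^2\eps_O)$ leakage (Lemma~\ref{lemma: empirical solution (factoring out epsO)}), lower-bound the scalar prefactor via the mean-plus-variance split and Woodbury under $\eps_S \le \lambda_W/4$ (Lemma~\ref{lemma: q coefficient lower bound}), control $\hat p_{\bmu}$ and $\hat\q_{\bmu}$ by Chernoff/vector-Azuma and GD by strong convexity (Lemmas~\ref{lemma: convergence of gradient descent}, \ref{lemma: concentration of hat q and hat p}), normalize by $\langle \One, \W\x_{\bmu}\rangle$, and finally instantiate $\tilde\rho \gtrsim \rho\ps{l+1}$ and $\tilde\eps_S \le 2\eps\ps{l+1}_*$ to read off the logarithmic coupling in~\eqref{eq: conditions on eps l* and N l}. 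The only small imprecision is that the intracluster distance inherited at level $l-1$ is $2\eps\ps{l}_*$ (each output is within $\eps\ps{l}_*$ of $\q\ps{l}_{\bmu}$, so two such outputs are within $2\eps\ps{l}_*$), not $\eps\ps{l}_*$; this constant is absorbed into the numerics and does not affect the argument.
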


\subsection{Shallow-to-deep chaining}

To prove our main theorem (Theorem~\ref{thm: opt main}), it suffices to chain Lemma~\ref{lemma: training a single layer}.
By Lemma~\ref{lemma: training a single layer}, we only need to choose the target accuracies $\{ \eps\ps{l}_* \}_l$
and the number of samples so that \eqref{eq: conditions on eps l* and N l} holds. To this end, to achieve $o(1)$ target 
accuracy at the last level, it suffices to choose 
\[
  \eps\ps{l}_*
  = \frac{ K_\rho m^{-l/2} }{ \sqrt{ C V^2 m^2 s  L\log m }} 
  \quad\text{and, consequently,}\quad
  N\ps{l} = C \kappa V^2 m^2 s L\log^2(V m \kappa /\delta_{\P}) K_\rho^{-2} m^l,
\]
for some sufficiently large universal constant $C > 0$.
Note that $N\ps{l}$ is a geometric sequence. Thus, 
the total number of needed samples satisfies $\sum_{l=1}^{L} N\ps{l} \lesssim 
\kappa V^2 m^2 s L\log^2(V m \kappa /\delta_{\P}) K_\rho^{-2} m^L$.

\section{Conclusion}
\label{sec: conclusion}

In this work, we study the problem of learning an $L$-level Random Hierarchy Model with $m$ production rules per symbol
using a deep convolutional network. We show that the sample complexity is $O(m^{(1+o(1))L})$, matching the heuristic
bound in \cite{cagnetta_how_2024}. Our proof is based on Informal Principle~\ref{principle: shallow to deep chaining},
which gives conditions under which training a deep network can be reduced to layerwise training of individual layers. 

There are many potential future directions. First, we assume the RHM branching factor $s$ is known, and design our 
convolutional network to process disjoint length-$s$ patches. One interesting question is whether our proof 
can be extended to the setting where the topology of the hierarchy is unknown.
Another interesting future direction is to find other problems to which Informal Principle~\ref{principle: shallow to deep chaining}
applies. We provide one such example in Appendix~\ref{sec: deep quad}. Finally, a related direction is to relax 
condition~(ii) of Informal Principle~\ref{principle: shallow to deep chaining}, and incorporate the backward feature 
correction mechanism (\cite{allen-zhu_backward_2023}) into the shallow-to-deep chaining framework.

\section*{Acknowledgments}

The authors would like to thank Francesco Cagnetta and Zihao Wang for helpful discussion and feedback. 
JDL acknowledges support of the NSF CCF 2002272, NSF IIS 2107304, and NSF CAREER Award 2144994. FK acknowledges funding from the Swiss National Science Foundation grants  OperaGOST (grant number $200021\ 200390$), DSGIANGO (grant number $225837$), and from the Simons Collaboration on the Physics of Learning and Neural Computation via the Simons Foundation grant ($\#1257412$).

\bibliography{reference}

@misc{sclocchi_probing_2025,
	title = {Probing the {Latent} {Hierarchical} {Structure} of {Data} via {Diffusion} {Models}},
	url = {http://arxiv.org/abs/2410.13770},
	doi = {10.48550/arXiv.2410.13770},
	abstract = {High-dimensional data must be highly structured to be learnable. Although the compositional and hierarchical nature of data is often put forward to explain learnability, quantitative measurements establishing these properties are scarce. Likewise, accessing the latent variables underlying such a data structure remains a challenge. In this work, we show that forward-backward experiments in diffusion-based models, where data is noised and then denoised to generate new samples, are a promising tool to probe the latent structure of data. We predict in simple hierarchical models that, in this process, changes in data occur by correlated chunks, with a length scale that diverges at a noise level where a phase transition is known to take place. Remarkably, we confirm this prediction in both text and image datasets using state-of-the-art diffusion models. Our results show how latent variable changes manifest in the data and establish how to measure these effects in real data using diffusion models.},
	urldate = {2026-05-27},
	publisher = {arXiv},
	author = {Sclocchi, Antonio and Favero, Alessandro and Levi, Noam Itzhak and Wyart, Matthieu},
	month = feb,
	year = {2025},
	note = {arXiv:2410.13770 [stat.ML]},
}

@misc{cagnetta_deriving_2026,
	title = {Deriving {Neural} {Scaling} {Laws} from the statistics of natural language},
	url = {http://arxiv.org/abs/2602.07488},
	doi = {10.48550/arXiv.2602.07488},
	urldate = {2026-04-07},
	publisher = {arXiv},
	author = {Cagnetta, Francesco and Raventós, Allan and Ganguli, Surya and Wyart, Matthieu},
	month = feb,
	year = {2026},
	note = {arXiv:2602.07488 [cs]},
	keywords = {hierarchical-learning},
}

@misc{daniely_deep_2026,
	title = {Deep {Networks} {Learn} {Deep} {Hierarchical} {Models}},
	url = {http://arxiv.org/abs/2601.00455},
	doi = {10.48550/arXiv.2601.00455},
	urldate = {2026-01-16},
	publisher = {arXiv},
	author = {Daniely, Amit},
	month = jan,
	year = {2026},
	note = {arXiv:2601.00455 [cs]},
	keywords = {hierarchical-learning},
}

@article{wilson1971renormalization,
  title={Renormalization group and critical phenomena. II. Phase-space cell analysis of critical behavior},
  author={Wilson, Kenneth G},
  journal={Physical Review B},
  volume={4},
  number={9},
  pages={3184},
  year={1971},
  publisher={APS}
}

@article{chomsky_certain_1959,
	title = {On certain formal properties of grammars},
	volume = {2},
	issn = {0019-9958},
	url = {https://www.sciencedirect.com/science/article/pii/S0019995859903626},
	doi = {10.1016/S0019-9958(59)90362-6},
	abstract = {A grammar can be regarded as a device that enumerates the sentences of a language. We study a sequence of restrictions that limit grammars first to Turing machines, then to two types of system from which a phrase structure description of the generated language can be drawn, and finally to finite state Markov sources (finite automata). These restrictions are shown to be increasingly heavy in the sense that the languages that can be generated by grammars meeting a given restriction constitute a proper subset of those that can be generated by grammars meeting the preceding restriction. Various formulations of phrase structure description are considered, and the source of their excess generative power over finite state sources is investigated in greater detail.},
	number = {2},
	journal = {Information and Control},
	author = {Chomsky, Noam},
	month = jun,
	year = {1959},
	pages = {137--167},
}

@book{sipser_introduction_2013,
	address = {Australia Brazil Japan Korea Mexiko Singapore Spain United Kingdom United States},
	edition = {Third edition, international edition},
	title = {Introduction to the theory of computation},
	isbn = {978-1-133-18779-0 978-1-133-18781-3 978-0-357-67058-3},
	language = {eng},
	publisher = {Cengage Learning},
	author = {Sipser, Michael},
	year = {2013},
	keywords = {TCS},
}

@book{jurafsky_speech_2009,
  author = {Jurafsky, Daniel and Martin, James H.},
  title = {Speech and Language Processing (2nd Edition)},
  year = {2009},
  isbn = {0131873210},
  publisher = {Prentice-Hall, Inc.},
  address = {USA}
}

@misc{allen-zhu_physics_2025,
	title = {Physics of {Language} {Models}: {Part} 1, {Learning} {Hierarchical} {Language} {Structures}},
	shorttitle = {Physics of {Language} {Models}},
	url = {http://arxiv.org/abs/2305.13673},
	doi = {10.48550/arXiv.2305.13673},
	abstract = {Transformer-based language models are effective but complex, and understanding their inner workings and reasoning mechanisms is a significant challenge. Previous research has primarily explored how these models handle simple tasks like name copying or selection, and we extend this by investigating how these models perform recursive language structure reasoning defined by context-free grammars (CFGs). We introduce a family of synthetic CFGs that produce hierarchical rules, capable of generating lengthy sentences (e.g., hundreds of tokens) that are locally ambiguous and require dynamic programming to parse. Despite this complexity, we demonstrate that generative models like GPT can accurately learn and reason over CFG-defined hierarchies and generate sentences based on it. We explore the model's internals, revealing that its hidden states precisely capture the structure of CFGs, and its attention patterns resemble the information passing in a dynamic programming algorithm. This paper also presents several corollaries, including showing why absolute positional embeddings is inferior to relative and rotary embeddings; uniform attention alone is surprisingly effective (motivating our follow-up work on Canon layers); encoder-only models (e.g., BERT, DeBERTa) struggle with deep structure reasoning on CFGs compared to autoregressive models (e.g., GPT); and injecting structural or syntactic noise into pretraining data markedly improves robustness to corrupted language prompts.},
	urldate = {2025-11-12},
	publisher = {arXiv},
	author = {Allen-Zhu, Zeyuan and Li, Yuanzhi},
	month = may,
	year = {2025},
	note = {arXiv:2305.13673 [cs]},
}

@inproceedings{zhao_transformers_2023,
	address = {Singapore},
	title = {Do {Transformers} {Parse} while {Predicting} the {Masked} {Word}?},
	url = {https://aclanthology.org/2023.emnlp-main.1029/},
	doi = {10.18653/v1/2023.emnlp-main.1029},
	abstract = {Pre-trained language models have been shown to encode linguistic structures like parse trees in their embeddings while being trained unsupervised. Some doubts have been raised whether the models are doing parsing or only some computation weakly correlated with it. Concretely: (a) Is it possible to explicitly describe transformers with realistic embedding dimensions, number of heads, etc. that are capable of doing parsing — or even approximate parsing? (b) Why do pre-trained models capture parsing structure? This paper takes a step toward answering these questions in the context of generative modeling with PCFGs. We show that masked language models like BERT or RoBERTa of moderate sizes can approximately execute the Inside-Outside algorithm for the English PCFG (Marcus et al., 1993). We also show that the Inside-Outside algorithm is optimal for masked language modeling loss on the PCFG-generated data. We conduct probing experiments on models pre-trained on PCFG-generated data to show that this not only allows recovery of approximate parse tree, but also recovers marginal span probabilities computed by the Inside-Outside algorithm, which suggests an implicit bias of masked language modeling towards this algorithm.},
	urldate = {2025-11-12},
	booktitle = {Proceedings of the 2023 {Conference} on {Empirical} {Methods} in {Natural} {Language} {Processing}},
	publisher = {Association for Computational Linguistics},
	author = {Zhao, Haoyu and Panigrahi, Abhishek and Ge, Rong and Arora, Sanjeev},
	editor = {Bouamor, Houda and Pino, Juan and Bali, Kalika},
	month = dec,
	year = {2023},
	pages = {16513--16542},
}

@misc{cagnetta_scaling_2025,
	title = {Scaling {Laws} and {Representation} {Learning} in {Simple} {Hierarchical} {Languages}: {Transformers} vs. {Convolutional} {Architectures}},
	shorttitle = {Scaling {Laws} and {Representation} {Learning} in {Simple} {Hierarchical} {Languages}},
	url = {http://arxiv.org/abs/2505.07070},
	doi = {10.48550/arXiv.2505.07070},
	abstract = {How do neural language models acquire a language's structure when trained for next-token prediction? We address this question by deriving theoretical scaling laws for neural network performance on synthetic datasets generated by the Random Hierarchy Model (RHM) -- an ensemble of probabilistic context-free grammars designed to capture the hierarchical structure of natural language while remaining analytically tractable. Previously, we developed a theory of representation learning based on data correlations that explains how deep learning models capture the hierarchical structure of the data sequentially, one layer at a time. Here, we extend our theoretical framework to account for architectural differences. In particular, we predict and empirically validate that convolutional networks, whose structure aligns with that of the generative process through locality and weight sharing, enjoy a faster scaling of performance compared to transformer models, which rely on global self-attention mechanisms. This finding clarifies the architectural biases underlying neural scaling laws and highlights how representation learning is shaped by the interaction between model architecture and the statistical properties of data.},
	urldate = {2025-05-15},
	publisher = {arXiv},
	author = {Cagnetta, Francesco and Favero, Alessandro and Sclocchi, Antonio and Wyart, Matthieu},
	month = may,
	year = {2025},
	note = {arXiv:2505.07070 [cs]},
	keywords = {scaling-laws, hierarchical-learning},
}

@article{cagnetta_how_2024,
	title = {How {Deep} {Neural} {Networks} {Learn} {Compositional} {Data}: {The} {Random} {Hierarchy} {Model}},
	volume = {14},
	shorttitle = {How {Deep} {Neural} {Networks} {Learn} {Compositional} {Data}},
	url = {https://link.aps.org/doi/10.1103/PhysRevX.14.031001},
	doi = {10.1103/PhysRevX.14.031001},
	abstract = {Deep learning algorithms demonstrate a surprising ability to learn high-dimensional tasks from limited examples. This is commonly attributed to the depth of neural networks, enabling them to build a hierarchy of abstract, low-dimensional data representations. However, how many training examples are required to learn such representations remains unknown. To quantitatively study this question, we introduce the random hierarchy model: a family of synthetic tasks inspired by the hierarchical structure of language and images. The model is a classification task where each class corresponds to a group of high-level features, chosen among several equivalent groups associated with the same class. In turn, each feature corresponds to a group of subfeatures chosen among several equivalent groups and so on, following a hierarchy of composition rules. We find that deep networks learn the task by developing internal representations invariant to exchanging equivalent groups. Moreover, the number of data required corresponds to the point where correlations between low-level features and classes become detectable. Overall, our results indicate how deep networks overcome the curse of dimensionality by building invariant representations and provide an estimate of the number of data required to learn a hierarchical task.},
	number = {3},
	urldate = {2025-11-12},
	journal = {Physical Review X},
	author = {Cagnetta, Francesco and Petrini, Leonardo and Tomasini, Umberto M. and Favero, Alessandro and Wyart, Matthieu},
	month = jul,
	year = {2024},
	note = {Publisher: American Physical Society},
	keywords = {hierarchical-learning},
	pages = {031001},
}

@inproceedings{cagnetta_towards_2024,
	address = {Red Hook, NY, USA},
	series = {{NIPS} '24},
	title = {Towards a theory of how the structure of language is acquired by deep neural networks},
	volume = {37},
	isbn = {979-8-3313-1438-5},
	abstract = {How much data is required to learn the structure of a language via next-token prediction? We study this question for synthetic datasets generated via a Probabilistic Context-Free Grammar (PCFG)—a tree-like generative model that captures many of the hierarchical structures found in natural languages. We determine token-token correlations analytically in our model and show that they can be used to build a representation of the grammar's hidden variables, the longer the range the deeper the variable. In addition, a finite training set limits the resolution of correlations to an effective range, whose size grows with that of the training set. As a result, a Language Model trained with increasingly many examples can build a deeper representation of the grammar's structure, thus reaching good performance despite the high dimensionality of the problem. We conjecture that the relationship between training set size and effective range of correlations holds beyond our synthetic datasets. In particular, our conjecture predicts how the scaling law for the test loss behaviour with training set size depends on the length of the context window, which we confirm empirically in Shakespeare's plays and Wikipedia articles.},
	urldate = {2025-11-12},
	booktitle = {Proceedings of the 38th {International} {Conference} on {Neural} {Information} {Processing} {Systems}},
	publisher = {Curran Associates Inc.},
	author = {Cagnetta, Francesco and Wyart, Matthieu},
	month = dec,
	year = {2024},
	keywords = {hierarchical-learning},
	pages = {83119--83163},
}

@inproceedings{cagnetta_learning_2025,
	title = {Learning curves theory for hierarchically compositional data with power-law distributed features},
	url = {https://openreview.net/forum?id=Lw0kC75dY0},
	abstract = {Recent theories suggest that Neural Scaling Laws arise whenever the task is linearly decomposed into units that are power-law distributed. Alternatively, scaling laws also emerge when data exhibit a hierarchically compositional structure, as is thought to occur in language and images. To unify these views, we consider classification and next-token prediction tasks based on probabilistic context-free grammars—probabilistic models that generate data via a hierarchy of production rules. For classification, we show that having power-law distributed production rules results in a power-law learning curve with an exponent depending on the rules’ distribution and a large multiplicative constant that depends on the hierarchical structure. By contrast, for next-token prediction, the distribution of production rules controls the fine details of the learning curve, but not the exponent describing the large-scale behaviour.},
	language = {en},
	urldate = {2025-11-12},
	booktitle = {Forty-second {International} {Conference} on {Machine} {Learning}},
	author = {Cagnetta, Francesco and Kang, Hyunmo and Wyart, Matthieu},
	month = jun,
	year = {2025},
}

@inproceedings{mei_u-nets_2024,
	title = {U-{Nets} as {Belief} {Propagation}: {Efficient} {Classification}, {Denoising}, and {Diffusion} in {Generative} {Hierarchical} {Models}},
	shorttitle = {U-{Nets} as {Belief} {Propagation}},
	url = {https://openreview.net/forum?id=sy1lbQxj9J},
	abstract = {U-Nets are among the most widely used architectures in computer vision, renowned for their exceptional performance in applications such as image segmentation, denoising, and diffusion modeling. However, a theoretical explanation of the U-Net architecture design has not yet been fully established. This paper introduces a novel interpretation of the U-Net architecture by studying certain generative hierarchical models, which are tree-structured graphical models extensively utilized in both language and image domains. With their encoder-decoder structure, long skip connections, and pooling and up-sampling layers, we demonstrate how U-Nets can naturally implement the belief propagation denoising algorithm in such generative hierarchical models, thereby efficiently approximating the denoising functions. This leads to an efficient sample complexity bound for learning the denoising function using U-Nets within these models. Additionally, we discuss the broader implications of these findings for diffusion models in generative hierarchical models. We also demonstrate that the conventional architecture of convolutional neural networks (ConvNets) is ideally suited for classification tasks within these models. This offers a unified view of the roles of ConvNets and U-Nets, highlighting the versatility of generative hierarchical models in modeling complex data distributions.},
	language = {en},
	urldate = {2025-11-12},
	booktitle = {The {Thirteenth} {International} {Conference} on {Learning} {Representations}},
	author = {Mei, Song},
	month = oct,
	year = {2024},
}

@article{mossel2016deep,
  title={Deep learning and hierarchal generative models},
  author={Mossel, Elchanan},
  journal={arXiv preprint arXiv:1612.09057},
  year={2016}
}

@inproceedings{kim_transformers_2024,
	title = {Transformers {Provably} {Solve} {Parity} {Efficiently} with {Chain} of {Thought}},
	url = {https://openreview.net/forum?id=n2NidsYDop},
	abstract = {This work provides the first theoretical analysis of training transformers to solve complex problems by recursively generating intermediate states, analogous to fine-tuning for chain-of-thought (CoT) reasoning. We consider training a one-layer transformer to solve the fundamental \$k\$-parity problem, extending the work on RNNs by {\textbackslash}citet\{Wies23\}. We establish three key results: (1) any finite-precision gradient-based algorithm, without intermediate supervision, requires substantial iterations to solve parity with finite samples. (2) In contrast, when intermediate parities are incorporated into the loss function, our model can learn parity in one gradient update when aided by {\textbackslash}emph\{teacher forcing\}, where ground-truth labels of the reasoning chain are provided at each generation step. (3) Even without teacher forcing, where the model must generate CoT chains end-to-end, parity can be learned efficiently if augmented data is employed to internally verify the soundness of intermediate steps. Our findings, supported by numerical experiments, show that task decomposition and stepwise reasoning naturally arise from optimizing transformers with CoT; moreover, self-consistency checking can improve multi-step reasoning ability, aligning with empirical studies of CoT.},
	language = {en},
	urldate = {2025-12-24},
	booktitle = {The {Thirteenth} {International} {Conference} on {Learning} {Representations}},
	author = {Kim, Juno and Suzuki, Taiji},
	month = oct,
	year = {2024},
}

@misc{nichani_provable_2023,
	title = {Provable {Guarantees} for {Nonlinear} {Feature} {Learning} in {Three}-{Layer} {Neural} {Networks}},
	url = {http://arxiv.org/abs/2305.06986},
	abstract = {One of the central questions in the theory of deep learning is to understand how neural networks learn hierarchical features. The ability of deep networks to extract salient features is crucial to both their outstanding generalization ability and the modern deep learning paradigm of pretraining and finetuneing. However, this feature learning process remains poorly understood from a theoretical perspective, with existing analyses largely restricted to two-layer networks. In this work we show that three-layer neural networks have provably richer feature learning capabilities than two-layer networks. We analyze the features learned by a three-layer network trained with layer-wise gradient descent, and present a general purpose theorem which upper bounds the sample complexity and width needed to achieve low test error when the target has specific hierarchical structure. We instantiate our framework in specific statistical learning settings -- single-index models and functions of quadratic features -- and show that in the latter setting three-layer networks obtain a sample complexity improvement over all existing guarantees for two-layer networks. Crucially, this sample complexity improvement relies on the ability of three-layer networks to efficiently learn nonlinear features. We then establish a concrete optimization-based depth separation by constructing a function which is efficiently learnable via gradient descent on a three-layer network, yet cannot be learned efficiently by a two-layer network. Our work makes progress towards understanding the provable benefit of three-layer neural networks over two-layer networks in the feature learning regime.},
	urldate = {2023-05-12},
	publisher = {arXiv},
	author = {Nichani, Eshaan and Damian, Alex and Lee, Jason D.},
	month = may,
	year = {2023},
	note = {arXiv:2305.06986 [cs, stat]},
	keywords = {hierarchical-learning, separation},
}

@inproceedings{wang_learning_2023,
	title = {Learning {Hierarchical} {Polynomials} with {Three}-{Layer} {Neural} {Networks}},
	url = {https://openreview.net/forum?id=QgwAYFrh9t&referrer=%5Bthe%20profile%20of%20Zihao%20Wang%5D(%2Fprofile%3Fid%3D~Zihao_Wang25)},
	abstract = {We study the problem of learning hierarchical polynomials over the standard Gaussian distribution with three-layer neural networks. We specifically consider target functions of the form \$h = g {\textbackslash}circ p\$ where \$p : {\textbackslash}mathbb\{R\}{\textasciicircum}d {\textbackslash}rightarrow {\textbackslash}mathbb\{R\}\$ is a degree \$k\$ polynomial and \$g: {\textbackslash}mathbb\{R\} {\textbackslash}rightarrow {\textbackslash}mathbb\{R\}\$ is a degree \$q\$ polynomial. This function class generalizes the single-index model, which corresponds to \$k=1\$, and is a natural class of functions possessing an underlying hierarchical structure. Our main result shows that for a large subclass of degree \$k\$ polynomials \$p\$, a three-layer neural network trained via layerwise gradient descent on the square loss learns the target \$h\$ up to vanishing test error in \${\textbackslash}widetilde O(d{\textasciicircum}k)\$ samples and polynomial time. This is a strict improvement over kernel methods, which require \${\textbackslash}widetilde {\textbackslash}Theta(d{\textasciicircum}\{kq\})\$ samples, as well as existing guarantees for two-layer networks, which require the target function to be low-rank. Our result also generalizes prior works on three-layer neural networks, which were restricted to the case of \$p\$ being a quadratic. When \$p\$ is indeed a quadratic, we achieve the information-theoretically optimal sample complexity \${\textbackslash}widetilde O(d{\textasciicircum}2)\$, which is an improvement over prior work (Nichani et al., 2023) requiring a sample size of \${\textbackslash}widetilde{\textbackslash}Theta(d{\textasciicircum}4)\$. Our proof proceeds by showing that during the initial stage of training the network performs feature learning to recover the feature \$p\$ with \${\textbackslash}widetilde O(d{\textasciicircum}k)\$ samples. This work demonstrates the ability of three-layer neural networks to learn complex features and as a result, learn a broad class of hierarchical functions.},
	language = {en},
	urldate = {2025-11-12},
	booktitle = {The {Twelfth} {International} {Conference} on {Learning} {Representations}},
	author = {Wang, Zihao and Nichani, Eshaan and Lee, Jason D.},
	month = oct,
	year = {2023},
}

@inproceedings{wang_learning_2025,
	title = {Learning {Compositional} {Functions} with {Transformers} from {Easy}-to-{Hard} {Data}},
	url = {https://proceedings.mlr.press/v291/wang25a.html},
	abstract = {Transformer-based language models have demonstrated impressive capabilities across a range of complex reasoning tasks. Prior theoretical work exploring the expressive power of transformers has shown that they can efficiently perform multi-step reasoning tasks involving parallelizable computations. However, the learnability of such constructions, particularly the conditions on the data distribution that enable efficient learning via SGD, remains an open question. Towards answering this question, we study the learnability of a task called the {\textbackslash}emph\{��kk-fold composition\}, which requires computing an interleaved composition of ��kk input permutations and ��kk hidden permutations, and can be expressed by a transformer with ��(log��)O(log⁡k)O({\textbackslash}log k) layers. On the negative front, we provide a Statistical Query lower bound showing that any learner which is trained on samples from the ��kk-fold composition task and makes polynomially many queries must have sample size exponential in ��kk, thus establishing a statistical-computational gap. On the other hand, we show that this function class can be efficiently learned, with runtime and sample complexity polynomial in ��kk, by gradient descent on an ��(log��)O(log⁡k)O({\textbackslash}log k)-depth transformer via two different curriculum learning strategies: one in which data consists of ��′k′k’-fold composition functions with ��′≤��k′≤kk’ {\textbackslash}le k presented in increasing order of difficulty, and another in which all data is presented simultaneously. Our work sheds light on the necessity and sufficiency of having both easy and hard examples in the data distribution for transformers to learn complex compositional tasks.},
	language = {en},
	urldate = {2025-11-12},
	booktitle = {Proceedings of {Thirty} {Eighth} {Conference} on {Learning} {Theory}},
	publisher = {PMLR},
	author = {Wang, Zixuan and Nichani, Eshaan and Bietti, Alberto and Damian, Alex and Hsu, Daniel and Lee, Jason D. and Wu, Denny},
	month = jul,
	year = {2025},
	note = {ISSN: 2640-3498},
	keywords = {transformer},
	pages = {5632--5711},
}

@misc{dandi_computational_2025,
	title = {The {Computational} {Advantage} of {Depth}: {Learning} {High}-{Dimensional} {Hierarchical} {Functions} with {Gradient} {Descent}},
	shorttitle = {The {Computational} {Advantage} of {Depth}},
	url = {http://arxiv.org/abs/2502.13961},
	doi = {10.48550/arXiv.2502.13961},
	abstract = {Understanding the advantages of deep neural networks trained by gradient descent (GD) compared to shallow models remains an open theoretical challenge. While the study of multi-index models with Gaussian data in high dimensions has provided analytical insights into the benefits of GD-trained neural networks over kernels, the role of depth in improving sample complexity and generalization in GD-trained networks remains poorly understood. In this paper, we introduce a class of target functions (single and multi-index Gaussian hierarchical targets) that incorporate a hierarchy of latent subspace dimensionalities. This framework enables us to analytically study the learning dynamics and generalization performance of deep networks compared to shallow ones in the high-dimensional limit. Specifically, our main theorem shows that feature learning with GD reduces the effective dimensionality, transforming a high-dimensional problem into a sequence of lower-dimensional ones. This enables learning the target function with drastically less samples than with shallow networks. While the results are proven in a controlled training setting, we also discuss more common training procedures and argue that they learn through the same mechanisms. These findings open the way to further quantitative studies of the crucial role of depth in learning hierarchical structures with deep networks.},
	urldate = {2025-02-21},
	publisher = {arXiv},
	author = {Dandi, Yatin and Pesce, Luca and Zdeborová, Lenka and Krzakala, Florent},
	month = feb,
	year = {2025},
	note = {arXiv:2502.13961 [stat]},
	keywords = {hierarchical-learning, single-and-multi-index-models},
}

@incollection{allen-zhu_what_2019,
	address = {Red Hook, NY, USA},
	title = {What can {ResNet} learn efficiently, going beyond kernels?},
	abstract = {How can neural networks such as ResNet efficiently learn CIFAR-10 with test accuracy more than 96\%, while other methods, especially kernel methods, fall relatively behind? Can we more provide theoretical justifications for this gap?Recently, there is an influential line of work relating neural networks to kernels in the over-parameterized regime, proving they can learn certain concept class that is also learnable by kernels with similar test error. Yet, can neural networks provably learn some concept class better than kernels?We answer this positively in the distribution-free setting. We prove neural networks can efficiently learn a notable class of functions, including those defined by three-layer residual networks with smooth activations, without any distributional assumption. At the same time, we prove there are simple functions in this class such that with the same number of training examples, the test error obtained by neural networks can be much smaller than any kernel method, including neural tangent kernels (NTK).The main intuition is that multi-layer neural networks can implicitly perform hierarchal learning using different layers, which reduces the sample complexity comparing to "one-shot" learning algorithms such as kernel methods.In the end, we also prove a computation complexity advantage of ResNet with respect to other learning methods including linear regression over arbitrary feature mappings.},
	number = {809},
	urldate = {2025-11-12},
	booktitle = {Proceedings of the 33rd {International} {Conference} on {Neural} {Information} {Processing} {Systems}},
	publisher = {Curran Associates Inc.},
	author = {Allen-Zhu, Zeyuan and Li, Yuanzhi},
	month = dec,
	year = {2019},
	keywords = {hierarchical-learning},
	pages = {9017--9028},
}

@inproceedings{allen-zhu_backward_2023,
	series = {Proceedings of {Machine} {Learning} {Research}},
	title = {Backward {Feature} {Correction}: {How} {Deep} {Learning} {Performs} {Deep} ({Hierarchical}) {Learning}},
	volume = {195},
	url = {https://proceedings.mlr.press/v195/allen-zhu23a.html},
	abstract = {Deep learning is also known as hierarchical learning, where the learner \textit{learns} to represent a complicated target function by decomposing it into a sequence of simpler functions to reduce sample and time complexity. This paper formally analyzes how multi-layer neural networks can perform such hierarchical learning \textit{efficiently} and \textit{automatically} by applying stochastic gradient descent (SGD) or its variants on the training objective.On the conceptual side, we present a theoretical characterizations of how certain types of deep (i.e. super-constantly many layers) neural networks can still be sample and time efficiently trained on some hierarchical learning tasks, when no known existing algorithm (including layerwise training, kernel method, etc) is efficient. We establish a new principle called “backward feature correction”, where {\textbackslash}emphthe errors in the lower-level features can be automatically corrected when training together with the higher-level layers. We believe this is a key behind how deep learning is performing deep (hierarchical) learning, as opposed to layerwise learning or simulating some known non-hierarchical method.},
	booktitle = {Proceedings of {Thirty} {Sixth} {Conference} on {Learning} {Theory}},
	publisher = {PMLR},
	author = {Allen-Zhu, Zeyuan and Li, Yuanzhi},
	editor = {Neu, Gergely and Rosasco, Lorenzo},
	month = jul,
	year = {2023},
	keywords = {hierarchical-learning},
	pages = {4598--4598},
}

@article{fu_learning_2025,
	title = {Learning {Hierarchical} {Polynomials} of {Multiple} {Nonlinear} {Features}},
	volume = {2025},
	url = {https://proceedings.iclr.cc/paper_files/paper/2025/hash/599221d7ebf6b3403190f38a3f282a1c-Abstract-Conference.html},
	language = {en},
	urldate = {2025-11-12},
	journal = {International Conference on Representation Learning},
	author = {Fu, Hengyu and Wang, Zihao and Nichani, Eshaan and Lee, Jason},
	month = may,
	year = {2025},
	keywords = {hierarchical-learning},
	pages = {36166--36220},
}

@inproceedings{ren_depth_2023,
	title = {Depth {Separation} with {Multilayer} {Mean}-{Field} {Networks}},
	url = {https://openreview.net/forum?id=uzFQpkEzOo},
	abstract = {Depth separation—why a deeper network is more powerful than a shallow one—has been a major problem in deep learning theory. Previous results often focus on representation power, for example, Safran et al. (2019) constructed a function that is easy to approximate using a 3-layer network but not approximable by any 2-layer network. In this paper, we show that this separation is in fact algorithmic: one can learn the function constructed by Safran et al. (2019) using an overparametrized network with polynomially many neurons efﬁciently. Our result relies on a new way of extending the mean-ﬁeld limit to multilayer networks, and a decomposition of loss that factors out the error introduced by the discretization of inﬁnite-width mean-ﬁeld networks.},
	language = {en},
	urldate = {2023-05-14},
	booktitle = {The {Eleventh} {International} {Conference} on {Learning} {Representations}},
	author = {Ren, Yunwei and Zhou, Mo and Ge, Rong},
	month = feb,
	year = {2023},
}

@inproceedings{safran_optimization-based_2022,
	series = {Proceedings of {Machine} {Learning} {Research}},
	title = {Optimization-{Based} {Separations} for {Neural} {Networks}},
	volume = {178},
	url = {https://proceedings.mlr.press/v178/safran22a.html},
	booktitle = {Proceedings of {Thirty} {Fifth} {Conference} on {Learning} {Theory}},
	publisher = {PMLR},
	author = {Safran, Itay and Lee, Jason},
	editor = {Loh, Po-Ling and Raginsky, Maxim},
	month = jul,
	year = {2022},
	keywords = {separation},
	pages = {3--64},
}

@article{venturi_depth_2022,
	title = {Depth separation beyond radial functions},
	volume = {23},
	url = {http://jmlr.org/papers/v23/21-1109.html},
	number = {122},
	journal = {Journal of Machine Learning Research},
	author = {Venturi, Luca and Jelassi, Samy and Ozuch, Tristan and Bruna, Joan},
	year = {2022},
	keywords = {separation},
	pages = {1--56},
}

@inproceedings{malach_connection_2021,
	title = {The {Connection} {Between} {Approximation}, {Depth} {Separation} and {Learnability} in {Neural} {Networks}},
	url = {https://proceedings.mlr.press/v134/malach21a.html},
	abstract = {Several recent works have shown separation results between deep neural networks, and hypothesis classes with inferior approximation capacity such as shallow networks or kernel classes. On the other hand, the fact that deep networks can efficiently express a target function does not mean that this target function can be learned efficiently by deep neural networks. In this work we study the intricate connection between learnability and approximation capacity. We show that learnability with deep networks of a target function depends on the ability of simpler classes to approximate the target. Specifically, we show that a necessary condition for a function to be learnable by gradient descent on deep neural networks is to be able to approximate the function, at least in a weak sense, with shallow neural networks. We also show that a class of functions can be learned by an efficient statistical query algorithm if and only if it can be approximated in a weak sense by some kernel class. We give several examples of functions which demonstrate depth separation, and conclude that they cannot be efficiently learned, even by a hypothesis class that can efficiently approximate them.},
	language = {en},
	urldate = {2022-03-27},
	booktitle = {Proceedings of {Thirty} {Fourth} {Conference} on {Learning} {Theory}},
	publisher = {PMLR},
	author = {Malach, Eran and Yehudai, Gilad and Shalev-Schwartz, Shai and Shamir, Ohad},
	month = jul,
	year = {2021},
	note = {ISSN: 2640-3498},
	keywords = {separation},
	pages = {3265--3295},
}

@inproceedings{safran_depth_2019,
	address = {Phoenix, USA},
	series = {Proceedings of {Machine} {Learning} {Research}},
	title = {Depth {Separations} in {Neural} {Networks}: {What} is {Actually} {Being} {Separated}?},
	volume = {99},
	url = {http://proceedings.mlr.press/v99/safran19a.html},
	abstract = {Existing depth separation results for constant-depth networks essentially show that certain radial functions in \${\textbackslash}mathbbR{\textasciicircum}d\$, which can be easily approximated with depth \$3\$ networks, cannot be approximated by depth \$2\$ networks, even up to constant accuracy, unless their size is exponential in \$d\$. However, the functions used to demonstrate this are rapidly oscillating, with a Lipschitz parameter scaling polynomially with the dimension \$d\$ (or equivalently, by scaling the function, the hardness result applies to \${\textbackslash}mathcalO(1)\$-Lipschitz functions only when the target accuracy ϵ is at most \${\textbackslash}textpoly(1/d)\$). In this paper, we study whether such depth separations might still hold in the natural setting of \${\textbackslash}mathcalO(1)\$-Lipschitz radial functions, when ϵ does not scale with \$d\$. Perhaps surprisingly, we show that the answer is negative: In contrast to the intuition suggested by previous work, it {\textbackslash}emphis possible to approximate \${\textbackslash}mathcalO(1)\$-Lipschitz radial functions with depth \$2\$, size \${\textbackslash}textpoly(d)\$ networks, for every constant ϵ. We complement it by showing that approximating such functions is also possible with depth \$2\$, size \${\textbackslash}textpoly(1/{\textbackslash}epsilon)\$ networks, for every constant \$d\$. Finally, we show that it is not possible to have polynomial dependence in both \$d,1/{\textbackslash}epsilon\$ simultaneously. Overall, our results indicate that in order to show depth separations for expressing \${\textbackslash}mathcalO(1)\$-Lipschitz functions with constant accuracy – if at all possible – one would need fundamentally different techniques than existing ones in the literature.},
	booktitle = {Proceedings of the {Thirty}-{Second} {Conference} on {Learning} {Theory}},
	publisher = {PMLR},
	author = {Safran, Itay and Eldan, Ronen and Shamir, Ohad},
	editor = {Beygelzimer, Alina and Hsu, Daniel},
	month = jun,
	year = {2019},
	keywords = {separation},
	pages = {2664--2666},
}

@inproceedings{safran_depth-width_2017,
	series = {Proceedings of {Machine} {Learning} {Research}},
	title = {Depth-{Width} {Tradeoffs} in {Approximating} {Natural} {Functions} with {Neural} {Networks}},
	volume = {70},
	url = {https://proceedings.mlr.press/v70/safran17a.html},
	abstract = {We provide several new depth-based separation results for feed-forward neural networks, proving that various types of simple and natural functions can be better approximated using deeper networks than shallower ones, even if the shallower networks are much larger. This includes indicators of balls and ellipses; non-linear functions which are radial with respect to the \$L\_1\$ norm; and smooth non-linear functions. We also show that these gaps can be observed experimentally: Increasing the depth indeed allows better learning than increasing width, when training neural networks to learn an indicator of a unit ball.},
	booktitle = {Proceedings of the 34th {International} {Conference} on {Machine} {Learning}},
	publisher = {PMLR},
	author = {Safran, Itay and Shamir, Ohad},
	editor = {Precup, Doina and Teh, Yee Whye},
	month = aug,
	year = {2017},
	keywords = {separation},
	pages = {2979--2987},
}

@inproceedings{eldan_power_2016,
	address = {Columbia University, New York, New York, USA},
	series = {Proceedings of {Machine} {Learning} {Research}},
	title = {The {Power} of {Depth} for {Feedforward} {Neural} {Networks}},
	volume = {49},
	url = {http://proceedings.mlr.press/v49/eldan16.html},
	abstract = {We show that there is a simple (approximately radial) function on {\textbackslash}mathbbR{\textasciicircum}d, expressible by a small 3-layer feedforward neural networks, which cannot be approximated by any 2-layer network, to more than a certain constant accuracy, unless its width is exponential in the dimension. The result holds for virtually all known activation functions, including rectified linear units, sigmoids and thresholds, and formally demonstrates that depth – even if increased by 1 – can be exponentially more valuable than width for standard feedforward neural networks. Moreover, compared to related results in the context of Boolean functions, our result requires fewer assumptions, and the proof techniques and construction are very different.},
	booktitle = {29th {Annual} {Conference} on {Learning} {Theory}},
	publisher = {PMLR},
	author = {Eldan, Ronen and Shamir, Ohad},
	editor = {Feldman, Vitaly and Rakhlin, Alexander and Shamir, Ohad},
	month = jun,
	year = {2016},
	keywords = {separation},
	pages = {907--940},
}

@inproceedings{telgarsky_benefits_2016,
	address = {Columbia University, New York, New York, USA},
	series = {Proceedings of {Machine} {Learning} {Research}},
	title = {benefits of depth in neural networks},
	volume = {49},
	url = {https://proceedings.mlr.press/v49/telgarsky16.html},
	abstract = {For any positive integer k, there exist neural networks with Θ(k{\textasciicircum}3) layers, Θ(1) nodes per layer, and Θ(1) distinct parameters which can not be approximated by networks with O(k) layers unless they are exponentially large — they must possess Ω(2{\textasciicircum}k) nodes. This result is proved here for a class of nodes termed {\textbackslash}emphsemi-algebraic gates which includes the common choices of ReLU, maximum, indicator, and piecewise polynomial functions, therefore establishing benefits of depth against not just standard networks with ReLU gates, but also convolutional networks with ReLU and maximization gates, sum-product networks, and boosted decision trees (in this last case with a stronger separation: Ω(2{\textasciicircum}k{\textasciicircum}3) total tree nodes are required).},
	booktitle = {29th {Annual} {Conference} on {Learning} {Theory}},
	publisher = {PMLR},
	author = {Telgarsky, Matus},
	editor = {Feldman, Vitaly and Rakhlin, Alexander and Shamir, Ohad},
	month = jun,
	year = {2016},
	keywords = {separation},
	pages = {1517--1539},
}

@inproceedings{daniely_depth_2017,
	series = {Proceedings of {Machine} {Learning} {Research}},
	title = {Depth {Separation} for {Neural} {Networks}},
	volume = {65},
	url = {https://proceedings.mlr.press/v65/daniely17a.html},
	abstract = {Let f:{\textbackslash}mathbbS{\textasciicircum}d-1{\textbackslash}times {\textbackslash}mathbbS{\textasciicircum}d-1{\textbackslash}to{\textbackslash}mathbbS be a function of the form f(x,x’) = g(⟨x,x’⟩) for g:[-1,1]{\textbackslash}to {\textbackslash}mathbbR. We give a simple proof that shows that poly-size depth two neural networks with (exponentially) bounded weights cannot approximate f whenever g cannot be approximated by a low degree polynomial. Moreover, for many g’s, such as g(x)={\textbackslash}sin({\textbackslash}pi d{\textasciicircum}3x), the number of neurons must be 2{\textasciicircum}Ωłeft(dłog(d){\textbackslash}right). Furthermore, the result holds w.r.t. the uniform distribution on {\textbackslash}mathbbS{\textasciicircum}d-1{\textbackslash}times {\textbackslash}mathbbS{\textasciicircum}d-1. As many functions of the above form can be well approximated by poly-size depth three networks with poly-bounded weights, this establishes a separation between depth two and depth three networks w.r.t. the uniform distribution on {\textbackslash}mathbbS{\textasciicircum}d-1{\textbackslash}times {\textbackslash}mathbbS{\textasciicircum}d-1.},
	booktitle = {Proceedings of the 2017 {Conference} on {Learning} {Theory}},
	publisher = {PMLR},
	author = {Daniely, Amit},
	editor = {Kale, Satyen and Shamir, Ohad},
	month = jul,
	year = {2017},
	keywords = {separation},
	pages = {690--696},
}

@misc{safran_depth_2024,
	title = {Depth {Separations} in {Neural} {Networks}: {Separating} the {Dimension} from the {Accuracy}},
	shorttitle = {Depth {Separations} in {Neural} {Networks}},
	url = {http://arxiv.org/abs/2402.07248},
	doi = {10.48550/arXiv.2402.07248},
	abstract = {We prove an exponential size separation between depth 2 and depth 3 neural networks (with real inputs), when approximating a \${\textbackslash}mathcal\{O\}(1)\$-Lipschitz target function to constant accuracy, with respect to a distribution with support in the unit ball, under the mild assumption that the weights of the depth 2 network are exponentially bounded. This resolves an open problem posed in {\textbackslash}citet\{safran2019depth\}, and proves that the curse of dimensionality manifests itself in depth 2 approximation, even in cases where the target function can be represented efficiently using a depth 3 network. Previously, lower bounds that were used to separate depth 2 from depth 3 networks required that at least one of the Lipschitz constant, target accuracy or (some measure of) the size of the domain of approximation scale {\textbackslash}emph\{polynomially\} with the input dimension, whereas in our result these parameters are fixed to be {\textbackslash}emph\{constants\} independent of the input dimension: our parameters are simultaneously optimal. Our lower bound holds for a wide variety of activation functions, and is based on a novel application of a worst- to average-case random self-reducibility argument, allowing us to leverage depth 2 threshold circuits lower bounds in a new domain.},
	urldate = {2025-11-12},
	publisher = {arXiv},
	author = {Safran, Itay and Reichman, Daniel and Valiant, Paul},
	month = nov,
	year = {2024},
	note = {arXiv:2402.07248 [cs]},
	keywords = {separation},
}

@article{ben_arous_online_2021,
	title = {Online stochastic gradient descent on non-convex losses from high-dimensional inference},
	volume = {22},
	url = {http://jmlr.org/papers/v22/20-1288.html},
	number = {106},
	journal = {Journal of Machine Learning Research},
	author = {Ben Arous, Gerard and Gheissari, Reza and Jagannath, Aukosh},
	year = {2021},
	keywords = {optimization, single-and-multi-index-models},
	pages = {1--51},
}

@inproceedings{bietti_learning_2022,
	title = {Learning single-index models with shallow neural networks},
	url = {https://openreview.net/forum?id=wt7cd9m2cz2},
	booktitle = {Advances in {Neural} {Information} {Processing} {Systems}},
	author = {Bietti, Alberto and Bruna, Joan and Sanford, Clayton and Song, Min Jae},
	editor = {Oh, Alice H. and Agarwal, Alekh and Belgrave, Danielle and Cho, Kyunghyun},
	year = {2022},
	keywords = {single-and-multi-index-models},
}

@inproceedings{damian_neural_2022,
	title = {Neural {Networks} can {Learn} {Representations} with {Gradient} {Descent}},
	url = {https://proceedings.mlr.press/v178/damian22a.html},
	abstract = {Significant theoretical work has established that in specific regimes, neural networks trained by gradient descent behave like kernel methods. However, in practice, it is known that neural networks strongly outperform their associated kernels. In this work, we explain this gap by demonstrating that there is a large class of functions which cannot be efficiently learned by kernel methods but can be easily learned with gradient descent on a two layer neural network outside the kernel regime by learning representations that are relevant to the target task. We also demonstrate that these representations allow for efficient transfer learning, which is impossible in the kernel regime. Specifically, we consider the problem of learning polynomials which depend on only a few relevant directions, i.e. of the form ��⋆(��)=��(����)f⋆(x)=g(Ux)f{\textasciicircum}{\textbackslash}star(x) = g(Ux) where ��:{\textbackslash}R��→{\textbackslash}R��U:{\textbackslash}Rd→{\textbackslash}RrU: {\textbackslash}R{\textasciicircum}d {\textbackslash}to {\textbackslash}R{\textasciicircum}r with ��≫��d≫rd {\textbackslash}gg r. When the degree of ��⋆f⋆f{\textasciicircum}{\textbackslash}star is ��pp, it is known that ��≍����n≍dpn {\textbackslash}asymp d{\textasciicircum}p samples are necessary to learn ��⋆f⋆f{\textasciicircum}{\textbackslash}star in the kernel regime. Our primary result is that gradient descent learns a representation of the data which depends only on the directions relevant to ��⋆f⋆f{\textasciicircum}{\textbackslash}star. This results in an improved sample complexity of ��≍��2n≍d2n{\textbackslash}asymp d{\textasciicircum}2 and enables transfer learning with sample complexity independent of ��dd.},
	language = {en},
	urldate = {2024-09-20},
	booktitle = {Proceedings of {Thirty} {Fifth} {Conference} on {Learning} {Theory}},
	publisher = {PMLR},
	author = {Damian, Alexandru and Lee, Jason and Soltanolkotabi, Mahdi},
	month = jun,
	year = {2022},
	note = {ISSN: 2640-3498},
	keywords = {single-and-multi-index-models},
	pages = {5413--5452},
}

@inproceedings{abbe_sgd_2023,
	title = {{SGD} learning on neural networks: leap complexity and saddle-to-saddle dynamics},
	shorttitle = {{SGD} learning on neural networks},
	url = {https://proceedings.mlr.press/v195/abbe23a.html},
	abstract = {We investigate the time complexity of SGD learning on fully-connected neural networks with isotropic data. We put forward a complexity measure,\{{\textbackslash}it the leap\}, which measures how “hierarchical” target functions are. For ��dd-dimensional uniform Boolean or isotropic Gaussian data, our main conjecture states that the time complexity to learn a function ��ff with low-dimensional support is {\textbackslash}TildeΘ(��max(Leap(��),2)).{\textbackslash}TildeΘ(dmax(Leap(f),2)).{\textbackslash}Tilde {\textbackslash}Theta (d{\textasciicircum}\{{\textbackslash}max({\textbackslash}mathrm\{Leap\}(f),2)\}) {\textbackslash},{\textbackslash},.    We prove a version of this conjecture for a class of functions on Gaussian isotropic data and 2-layer neural networks, under additional technical assumptions on how SGD is run. We show that the training  sequentially learns the function support with a saddle-to-saddle dynamic. Our result departs from Abbe et al.’22 by going beyond leap 1 (merged-staircase functions), and by going beyond the mean-field and gradient flow approximations that prohibit the full complexity control obtained here.Finally, we note that this gives an SGD complexity for the full training trajectory that matches that of Correlational Statistical Query (CSQ) lower-bounds.},
	language = {en},
	urldate = {2024-09-20},
	booktitle = {Proceedings of {Thirty} {Sixth} {Conference} on {Learning} {Theory}},
	publisher = {PMLR},
	author = {Abbe, Emmanuel and Adserà, Enric Boix and Misiakiewicz, Theodor},
	month = jul,
	year = {2023},
	note = {ISSN: 2640-3498},
	keywords = {single-and-multi-index-models},
	pages = {2552--2623},
}

@article{dandi_how_2024,
	title = {How two-layer neural networks learn, one (giant) step at a time},
	volume = {25},
	issn = {1532-4435},
	abstract = {For high-dimensional Gaussian data, we investigate theoretically how the features of a two-layer neural network adapt to the structure of the target function through a few large batch gradient descent steps, leading to an improvement in the approximation capacity with respect to the initialization. First, we compare the influence of batch size to that of multiple (but finitely many) steps. For a single gradient step, a batch of size n = O(d) is both necessary and sufficient to align with the target function, although only a single direction can be learned. In contrast, n = O(d2) is essential for neurons to specialize in multiple relevant directions of the target with a single gradient step. Even in this case, we show there might exist "hard" directions requiring n = O(dℓ) samples to be learned, where ℓ is known as the leap index of the target. Second, we show that the picture drastically improves over multiple gradient steps: a batch size of n = O(d) is indeed sufficient to learn multiple target directions satisfying a staircase property, where more and more directions can be learned over time. Finally, we discuss how these directions allow for a drastic improvement in the approximation capacity and generalization error over the initialization, illustrating a separation of scale between the random features/lazy regime and the feature learning regime. Our technical analysis leverages a combination of techniques related to concentration, projection-based conditioning, and Gaussian equivalence, which we believe are of independent interest. By pinning down the conditions necessary for specialization and learning, our results highlight the intertwined role of the structure of the task to learn, the detail of the algorithm (the batch size), and the architecture (i.e., the number of hidden neurons), shedding new light on how neural networks adapt to the feature and learn complex task from data over time.},
	number = {1},
	journal = {J. Mach. Learn. Res.},
	author = {Dandi, Yatin and Krzakala, Florent and Pesce, Luca and Stephan, Ludovic},
	month = jan,
	year = {2024},
	note = {Publisher: JMLR.org},
	keywords = {single-and-multi-index-models},
}

@inproceedings{ren_emergence_2025,
	title = {Emergence and scaling laws in {SGD} learning of shallow neural networks},
	url = {https://openreview.net/forum?id=kA2H90nm26},
	abstract = {We study the complexity of online stochastic gradient descent (SGD) for learning a two-layer neural network with \$P\$ neurons on isotropic Gaussian data: \$f\_*({\textbackslash}boldsymbol\{x\}) = {\textbackslash}sum\_\{p=1\}{\textasciicircum}P a\_p{\textbackslash}cdot {\textbackslash}sigma({\textbackslash}langle{\textbackslash}boldsymbol\{x\},{\textbackslash}boldsymbol\{v\_p\}{\textasciicircum}\{{\textbackslash}star\}{\textbackslash}rangle)\$, \${\textbackslash}boldsymbol\{x\} {\textbackslash}sim {\textbackslash}mathcal\{N\}(0,{\textbackslash}boldsymbol\{I\_d\})\$, where the activation \${\textbackslash}sigma:{\textbackslash}mathbb\{R\}{\textbackslash}to{\textbackslash}mathbb\{R\}\$ is an even function with information exponent \$k{\textgreater}2\$ (defined as the lowest degree in the Hermite expansion), \${\textbackslash}lbrace{\textbackslash}boldsymbol\{v\_p\}{\textasciicircum}{\textbackslash}star{\textbackslash}rbrace\_\{p{\textbackslash}in[P]\}{\textbackslash}subset {\textbackslash}mathbb\{R\}{\textasciicircum}d\$ are orthonormal signal directions, and the non-negative second-layer coefficients satisfy \${\textbackslash}sum\_\{p\} a\_p{\textasciicircum}2=1\$. We focus on the challenging extensive-width regime \$P{\textbackslash}gg 1\$ and permit diverging condition number in the second-layer, covering as a special case the power-law scaling \$a\_p{\textbackslash}asymp p{\textasciicircum}\{-{\textbackslash}beta\}\$ where \${\textbackslash}beta{\textbackslash}in{\textbackslash}mathbb\{R\}\_\{{\textbackslash}ge 0\}\$. We provide a precise analysis of SGD dynamics for the training of a student two-layer network to minimize the mean squared error (MSE) objective, and explicitly identify sharp transition times to recover each signal direction. In the power-law setting, we characterize scaling law exponents for the MSE loss with respect to the number of training samples and SGD steps, as well as the number of parameters in the student neural network. Our analysis entails that while the learning of individual teacher neurons exhibits abrupt transitions, the juxtaposition of \$P{\textbackslash}gg 1\$ emergent learning curves at different timescales leads to a smooth scaling law in the cumulative objective.},
	language = {en},
	urldate = {2025-11-12},
	booktitle = {The {Thirty}-ninth {Annual} {Conference} on {Neural} {Information} {Processing} {Systems}},
	author = {Ren, Yunwei and Nichani, Eshaan and Wu, Denny and Lee, Jason D.},
	month = oct,
	year = {2025},
}

@inproceedings{yarotsky_phase_2020,
	title = {The phase diagram of approximation rates for deep neural networks},
	volume = {33},
	url = {https://proceedings.neurips.cc/paper_files/paper/2020/file/979a3f14bae523dc5101c52120c535e9-Paper.pdf},
	booktitle = {Advances in {Neural} {Information} {Processing} {Systems}},
	publisher = {Curran Associates, Inc.},
	author = {Yarotsky, Dmitry and Zhevnerchuk, Anton},
	editor = {Larochelle, H. and Ranzato, M. and Hadsell, R. and Balcan, M. F. and Lin, H.},
	year = {2020},
	pages = {13005--13015},
}

@inproceedings{zou_global_2019,
	title = {On the {Global} {Convergence} of {Training} {Deep} {Linear} {ResNets}},
	url = {https://openreview.net/forum?id=HJxEhREKDH},
	abstract = {We study the convergence of gradient descent (GD) and stochastic gradient descent (SGD) for training \$L\$-hidden-layer linear residual networks (ResNets). We prove that for training deep residual networks with certain linear transformations at input and output layers, which are fixed throughout training, both GD and SGD with zero initialization on all hidden weights can converge to the global minimum of the training loss. Moreover, when specializing to appropriate Gaussian random linear transformations, GD and SGD provably optimize wide enough deep linear ResNets. Compared with the global convergence result of GD for training standard deep linear networks {\textbackslash}citep\{du2019width\}, our condition on the neural network width is sharper by a factor of \$O({\textbackslash}kappa L)\$, where \${\textbackslash}kappa\$ denotes the condition number of the covariance matrix of the training data. We further propose a modified identity input and output transformations, and show that a \$(d+k)\$-wide neural network is sufficient to guarantee the global convergence of GD/SGD, where \$d,k\$ are the input and output dimensions respectively.},
	language = {en},
	urldate = {2025-11-12},
	booktitle = {International {Conference} on {Learning} {Representations}},
	author = {Zou, Difan and Long, Philip M. and Gu, Quanquan},
	month = sep,
	year = {2019},
}

@incollection{allen-zhu_learning_2019,
	address = {Red Hook, NY, USA},
	title = {Learning and generalization in overparameterized neural networks, going beyond two layers},
	abstract = {The fundamental learning theory behind neural networks remains largely open. What classes of functions can neural networks actually learn? Why doesn't the trained network overfit when it is overparameterized?In this work, we prove that overparameterized neural networks can learn some notable concept classes, including two and three-layer networks with fewer parameters and smooth activations. Moreover, the learning can be simply done by SGD (stochastic gradient descent) or its variants in polynomial time using poly-nomially many samples. The sample complexity can also be almost independent of the number of parameters in the network.On the technique side, our analysis goes beyond the so-called NTK (neural tangent kernel) linearization of neural networks in prior works. We establish a new notion of quadratic approximation of the neural network, and connect it to the SGD theory of escaping saddle points.},
	number = {553},
	urldate = {2025-11-12},
	booktitle = {Proceedings of the 33rd {International} {Conference} on {Neural} {Information} {Processing} {Systems}},
	publisher = {Curran Associates Inc.},
	author = {Allen-Zhu, Zeyuan and Li, Yuanzhi and Liang, Yingyu},
	month = dec,
	year = {2019},
	pages = {6158--6169},
}

@inproceedings{lu_mean_2020,
	series = {Proceedings of {Machine} {Learning} {Research}},
	title = {A {Mean} {Field} {Analysis} {Of} {Deep} {ResNet} {And} {Beyond}: {Towards} {Provably} {Optimization} {Via} {Overparameterization} {From} {Depth}},
	volume = {119},
	url = {http://proceedings.mlr.press/v119/lu20b.html},
	abstract = {Training deep neural networks with stochastic gradient descent (SGD) can often achieve zero training loss on real-world tasks although the optimization landscape is known to be highly non-convex. To understand the success of SGD for training deep neural networks, this work presents a mean-field analysis of deep residual networks, based on a line of works which interpret the continuum limit of the deep residual network as an ordinary differential equation as the the network capacity tends to infinity. Specifically, we propose a \textbf{new continuum limit} of deep residual networks, which enjoys a good landscape in the sense that \textbf{every local minimizer is global}. This characterization enables us to derive the first global convergence result for multilayer neural networks in the mean-field regime. Furthermore, our proof does not rely on the convexity of the loss landscape, but instead, an assumption on the global minimizer should achieve zero loss which can be achieved when the model shares a universal approximation property. Key to our result is the observation that a deep residual network resembles a shallow network ensemble {\textbackslash}citeveit2016residual, {\textbackslash}emphi.e. a two-layer network. We bound the difference between the shallow network and our ResNet model via the adjoint sensitivity method, which enables us to transfer previous mean-field analysis of two-layer networks to deep networks. Furthermore, we propose several novel training schemes based on our new continuous model, among which one new training procedure introduces the operation of switching the order of the residual blocks and results in strong empirical performance on benchmark datasets.},
	booktitle = {Proceedings of the 37th {International} {Conference} on {Machine} {Learning}},
	publisher = {PMLR},
	author = {Lu, Yiping and Ma, Chao and Lu, Yulong and Lu, Jianfeng and Ying, Lexing},
	editor = {III, Hal Daumé and Singh, Aarti},
	month = jul,
	year = {2020},
	keywords = {mean-field networks},
	pages = {6426--6436},
}

@article{nguyen_rigorous_2023,
	title = {A rigorous framework for the mean field limit of multilayer neural networks},
	volume = {6},
	issn = {2520-2316, 2520-2324},
	url = {https://ems.press/doi/10.4171/msl/42},
	doi = {10.4171/msl/42},
	number = {3},
	urldate = {2025-11-12},
	journal = {Mathematical Statistics and Learning},
	author = {Nguyen, Phan-Minh and Pham, Huy Tuan},
	month = oct,
	year = {2023},
	pages = {201--357},
}

@article{lu_deep_2021,
	title = {Deep {Network} {Approximation} for {Smooth} {Functions}},
	volume = {53},
	url = {https://doi.org/10.1137/20M134695X},
	doi = {10.1137/20M134695X},
	abstract = {This paper establishes the optimal approximation error characterization of deep rectified linear unit (ReLU) networks for smooth functions in terms of both width and depth simultaneously. To that end, we first prove that multivariate polynomials can be approximated by deep ReLU networks of width {\textbackslash}mathcalO(N){\textbackslash} and depth {\textbackslash}mathcalO(L){\textbackslash} with an approximation error {\textbackslash}mathcalO(N{\textasciicircum}-L){\textbackslash}. Through local Taylor expansions and their deep ReLU network approximations, we show that deep ReLU networks of width {\textbackslash}mathcalO(Nłn N){\textbackslash} and depth {\textbackslash}mathcalO(Lłn L){\textbackslash} can approximate {\textbackslash}fın C{\textasciicircum}s([0,1]{\textasciicircum}d){\textbackslash} with a nearly optimal approximation error {\textbackslash}mathcalO({\textbackslash}{\textbar}f{\textbackslash}{\textbar}\_C{\textasciicircum}s([0,1]{\textasciicircum}d)N{\textasciicircum}-2s/dL{\textasciicircum}-2s/d){\textbackslash}. Our estimate is nonasymptotic in the sense that it is valid for arbitrary width and depth specified by {\textbackslash}Nın{\textbackslash}mathbbN{\textasciicircum}+{\textbackslash} and {\textbackslash}Lın{\textbackslash}mathbbN{\textasciicircum}+{\textbackslash}, respectively.},
	number = {5},
	journal = {SIAM Journal on Mathematical Analysis},
	author = {Lu, Jianfeng and Shen, Zuowei and Yang, Haizhao and Zhang, Shijun},
	year = {2021},
	note = {\_eprint: https://doi.org/10.1137/20M134695X},
	pages = {5465--5506},
}

@techreport{van_handel_probability_2016,
	title = {Probability in {High} {Dimension}},
	shorttitle = {Probability in {High} {Dimension}},
	url = {https://web.math.princeton.edu/~rvan/APC550.pdf},
	language = {en},
	author = {van Handel, Ramon},
	month = dec,
	year = {2016},
	keywords = {Probability},
}

@article{yekhanin_locally_2012,
	title = {Locally {Decodable} {Codes}},
	volume = {6},
	issn = {1551-305X, 1551-3068},
	url = {http://www.nowpublishers.com/article/Details/TCS-030},
	doi = {10.1561/0400000030},
	language = {en},
	number = {3},
	urldate = {2025-03-23},
	journal = {Foundations and Trends® in Theoretical Computer Science},
	author = {Yekhanin, Sergey},
	year = {2012},
	keywords = {TCS},
	pages = {139--255},
}

@book{arora_computational_2016,
	address = {New York},
	edition = {4th printing 2016},
	title = {Computational complexity: {A} {Modern} {Approach}},
	isbn = {978-0-521-42426-4},
	shorttitle = {Computational complexity},
	language = {eng},
	publisher = {Cambridge University Press},
	author = {Arora, Sanjeev and Barak, Boaz},
	year = {2016},
	keywords = {TCS},
}

@inproceedings{malach_is_2019,
	title = {Is {Deeper} {Better} only when {Shallow} is {Good}?},
	volume = {32},
	url = {https://proceedings.neurips.cc/paper_files/paper/2019/file/606555cf42a6719782a952aa33cfa2cb-Paper.pdf},
	booktitle = {Advances in {Neural} {Information} {Processing} {Systems}},
	publisher = {Curran Associates, Inc.},
	author = {Malach, Eran and Shalev-Shwartz, Shai},
	editor = {Wallach, H. and Larochelle, H. and Beygelzimer, A. and Alché-Buc, F. d' and Fox, E. and Garnett, R.},
	year = {2019},
	keywords = {hierarchical-learning},
}

@misc{li_noise_2025,
	title = {Noise {Sensitivity} and {Learning} {Lower} {Bounds} for {Hierarchical} {Functions}},
	url = {http://arxiv.org/abs/2502.05073},
	doi = {10.48550/arXiv.2502.05073},
	urldate = {2025-11-12},
	publisher = {arXiv},
	author = {Li, Rupert and Mossel, Elchanan},
	month = sep,
	year = {2025},
	note = {arXiv:2502.05073 [math]},
	keywords = {hierarchical-learning},
}

@book{goodfellow_deep_2016,
    title={Deep Learning},
    author={Ian Goodfellow and Yoshua Bengio and Aaron Courville},
    publisher={MIT Press},
    note={\url{http://www.deeplearningbook.org}},
    year={2016}
}

@article{lecun_deep_2015,
	title = {Deep learning},
	volume = {521},
	issn = {0028-0836, 1476-4687},
	url = {https://www.nature.com/articles/nature14539},
	doi = {10.1038/nature14539},
	language = {en},
	number = {7553},
	urldate = {2025-11-11},
	journal = {Nature},
	author = {LeCun, Yann and Bengio, Yoshua and Hinton, Geoffrey},
	month = may,
	year = {2015},
	pages = {436--444},
}

@article{pinelis_optimum_1994,
	title = {Optimum {Bounds} for the {Distributions} of {Martingales} in {Banach} {Spaces}},
	volume = {22},
	issn = {0091-1798, 2168-894X},
	url = {https://projecteuclid.org/journals/annals-of-probability/volume-22/issue-4/Optimum-Bounds-for-the-Distributions-of-Martingales-in-Banach-Spaces/10.1214/aop/1176988477.full},
	doi = {10.1214/aop/1176988477},
	abstract = {A general device is proposed, which provides for extension of exponential inequalities for sums of independent real-valued random variables to those for martingales in the 2-smooth Banach spaces. This is used to obtain optimum bounds of the Rosenthal-Burkholder and Chung types on moments of the martingales in 2-smooth Banach spaces. In turn, it leads to best-order bounds on moments of sums of independent random vectors in any separable Banach spaces. Although the emphasis is put on infinite-dimensional martingales, most of the results seem to be new even for one-dimensional martingales. Moreover, the bounds on moments of the Rosenthal-Burkholder type seem to be to a certain extent new even for sums of independent real-valued random variables. Analogous inequalities for (one-dimensional) supermartingales are given.},
	number = {4},
	urldate = {2025-10-20},
	journal = {The Annals of Probability},
	author = {Pinelis, Iosif},
	month = oct,
	year = {1994},
	note = {Publisher: Institute of Mathematical Statistics},
	keywords = {concentration-inequalities},
	pages = {1679--1706},
}

@inproceedings{rahimi_random_2007,
	title = {Random {Features} for {Large}-{Scale} {Kernel} {Machines}},
	volume = {20},
	url = {https://proceedings.neurips.cc/paper_files/paper/2007/file/013a006f03dbc5392effeb8f18fda755-Paper.pdf},
	booktitle = {Advances in {Neural} {Information} {Processing} {Systems}},
	publisher = {Curran Associates, Inc.},
	author = {Rahimi, Ali and Recht, Benjamin},
	editor = {Platt, J. and Koller, D. and Singer, Y. and Roweis, S.},
	year = {2007},
}

@article{hoeffding_probability_1963,
	title = {Probability {Inequalities} for {Sums} of {Bounded} {Random} {Variables}},
	volume = {58},
	issn = {0162-1459},
	url = {https://www.jstor.org/stable/2282952},
	doi = {10.2307/2282952},
	abstract = {Upper bounds are derived for the probability that the sum S of n independent random variables exceeds its mean ES by a positive number nt. It is assumed that the range of each summand of S is bounded or bounded above. The bounds for \${\textbackslash}Pr {\textbackslash}\{S - ES {\textbackslash}geq nt {\textbackslash}\}\$ depend only on the endpoints of the ranges of the summands and the mean, or the mean and the variance of S. These results are then used to obtain analogous inequalities for certain sums of dependent random variables such as U statistics and the sum of a random sample without replacement from a finite population.},
	number = {301},
	urldate = {2025-10-15},
	journal = {Journal of the American Statistical Association},
	author = {Hoeffding, Wassily},
	year = {1963},
	note = {Publisher: [American Statistical Association, Taylor \& Francis, Ltd.]},
	pages = {13--30},
}

@book{nesterov_introductory_2004,
	address = {Boston, MA},
	series = {Applied {Optimization}},
	title = {Introductory {Lectures} on {Convex} {Optimization}},
	volume = {87},
	isbn = {978-1-4613-4691-3 978-1-4419-8853-9},
	url = {http://link.springer.com/10.1007/978-1-4419-8853-9},
	urldate = {2021-04-23},
	publisher = {Springer US},
	author = {Nesterov, Yurii},
	editor = {Pardalos, Panos M. and Hearn, Donald W.},
	year = {2004},
	doi = {10.1007/978-1-4419-8853-9},
	keywords = {Optimization, Convex analysis},
}

@inproceedings{he_deep_2016,
	title = {Deep {Residual} {Learning} for {Image} {Recognition}},
	url = {https://ieeexplore.ieee.org/document/7780459},
	doi = {10.1109/CVPR.2016.90},
	abstract = {Deeper neural networks are more difficult to train. We present a residual learning framework to ease the training of networks that are substantially deeper than those used previously. We explicitly reformulate the layers as learning residual functions with reference to the layer inputs, instead of learning unreferenced functions. We provide comprehensive empirical evidence showing that these residual networks are easier to optimize, and can gain accuracy from considerably increased depth. On the ImageNet dataset we evaluate residual nets with a depth of up to 152 layers - 8× deeper than VGG nets [40] but still having lower complexity. An ensemble of these residual nets achieves 3.57\% error on the ImageNet test set. This result won the 1st place on the ILSVRC 2015 classification task. We also present analysis on CIFAR-10 with 100 and 1000 layers. The depth of representations is of central importance for many visual recognition tasks. Solely due to our extremely deep representations, we obtain a 28\% relative improvement on the COCO object detection dataset. Deep residual nets are foundations of our submissions to ILSVRC \& COCO 2015 competitions1, where we also won the 1st places on the tasks of ImageNet detection, ImageNet localization, COCO detection, and COCO segmentation.},
	urldate = {2025-11-16},
	booktitle = {2016 {IEEE} {Conference} on {Computer} {Vision} and {Pattern} {Recognition} ({CVPR})},
	author = {He, Kaiming and Zhang, Xiangyu and Ren, Shaoqing and Sun, Jian},
	month = jun,
	year = {2016},
	note = {ISSN: 1063-6919},
	keywords = {Complexity theory, Degradation, Image recognition, Image segmentation, Neural networks, Training, Visualization},
	pages = {770--778},
}

@book{odonnell_analysis_2014,
	edition = {1},
	title = {Analysis of {Boolean} {Functions}},
	isbn = {978-1-107-03832-5 978-1-139-81478-2 978-1-107-47154-2},
	url = {https://www.cambridge.org/core/product/identifier/9781139814782/type/book},
	abstract = {Boolean functions are perhaps the most basic objects of study in theoretical computer science. They also arise in other areas of mathematics, including combinatorics, statistical physics, and mathematical social choice. The field of analysis of Boolean functions seeks to understand them via their Fourier transform and other analytic methods. This text gives a thorough overview of the field, beginning with the most basic definitions and proceeding to advanced topics such as hypercontractivity and isoperimetry. Each chapter includes a 'highlight application' such as Arrow's theorem from economics, the Goldreich–Levin algorithm from cryptography/learning theory, Håstad's NP-hardness of approximation results, and 'sharp threshold' theorems for random graph properties. The book includes roughly 450 exercises and can be used as the basis of a one-semester graduate course. It should appeal to advanced undergraduates, graduate students and researchers in computer science theory and related mathematical fields.},
	urldate = {2024-01-10},
	publisher = {Cambridge University Press},
	author = {O'Donnell, Ryan},
	month = jun,
	year = {2014},
	doi = {10.1017/CBO9781139814782},
}

\newpage

\appendix

{
  \hypersetup{linkcolor=black}
  \tableofcontents
}

\section{Proof of Theorem~\ref{thm: opt main}: Learning RHMs}
\label{sec: opt proofs}

In this section, we prove Theorem~\ref{thm: opt main}. Following the proof sketch in Section~\ref{sec: opt results and 
proof sketch}, we will first establish an optimization result for each layer, and then chain them to obtain a proof 
of Theorem~\ref{thm: opt main}. 

\subsection{Error propagation in the random features layer}
\label{subsec: appendix: RF}

As discussed in Section~\ref{sec: opt results and proof sketch}, we first analyze the error propagation in the random 
feature layer. Our goal is to show that, as long as the representations of the level-$l$ symbols form clusters, 
then after the random feature layer, the patch embeddings will be approximately orthonormal. 
Formally, we prove the following lemma from the main text. 
\ErrorPropRF*
\begin{proof}
  Let $\eps_{\RF} > 0$ denote the error in the random feature approximation (cf.~Lemma~\ref{lemma: uniform convergence 
  of the random fourier features}). We now derive conditions on $\eps_{\RF}$ under which \eqref{eq: Phi(h), eps}
  can be realized and conditions on $M$ under which $\eps_{\RF}$ satisfies these conditions with high probability.
  First, consider two distinct patches $\bmu \ne \bmu' \in \cP_l$ and let $\h_{\bmu}, \h_{\bmu'}$ denote the 
  concatenation of the token embeddings of $\bmu$ and $\bmu'$, respectively. Since $\bmu, \bmu'$ differ by at least 
  one token, by Assumption~\ref{assumption: token embeddings}\ref{itm: token intercluster distance}, we have 
  $\norm{ \h_{\bmu} - \h_{\bmu'} } \ge \tilde{\rho}$. Therefore, by Lemma~\ref{lemma: uniform convergence of the 
  random fourier features}, we have 
  \[
    \abs{ \inprod{\bPhi(\h_{\bmu})}{\bPhi(\h_{\bmu'})} }
    \le \exp\left( - \frac{\tilde{\rho}^2}{2 \sigma^2} \right) + \eps_{\RF} .
  \]
  For the RHS to be smaller than $\eps_O$, it suffices to choose
  \[
    \sigma^2 \le \frac{2 \tilde{\rho}^2}{\log( 2 / \eps_O )} , \quad 
    \eps_{\RF} \le \frac{\eps_O}{2} .
  \]
  Then, consider two different representations $\h_{\bmu}, \h_{\bmu}'$ of the same patch $\bmu$. 
  By Assumption~\ref{assumption: token embeddings}\ref{itm: token intracluster distance}, we have 
  $\norm{ \h_{\bmu} -  \h_{\bmu}' } \le \sqrt{s} \tilde{\eps}_S$. Therefore, by Lemma~\ref{lemma: uniform convergence of 
  the random fourier features}, we have 
  \[
    \inprod{\bPhi(\h_{\bmu})}{\bPhi(\h_{\bmu}')}
    \ge \exp\left( - \frac{s \tilde{\eps}_S^2}{2 \sigma^2} \right) - \eps_{\RF}
    \ge 1 - \frac{s \tilde{\eps}_S^2}{2 \sigma^2} - \eps_{\RF}.
  \]
  Since the output of $\bPhi$ always has unit norm, this implies
  \[
    \norm{ \bPhi(\h_{\bmu} ) - \bPhi(\h_{\bmu}' ) }
    = \sqrt{ 2( 1 - \inprod{ \bPhi(\h_{\bmu}) }{ \bPhi(\h_{\bmu}') } ) } 
    \le \sqrt{ \frac{s \tilde{\eps}_S^2}{\sigma^2} + 2 \eps_{\RF} }.
  \]
  For the RHS to be bounded by $\eps_S$, it suffices to require
  \begin{align*}
    \frac{s \tilde{\eps}_S^2}{\sigma^2} + 2 \eps_{\RF} 
    \le \eps_S^2 
    &\quad\Leftarrow\quad 
    \tilde{\eps}_S^2 
    \le \frac{\eps_S^2 \sigma^2}{s}, \quad 
    \eps_{\RF} 
    \le \frac{\eps_S^2}{2}. 
  \end{align*}
  In particular, for $\eps_{\RF}$, it suffices to have $\eps_{\RF} = ( \eps_O \wedge \eps_S^2 ) / 2$. 
  By Lemma~\ref{lemma: uniform convergence of the random fourier features}, to achieve this with probability at 
  least $1 - \delta_{\P}$, it suffices to choose 
  \[
    M 
    \gtrsim \frac{d_h}{\eps_{\RF}^2} \log\left( \frac{d_h}{\sigma^2 \eps_{\RF}^2 \delta_{\P} } \right)
    \gtrsim \frac{s d_e}{ \eps_O^2 \wedge \eps_S^4 } 
      \log\left( 
        \frac{
          s d_e \log( 2 / \eps_O )
        }{ 
          \tilde{\rho}^2 
          ( \eps_O^2 \wedge \eps_S^4 )
          \delta_{\P} 
        } 
      \right) .
  \]
\end{proof}

\begin{lemma}[Claim 1 of \cite{rahimi_random_2007}]
  \label{lemma: uniform convergence of the random fourier features}
  Fix $\sigma > 0$. Let $\psi_\sigma$ and $\bPhi_{\sigma, M}$ be defined as in Section~\ref{sec: opt results and proof sketch}. 
  Then, for any $\eps_{\RF} > 0$, we have 
  \[
    \P\left[ 
      \sup_{ \norm{\h}, \norm{\h'} \le 1 } 
      \abs{ \inprod{\bPhi_{\sigma, M}(\h)}{\bPhi_{\sigma, M}(\h')} - \psi_\sigma(\h, \h') }
      \ge \eps_{\RF}
    \right]
    \le 2^8 \frac{d_h}{\sigma^2 \eps_{\RF}^2} \exp\left( - \frac{M \eps_{\RF}^2}{4 (d_h + 2)} \right). 
  \]
  In other words, to achieve $\eps_{\RF}$ accuracy uniformly over the unit ball with probability at least $1 - \delta_{\P}$,
  it suffices to choose 
  \[
    M \gtrsim \frac{d_h}{\eps_{\RF}^2} \log\left( \frac{d_h}{\sigma^2 \eps_{\RF}^2 \delta_{\P} } \right).
  \]
\end{lemma}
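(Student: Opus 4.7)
The plan is to follow the classical argument of \cite{rahimi_random_2007} for uniform convergence of random Fourier features. The first observation is that by the identity $\cos(a)\cos(b) + \sin(a)\sin(b) = \cos(a-b)$, the empirical inner product simplifies to $\inprod{\bPhi_{\sigma,M}(\h)}{\bPhi_{\sigma,M}(\h')} = M^{-1}\sum_{k=1}^M \cos(\bomega_k \cdot \bDelta)$, where $\bDelta := \h - \h'$. Since each $\bomega_k \sim \Gaussian{0}{\sigma^{-2}\Id_{d_h}}$, the Gaussian characteristic function gives $\E[\cos(\bomega_k \cdot \bDelta)] = \exp(-\|\bDelta\|^2/(2\sigma^2)) = \psi_\sigma(\h,\h')$. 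Because $\h,\h'$ range over the unit ball, the task reduces to bounding $\sup_{\|\bDelta\|\le 2}|F(\bDelta) - \E F(\bDelta)|$, where $F(\bDelta) := M^{-1}\sum_k \cos(\bomega_k \cdot \bDelta)$, over the ball $B(0,2) \subset \R^{d_h}$.

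The next step is a standard net-plus-Lipschitz argument. Cover $B(0,2)$ with an $r$-net of cardinality at most $(6/r)^{d_h}$. At each net point, $F(\bDelta)$ is an average of $M$ i.i.d.\ $[-1,1]$-valued random variables, so Hoeffding's inequality yields $|F - \E F| \le \eps_{\RF}/2$ with failure probability $2\exp(-M\eps_{\RF}^2/8)$, and a union bound controls all net points simultaneously. To extend from the net to the whole ball, I would bound the Lipschitz constant of the deviation $F - \E F$ via the gradient $\nabla F(\bDelta) = -M^{-1}\sum_k \bomega_k \sin(\bomega_k \cdot \bDelta)$, whose squared norm has expectation at most $\E\|\bomega_1\|^2 = d_h/\sigma^2$. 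Markov's inequality then produces a Lipschitz constant of order $\sqrt{d_h/\sigma^2}/\sqrt{\delta'}$ with probability at least $1 - \delta'$, and $\psi_\sigma$ is itself Lipschitz with constant $O(1/\sigma)$.

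The remaining step is to balance the three failure modes: Hoeffding at each net point, the Markov tail for the Lipschitz constant, and the discretization error between the net and the ball. Choosing $r \sim \eps_{\RF}\sigma/\sqrt{d_h}$ so that $r \cdot L \le \eps_{\RF}/2$ produces an effective net of size $(C\sqrt{d_h}/(\eps_{\RF}\sigma))^{d_h}$; multiplying by the Hoeffding bound yields the $d_h/(\sigma^2 \eps_{\RF}^2)$ prefactor and the $\exp(-M\eps_{\RF}^2/(4(d_h+2)))$ rate in the statement. The main (minor) obstacle is tracking universal constants so that the three probability budgets combine into the precise form displayed above; since this is exactly what Claim 1 of \cite{rahimi_random_2007} establishes, we invoke that result as a black box rather than reproduce the constant-chasing in full.
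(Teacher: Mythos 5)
The paper does not prove this lemma at all; it is invoked directly as Claim 1 of \cite{rahimi_random_2007}, and you ultimately do the same, after correctly sketching the underlying $\eps$-net--plus--Lipschitz argument (the reduction to $F(\bDelta)=M^{-1}\sum_k\cos(\bomega_k\cdot\bDelta)$ via the angle-difference identity, Hoeffding on a net of $B(0,2)$, and a Markov bound on $\|\nabla F\|$ to extend off the net). Your sketch matches the cited proof, and the ``in other words'' clause is just algebraic rearrangement of the stated tail bound, so the two routes coincide.
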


\subsection{Training a single layer}

Fix an intermediate level $l \in [L]$. In this subsection, we consider the training of the level-$l$ weights
$\W\ps{l}$. For notational simplicity, all superscript $l$ will be dropped, as we consider only a single level here. 
Recall that, for each $\bmu' \in \cP_l$, we write $\hat{p}_{\bmu'} := \hat\E[ \indi\{ \hat{\bmu} = \bmu' \} ]$ for the ratio of samples in the dataset with the first 
patch being $\bmu'$, $\hat{p}_{\max} = \max_{\bmu \in \cP_l} \hat{p}_{\bmu}$, and $\hat\p := (\hat{p}_{\bmu})_{\bmu}$. 
We will eventually choose $N$ to be sufficiently large that $\hat{p}_{\bmu} = (1 \pm o(1)) p_{\bmu}$,
where $p_{\bmu} := \E \hat{p}_{\bmu}$. Hence, in the following, we will always assume $\hat{p}_{\bmu} = (1 \pm 0.5) p_{\bmu}$ when 
dealing with the error terms. 
Since we consider regimes where $m, V \lesssim \log d$ and the dominating factor will be $m^L = \poly d$, we will often
use loose inequalities such as $\hat{p}_{\max}, \norm{\hat{\p}} \le 1$ to get cleaner bounds.

For easier reference, we include all needed assumptions on patch embeddings in the following assumption. 
Note that, by Lemma~\ref{lemma: error propagation in the RF layer}, this assumption holds with high probability
when the patch embeddings are constructed using RBF random features.
\begin{assumption}[Patch embeddings]
  \label{assumption: patch}
  We assume the following on the level-$l$ patch embeddings:
  \begin{enumerate}[(a)]
    \item \label{itm: patch unit norm}
      \tnbf{(Unit norm)} $\norm{\x_{\bmu}} = 1$ for any $\bmu \in \cP_l$ and $\x_{\bmu} \in \cP_{\rmR, l, \bmu}$
    \item \label{itm: patch near orthogonality}
      \tnbf{(Near-orthogonality)} There exists a small nonnegative $\eps_O \lesssim 1/|\cP_l|$ such that 
      $| \x_{\bmu} \cdot \x_{\bmu'} | \le \eps_O$, for any $\bmu \ne \bmu' \in \cP_l$
      and $\x_{\bmu} \in \cP_{\rmR, l, \bmu}$, $\x_{\bmu'} \in \cP_{\rmR, l, \bmu'}$.
    \item \label{itm: patch intracluster distance}
      \tnbf{(Intracluster distance)} There exists a small $\eps_S \ge 0$ such that 
      $\norm{ \x_{\bmu} - \x_{\bmu}' } \le \eps_S$ for any $\bmu \in \cP_l$ 
      and $\x_{\bmu}, \x_{\bmu}' \in \cP_{\rmR, l, \bmu}$.
  \end{enumerate}
\end{assumption}

Recall from Section~\ref{subsec: trianing a single layer} that our goal is to show that $\W_{N, T}$ will approximately 
map $\x_{\bmu}$ to (a rescaled version of) $\q_{\bmu}$. In addition, recall that the optimal empirical solution 
is given by 
\[
  \W_{N, *}
  = \hat\E\left[ \e_{\hat\zeta} \hat{\x}_{\hat\bmu}\trans \right]
    \left(
      \hat\E\left[ \hat{\x}_{\hat\bmu} \hat{\x}_{\hat\bmu}\trans \right]
      + \lambda_W \Id_{d_{\x}}
    \right)\inv. 
\]
Our first step is to factor out the error introduced by the non-orthogonality. As shown in Lemma~\ref{lemma: error 
propagation in the RF layer}, it is easy to make the near-orthogonality error $\eps_O$ essentially arbitrarily small, 
so we do not have to strive for sharp estimations.

\begin{lemma}[Factoring out the near-orthogonality error]
  \label{lemma: empirical solution (factoring out epsO)}
  Choose $\lambda_W = 1/|\cP_l|$. 
  Under Assumption~\ref{assumption: patch}, the optimal empirical solution satisfies
  \[
    \W_{N, *}\x_{\bmu}
    = \hat{p}_{\bmu}  
      \left(
        \hat{\E}[ \hat\x_{\bmu} ]\trans
        \left( \hat{p}_{\bmu} \hat\E[ \hat{\x}_{\bmu}\hat{\x}_{\bmu}\trans ] + \lambda_W \Id_{d_x} \right)\inv
        \x_{\bmu}
      \right)
      \hat{\q}_{\bmu}
      \pm_2 6 |\cP_l|^2 \eps_O, 
  \]
  for any $\bmu \in \cP_l$ and $\x_{\bmu} \in \cP_{\rmR, l, \bmu}$.
\end{lemma}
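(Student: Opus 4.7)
My plan is to exploit the dual form of the ridge-regression minimizer so that the near-orthogonality of the patches enters through a single resolvent perturbation. Let $X \in \R^{N \times d_x}$ collect the sample patch embeddings $\hat{\x}^{(n)}$ as rows and $Y \in \R^{N \times d_y}$ the one-hot labels $\e_{\hat\zeta^{(n)}}$. Setting the gradient of the regularized loss to zero yields $\W_{N,*} = \tfrac{1}{N} Y^\top X (X^\top X/N + \lambda_W \Id_{d_x})^{-1}$, and the Woodbury identity $A(A^\top A + \lambda \Id)^{-1} = (AA^\top + \lambda \Id)^{-1} A$ converts this into the dual form
\[
  \W_{N,*}\x_{\bmu} = \frac{1}{N}\, Y^\top \big(XX^\top/N + \lambda_W \Id_N\big)^{-1} X\x_{\bmu}.
\]
This shifts all patch geometry into the $N \times N$ Gram matrix $XX^\top$ and the vector $X\x_{\bmu} \in \R^N$, both of which admit a clean decomposition into on-cluster and off-cluster parts dictated by Assumption~\ref{assumption: patch}.

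The first step is a resolvent perturbation against the block-diagonal ``idealized'' Gram $B$ whose $(\bmu',\bmu'')$-block equals $X_{\bmu'} X_{\bmu''}^\top$ when $\bmu' = \bmu''$ and vanishes otherwise, where $X_{\bmu'}$ denotes the row-submatrix of $X$ for the $n_{\bmu'}$ samples whose first patch equals $\bmu'$. Writing $XX^\top = B + E$, every entry of $E$ is a cross-cluster inner product, bounded by $\eps_O$ in absolute value, so $\norm{E}_2 \le \norm{E}_F \le N\eps_O$. Combined with $\lambda_W = 1/|\cP_l|$ and the fact that $\norm{(B/N + \lambda_W \Id)^{-1}}_2$ and $\norm{(XX^\top/N + \lambda_W \Id)^{-1}}_2$ are both at most $\lambda_W^{-1} = |\cP_l|$, the resolvent identity $(A+F)^{-1} = A^{-1} - A^{-1} F (A+F)^{-1}$ gives $\norm{(XX^\top/N + \lambda_W \Id)^{-1} - (B/N + \lambda_W \Id)^{-1}}_2 \le |\cP_l|^2 \eps_O$. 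Using $\norm{Y}_2, \norm{X}_2 \le \sqrt{N}$ (rows are unit vectors), sandwiching this perturbation between $\tfrac{1}{N}Y^\top$ and $X\x_{\bmu}$ contributes at most $|\cP_l|^2 \eps_O$ to the target bound.

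The second step replaces $X\x_{\bmu}$ by its restriction to the $\bmu$-cluster rows. Entries outside the $\bmu$-block are cross-cluster inner products bounded by $\eps_O$, so the off-block piece has norm at most $\sqrt{N}\eps_O$; passing it through $(B/N + \lambda_W\Id)^{-1}$ and $\tfrac{1}{N}Y^\top$ loses a further factor $\lambda_W^{-1} = |\cP_l|$, contributing an additional $|\cP_l|\eps_O \le |\cP_l|^2\eps_O$. What remains is an exact computation on the $\bmu$-block alone: the $\bmu$-block of $(B/N + \lambda_W \Id_N)^{-1}$ is $(X_\bmu X_\bmu^\top/N + \lambda_W \Id_{n_\bmu})^{-1}$, and a second application of Woodbury gives
\[
  \tfrac{1}{N}\, Y_\bmu^\top \big(X_\bmu X_\bmu^\top/N + \lambda_W \Id_{n_\bmu}\big)^{-1} X_\bmu \x_\bmu
  = \tfrac{1}{N}\, Y_\bmu^\top X_\bmu \big(X_\bmu^\top X_\bmu/N + \lambda_W \Id_{d_x}\big)^{-1} \x_\bmu.
\]
Conditional independence of $\hat\zeta$ and $\hat{\x}_{\bmu}$ given $\hat\bmu = \bmu$ (cf.~\eqref{eq: hat E e xT}) gives $\tfrac{1}{N} Y_\bmu^\top X_\bmu = \hat p_\bmu\, \hat\q_\bmu\, \hat\E[\hat\x_\bmu]^\top$, and $X_\bmu^\top X_\bmu/N = \hat p_\bmu\, \hat\E[\hat\x_\bmu\hat\x_\bmu^\top]$, reproducing the stated main term exactly. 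Collecting the two error contributions gives $2|\cP_l|^2 \eps_O$, comfortably inside the claimed $6|\cP_l|^2 \eps_O$.

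I expect the main subtlety to lie in the first step: the entrywise bound on $E$ must be converted to an operator-norm bound while retaining the $\lambda_W^{-2}$ amplification from the two resolvents, and crucially without picking up any stray dependence on the intracluster parameter $\eps_S$ (otherwise the lemma would no longer cleanly factor out $\eps_O$ from the intracluster error). Working in the dual is what makes this clean: $\eps_S$ only enters through the $\bmu$-block, where it is absorbed into the preserved factor $\hat p_{\bmu}\hat\E[\hat\x_\bmu \hat\x_\bmu^\top]$, whereas a direct primal argument on $d_x \times d_x$ matrices would require tracking near-orthogonality across the subspaces $\mathrm{span}(\cP_{\rmR, l, \bmu})$, whose mutual geometry degrades when $\eps_S > 0$ and forces a more delicate analysis.
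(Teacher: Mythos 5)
Your proof is correct and takes essentially the same route as the paper's: pass to the dual (Gram-matrix) form via the push-through identity, split the Gram matrix into a block-diagonal part plus a cross-cluster perturbation bounded in operator norm by $N\eps_O$, apply the resolvent identity to factor out the $\eps_O$ error, discard the off-$\bmu$ contribution of $X\x_{\bmu}$, and use push-through again on the $\bmu$-block to recover the primal expression. The only differences are cosmetic (you normalize the dual form by $N$ throughout, and you track constants a bit more tightly, landing at $2|\cP_l|^2\eps_O$ instead of the paper's looser $3(\lambda_W^{-2}+|\cP_l|^2)\eps_O = 6|\cP_l|^2\eps_O$).
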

\begin{proof}
  \def\currentprefix{proof: empiricla cov, lka;dmf}
  Let $\hat{\X} \in \R^{d_x \times N}, \hat\bE \in \R^{d_y \times N}$ represent our dataset, where each column 
  of $\hat{\X}$ and $\hat\bE$ corresponds to one $\hat{\x}_{\hat\bmu}$ and $\e_{\hat\zeta}$, respectively. 
  Though not strictly necessary, one should assume we group different representations of $\bmu \in \cP_l$ together, so 
  that $\hat{\X}$ and $\hat\bE$ have a natural block structure. 
  Then, using the push-through identity, we can rewrite $\W_{N, *}$ in the dual form as 
  \begin{equation}
    \locallabel{WN*}
    \W_{N, *}
    = \frac{1}{N} \hat\bE \hat{\X}\trans
      \left(
        \frac{1}{N} \hat{\X}\hat{\X}\trans
        + \lambda_W \Id_{d_{\x}}
      \right)\inv 
    = \hat\bE 
      \left( \hat{\X}\trans\hat{\X} + \lambda_W N \Id_N \right)\inv 
      \hat{\X}\trans.
  \end{equation}
  Let $\G := \hat{\X}\trans\hat{\X}$ denote the Gram matrix. Note that $G_{i, j} = \inprod{\hat{\x}_i}{\hat{\x}_j}$
  for all $i, j \in [N]$, where $\hat{\x}_k$ is the $k$-th column of $\hat{\X}$. 
  We write $i \sim j$, if $\hat{\x}_i, \hat{\x}_j$ are representations of the same input patch $\bmu$. 
  Recall from Assumption~\ref{assumption: patch}\ref{itm: patch near orthogonality} that 
  $\hat{\x}_i, \hat{\x}_j$ are near-orthogonal if $i \not\sim j$. 
  Hence, it is natural to consider $\tilde{\G} := \left[ \inprod{\hat{\x}_i}{\hat{\x}_j} \indi\{i \sim j\}  \right]_{i, j \in [N]}$
  and write $\bDelta := \G - \tilde{\G}$. By Assumption~\ref{assumption: patch}\ref{itm: patch near 
  orthogonality}, the entries of $\bDelta$ are bounded by $\eps_O$. Hence, $\norm{\bDelta}_2 \le N \eps_O$ and, 
  therefore, by the Woodbury matrix identity,
  \begin{align*}
    \left( \G + \lambda_W N \Id_N \right)\inv 
    = \left( \tilde\G + \lambda_W N \Id_N + \bDelta  \right)\inv 
    &= \left( \tilde\G + \lambda_W N \Id_N \right)\inv 
      \pm_2 \frac{\norm{\bDelta}_2}{\lambda_W^2 N^2} \\
    &= \left( \tilde\G + \lambda_W N \Id_N \right)\inv 
      \pm_2 \frac{\eps_O}{\lambda_W^2 N},
  \end{align*}
  where $\A = \mbf{B} \pm_2 \delta$ means $\norm{\A - \mbf{B}}_2 \le \delta$.
  Combining this with (\localref{WN*}) and Assumption~\ref{assumption: patch}\ref{itm: patch unit norm}, we obtain 
  \[
    \W_{N, *}
    = \hat\bE 
      \left( \tilde\G + \lambda_W N \Id_N \right)\inv 
      \hat{\X}\trans
      \pm \hat\bE 
      \left(
        \pm_2 \frac{\eps_O}{\lambda_W^2 N}
      \right)
      \hat{\X}\trans
    =  \hat\bE 
      \left( \tilde\G + \lambda_W N \Id_N \right)\inv 
      \hat{\X}\trans
      \pm_2 \frac{\eps_O}{\lambda_W^2},
  \]
  as the spectral norms of $\hat\bE, \hat{\X}$ are bounded by $\sqrt{N}$.
  Note that, by construction, $\tilde{\G}$ is a block diagonal matrix. 
  For each patch $\bmu \in \cP_l$, let $\tilde{\G}_{\bmu}, \Id_{\bmu} \in \R^{N \times N}, \hat{\X}_{\bmu} \in \R^{d_x \times N}$ 
  denote the block of $\tilde{\G}, \Id_N$ and $\hat{\X}$ corresponding to $\bmu$, respectively. That is, we set the 
  entries corresponding to other $\bmu'$ to $0$. This keeps the shape of the original matrices.
  Then, we can further rewrite the above as 
  \[
    \W_{N, *}
    = \sum_{\bmu', \bmu''} 
      \hat\bE 
      \left( \tilde\G_{\bmu'} + \lambda_W N \Id_{\bmu'} \right)^\dagger
      \hat{\X}_{\bmu''}\trans
      \pm_2 \frac{\eps_O}{\lambda_W^2} 
    = \sum_{\bmu' \in \cP_l} 
      \hat\bE 
      \left( \tilde\G_{\bmu'} + \lambda_W N \Id_{\bmu'} \right)^\dagger
      \hat{\X}_{\bmu'}\trans
      \pm_2 \frac{\eps_O}{\lambda_W^2},
  \]
  where the second equation comes from the fact that the block matrices 
  $\tilde\G_{\bmu'} + \lambda_W N \Id_{\bmu'}$ and $\hat{\X}_{\bmu''}\trans$ act on orthogonal subspaces when $\mu' \ne \mu''$.
  Fix arbitrary $\bmu \in \cP_l$ and $\x_{\bmu} \in \cP_{\rmR, l, \bmu}$ and write 
  \[
    \W_{N, *} \x_{\bmu}
    = \hat\bE 
        \left( \tilde\G_{\bmu} + \lambda_W N \Id_{\bmu} \right)^\dagger
        \hat{\X}_{\bmu}\trans
        \x_{\bmu} 
      + \sum_{\bmu': \bmu' \ne \bmu} 
        \hat\bE 
        \left( \tilde\G_{\bmu'} + \lambda_W N \Id_{\bmu'} \right)^\dagger
        \hat{\X}_{\bmu'}\trans
        \x_{\bmu}
      \pm_2 \frac{\eps_O}{\lambda_W^2}. 
  \]
  For the second term, again by the near-orthogonality, we have 
  \begin{align*}
    \norm{ 
      \sum_{\bmu': \bmu' \ne \bmu} 
        \hat\bE 
        \left( \tilde\G_{\bmu'} + \lambda_W N \Id_{\bmu'} \right)^\dagger
        \hat{\X}_{\bmu'}\trans
        \x_{\bmu}
    }
    &\le \sum_{\bmu': \bmu' \ne \bmu}  \sqrt{N}
      \frac{1}{\lambda_W N} \norm{ \hat{\X}_{\bmu'}\trans \x_{\bmu} } \\
    &\le \frac{1}{\lambda_W \sqrt{N}} \sum_{\bmu': \bmu' \ne \bmu}  \sqrt{N} \eps_O
    \le \frac{|\cP_l| \eps_O}{\lambda_W}.
  \end{align*}
  Therefore, 
  \begin{align*}
    \W_{N, *} \x_{\bmu}
    &= \hat\bE 
        \left( \tilde\G_{\bmu} + \lambda_W N \Id_{\bmu} \right)^\dagger
        \hat{\X}_{\bmu}\trans
        \x_{\bmu} 
      \pm_2 \frac{|\cP_l| \eps_O}{\lambda_W}
      \pm_2 \frac{\eps_O}{\lambda_W^2} \\
    &= \hat\bE 
        \left( \tilde\G_{\bmu} + \lambda_W N \Id_{\bmu} \right)^\dagger
        \hat{\X}_{\bmu}\trans
        \x_{\bmu} 
      \pm_2 3\left( \frac{1}{\lambda_W^2} + |\cP_l|^2 \right) \eps_O.
  \end{align*}
  Finally, note that $\tilde\G_{\bmu'} = \hat{\X}_{\bmu'}\trans\hat{\X}_{\bmu'}$. Hence, we can use the push-through 
  identity on the block to convert the above formula back into the primal form as 
  \begin{align*}
    \W_{N, *}\x_{\bmu}
    &= \hat\bE \hat{\X}_{\bmu}\trans
        \left( \hat{\X}_{\bmu}\hat{\X}_{\bmu}\trans + \lambda_W N \Id_{d_x} \right)\inv
        \x_{\bmu} 
      \pm_2 3\left( \frac{1}{\lambda_W^2} + |\cP_l|^2 \right) \eps_O \\
    &= \frac{1}{N} \hat\bE \hat{\X}_{\bmu}\trans
        \left( \frac{1}{N} \hat{\X}_{\bmu}\hat{\X}_{\bmu}\trans + \lambda_W \Id_{d_x} \right)\inv
        \x_{\bmu} 
      \pm_2 3\left( \frac{1}{\lambda_W^2} + |\cP_l|^2 \right) \eps_O \\      
    &= \hat{p}_{\bmu}  
      \left(
        \hat{\E}[ \hat\x_{\bmu} ]\trans
        \left( \hat{p}_{\bmu} \hat\E[ \hat{\x}_{\bmu}\hat{\x}_{\bmu}\trans ] + \lambda_W \Id_{d_x} \right)\inv
        \x_{\bmu}
      \right)
      \hat{\q}_{\bmu}
      \pm_2 3\left( \frac{1}{\lambda_W^2} + |\cP_l|^2 \right) \eps_O, 
  \end{align*}
  where in the last line, we have used the fact that 
  \[
    \frac{1}{N} \hat\bE\hat{\X}_{\bmu'}\trans 
    = \hat{p}_{\bmu'} \hat{\E}[ \e_{\hat\zeta} \hat{\x}_{\hat\bmu}\trans \mid \hat{\bmu} = \bmu' ]
    = \hat{p}_{\bmu'} \hat{\E}[ \e_{\hat\zeta} \mid \hat{\bmu} = \bmu' ]
      \E[ \hat{\x}_{\bmu'} ]\trans 
    = \hat{p}_{\bmu'} \hat{\q}_{\bmu'} \hat{\E}[ \hat\x_{\bmu'} ]\trans.
  \]
  To complete the proof, recall that we choose $\lambda_W = 1/|\cP_l|$. 
\end{proof}

We now estimate the difference between the optimal empirical solution $\W_{N, *}$ and the gradient descent solution
$\W_{N, T}$. Since the optimization task of each single level is strongly convex, standard analysis yields the following
bound. 

\begin{lemma}[Convergence of gradient descent]
  \label{lemma: convergence of gradient descent}
  Suppose that Assumption~\ref{assumption: patch} holds. 
  Let $\W_{N, T}$ be the gradient descent solution we obtain if we run gradient descent on \eqref{eq: training loss} 
  for $T$ steps from $\W = 0$ with step size $2|\cP_l|/(|\cP_l|+1)$ and $\lambda_W = 1/|\cP_l|$. Then, we have 
  \[
    \norm{ \W_{N, T} - \W_{N, *} }_2 
    \le \norm{ \W_{N, T} - \W_{N, *} }_F 
    \le e^{-T/|\cP_l|} \norm{ \W_{N, *} }_F 
    \le e^{-T/|\cP_l|} |\cP_l|.
  \]
  As a corollary, we have
  \begin{equation}
    \label{eq: WNT x, epsO out}
    \W_{N, T}\x_{\bmu}
    = \hat{p}_{\bmu}  
      \left(
        \hat{\E}[ \hat\x_{\bmu} ]\trans
        \left( \hat{p}_{\bmu} \hat\E[ \hat{\x}_{\bmu}\hat{\x}_{\bmu}\trans ] + \lambda_W \Id_{d_x} \right)\inv
        \x_{\bmu}
      \right)
      \hat{\q}_{\bmu}
      \pm_2 e^{-T/|\cP_l|} |\cP_l|
      \pm_2 6 |\cP_l|^2 \eps_O, 
  \end{equation}
  for any $\bmu \in \cP_l$ and $\x_{\bmu} \in \cP_{\rmR, l, \bmu}$.
\end{lemma}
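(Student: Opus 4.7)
The plan is to invoke standard linear convergence for gradient descent on a strongly convex smooth quadratic, and then bound the initializer-to-optimum distance via the closed form. Introduce $A := \hat\E[\hat\x_{\hat\bmu}\hat\x_{\hat\bmu}\trans] + \lambda_W \Id_{d_x}$ so that $\nabla_\W \Loss(\W) = \W A - \hat\E[\e_{\hat\zeta}\hat\x_{\hat\bmu}\trans]$ and $\W_{N,*} = \hat\E[\e_{\hat\zeta}\hat\x_{\hat\bmu}\trans] A^{-1}$. A single gradient step then gives the affine recursion
\begin{align*}
  \W_{t+1} - \W_{N,*} = (\W_t - \W_{N,*})(\Id_{d_x} - \eta A),
\end{align*}
so iterating and using $\W_0 = 0$ yields $\|\W_{N,T} - \W_{N,*}\|_F \le \|\Id_{d_x} - \eta A\|_2^T \,\|\W_{N,*}\|_F$. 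This is the only structural ingredient; everything else is spectral bookkeeping.

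The first bookkeeping step is to bound $\|\Id_{d_x} - \eta A\|_2$. By Assumption~\ref{assumption: patch}\ref{itm: patch unit norm}, $\|\hat\x_{\hat\bmu}\| = 1$, so $\mathrm{tr}(\hat\E[\hat\x_{\hat\bmu}\hat\x_{\hat\bmu}\trans]) \le 1$ and hence the eigenvalues of $A$ lie in $[\lambda_W,\,1+\lambda_W]$. With the stated step size, which is essentially $2/(1+\lambda_W)$ (up to the optimal-step-size rebalancing for strongly convex smooth quadratics), one gets $\|\Id_{d_x} - \eta A\|_2 \le 1 - \Theta(\lambda_W) \le 1 - 1/|\cP_l| \le e^{-1/|\cP_l|}$, producing the claimed contraction after $T$ iterations. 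The only delicate (minor) point is making the constant in the exponent come out exactly as $1/|\cP_l|$; this is standard and only depends on the step-size calibration.

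The second bookkeeping step is bounding $\|\W_{N,*}\|_F$. From $\W_{N,*} = \hat\E[\e_{\hat\zeta}\hat\x_{\hat\bmu}\trans]A^{-1}$, we have $\|A^{-1}\|_2 \le 1/\lambda_W = |\cP_l|$, while a Cauchy--Schwarz inequality applied entrywise gives
\begin{align*}
  \|\hat\E[\e_{\hat\zeta}\hat\x_{\hat\bmu}\trans]\|_F \le \sqrt{\hat\E\|\e_{\hat\zeta}\|^2 \cdot \hat\E\|\hat\x_{\hat\bmu}\|^2} = 1,
\end{align*}
since both factors are unit-norm. Submultiplicativity then gives $\|\W_{N,*}\|_F \le |\cP_l|$, and $\|\cdot\|_2 \le \|\cdot\|_F$ handles the spectral-norm statement.

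Finally, the corollary \eqref{eq: WNT x, epsO out} is immediate: for any $\bmu \in \cP_l$ and $\x_\bmu \in \cP_{\rmR, l, \bmu}$,
\begin{align*}
  \|(\W_{N,T} - \W_{N,*})\x_\bmu\| \le \|\W_{N,T} - \W_{N,*}\|_2\,\|\x_\bmu\| \le e^{-T/|\cP_l|} |\cP_l|,
\end{align*}
using $\|\x_\bmu\| = 1$, and substituting the expression for $\W_{N,*}\x_\bmu$ from Lemma~\ref{lemma: empirical solution (factoring out epsO)} produces the displayed form with the combined error budget $e^{-T/|\cP_l|}|\cP_l| + 6|\cP_l|^2\eps_O$. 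There is no real obstacle here: the whole argument is textbook ridge-regression convergence plus two elementary spectral bounds.
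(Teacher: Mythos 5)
Your proposal is correct and follows essentially the same route as the paper: establish linear contraction for the strongly convex, smooth ridge-regression objective (the paper invokes Nesterov's Theorem~2.1.15 where you write out the affine recursion explicitly, but these are the same fact), bound $\norm{\W_{N,*}}_F \le \lambda_W\inv\|\hat\E[\e_{\hat\zeta}\hat{\x}_{\hat\bmu}\trans]\|_F \le |\cP_l|$, and obtain the corollary by combining with Lemma~\ref{lemma: empirical solution (factoring out epsO)} using $\|\x_{\bmu}\|=1$. The only soft spot --- that the step size $2/(1+\lambda_W)$ and the contraction factor $e^{-1/|\cP_l|}$ require a small calibration since the Hessian eigenvalues lie in $[\lambda_W, 1+\lambda_W]$ --- is present to the same extent in the paper's own argument.
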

\begin{proof}
  First, we compute the gradient and Hessian of $\Loss_N$. Let $\bDelta \in \R^{d_{\y} \times d_{\x}}$ be a 
  small perturbation. Then, we compute 
  \begin{align*}
    \Loss_N(\W + \bDelta)
    &= \frac{1}{2} \hat\E \norm{ \e_{\hat\zeta} - \W\hat{\x}_{\hat\bmu} - \bDelta\hat{\x}_{\hat\bmu} }^2
      + \frac{\lambda_W}{2} \norm{\W + \bDelta}_F^2 \\
    &= \frac{1}{2} \hat\E \norm{ \e_{\hat\zeta} - \W\hat{\x}_{\hat\bmu} }^2
      + \frac{1}{2} \hat\E \norm{ \bDelta\hat{\x}_{\hat\bmu} }^2
      - \hat\E \inprod{ \e_{\hat\zeta} - \W\hat{\x}_{\hat\bmu} }{ \bDelta\hat{\x}_{\hat\bmu} }  \\
      &\qquad
      + \frac{\lambda_W}{2} \norm{\W }_F^2
      + \frac{\lambda_W}{2} \norm{ \bDelta}_F^2
      + \lambda_W \inprod{\W}{ \bDelta}  \\
    &= \Loss_N(\W)
      + \inprod{ 
          - \hat\E \left[ \e_{\hat\zeta} \hat{\x}_{\hat\bmu}\trans \right]
          + \W \left( \hat\E \left[ \hat{\x}_{\hat\bmu} \hat{\x}_{\hat\bmu}\trans \right] + \lambda_W \Id_{d_{\x}}  \right)
        }{ \bDelta } 
         \\
      &\qquad
      + \frac{1}{2} 
        \inprod{ 
          \bDelta 
          \left( \hat\E\left[\hat{\x}_{\hat\bmu} \hat{\x}_{\hat\bmu}\trans \right] + \lambda_W \Id_{d_{\x}} \right)
        }{ \bDelta }.
  \end{align*}
  Clear that $\Loss_N$ is $\lambda_W$-strongly convex w.r.t.~the Frobenius inner product. 
  In addition, since $\norm{\hat{\x}_{\hat{\bmu}}}\le 1$ a.s., we have 
  $\hat\E\left[ \hat{\x}_{\hat\bmu} \hat{\x}_{\hat\bmu}\trans \right] \preceq \Id_{d_{\x}}$. 
  In other words, $\Loss_N$ is $1$-smooth w.r.t.~the Frobenius inner product. The first part of lemma then follows 
  directly from Theorem~2.1.15 of \cite{nesterov_introductory_2004} and the na\"ive bound 
  \[
    \norm{\W_{N, *}}_F 
    = \norm{
        \hat\E\left[ \e_{\hat\zeta} \hat{\x}_{\hat\bmu}\trans \right]
        \left(
          \hat\E\left[ \hat{\x}_{\hat\bmu} \hat{\x}_{\hat\bmu}\trans \right]
          + \lambda_W \Id_{d_{\x}}
        \right)\inv 
      }_F \\ 
    \le \lambda_W\inv  \norm{ \hat\E\left[ \e_{\hat\zeta} \hat{\x}_{\hat\bmu}\trans \right] }_F 
    \le \lambda_W\inv 
    = |\cP_l|.
  \]
  The second part of this lemma comes from the first part and Lemma~\ref{lemma: empirical solution (factoring out epsO)}.
\end{proof}

Note that the term in the first parentheses of \eqref{eq: WNT x, epsO out} is a scalar. Hence, in principle, we 
can recover $\hat{\q}_{\bmu}$ up to error scaling with $e^{-T/|\cP_l|}$ and $\eps_O$ by normalizing $\W_{N, T}\x_{\bmu}$ 
with $\inprod{\One}{\W_{N, T}\x_{\bmu}}$. To this end, we need a lower bound on the coefficient of the first term, and 
this is where the small intracluster distances come into play.

\begin{lemma}
  \label{lemma: q coefficient lower bound}
  Suppose that Assumption~\ref{assumption: patch} holds and $\eps_S \le \lambda_W / 4$. Then, we have  
  \[
    \hat{\E}[ \hat\x_{\bmu} ]\trans
      \left( \hat{p}_{\bmu} \hat\E[ \hat{\x}_{\bmu}\hat{\x}_{\bmu}\trans ] + \lambda_W \Id_{d_x} \right)\inv
      \x_{\bmu}
    \ge \frac{1}{4(\hat{p}_{\bmu} + \lambda_W)}, 
  \]
  for any $\bmu \in \cP_l$ and $\x_{\bmu} \in \cP_{\rmR, l, \bmu}$.
\end{lemma}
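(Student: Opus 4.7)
\medskip
\noindent\textbf{Proof proposal.}
The plan is to reduce the quantity to a rank-one calculation via Sherman--Morrison and then handle the intracluster noise as a small perturbation of $\lambda_W \Id$. First, I would decompose the empirical second-moment matrix as
\[
  \hat\E\bigl[\hat\x_{\bmu}\hat\x_{\bmu}\trans\bigr]
  = \bar\x\,\bar\x\trans + \bSigma,
  \qquad
  \bar\x := \hat\E[\hat\x_{\bmu}], \quad
  \bSigma := \hat\E\bigl[(\hat\x_{\bmu} - \bar\x)(\hat\x_{\bmu} - \bar\x)\trans\bigr].
\]
By Assumption~\ref{assumption: patch}\ref{itm: patch intracluster distance} the cluster $\cP_{\rmR, l, \bmu}$ has diameter at most $\eps_S$, so $\norm{\hat{\x}_{\bmu} - \x_{\bmu}} \le \eps_S$ almost surely. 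This immediately yields $\norm{\bar{\x} - \x_{\bmu}} \le \eps_S$, $\norm{\bSigma}_2 \le \eps_S^2$, $\bar{\x}\trans\x_{\bmu} \ge 1 - \eps_S$, and $\norm{\bar{\x}}^2 \le (1 + \eps_S)^2$.

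Next, set $M := \hat p_{\bmu} \hat\E[\hat\x_{\bmu}\hat\x_{\bmu}\trans] + \lambda_W \Id$ and $\tilde\bLambda := \lambda_W \Id + \hat p_{\bmu} \bSigma$, so that $M = \hat p_{\bmu} \bar\x\bar\x\trans + \tilde\bLambda$ is a rank-one update of the PD matrix $\tilde\bLambda$, which satisfies $\lambda_W \Id \preceq \tilde\bLambda \preceq (\lambda_W + \hat p_{\bmu}\eps_S^2)\Id$. Applying Sherman--Morrison and sandwiching by $\bar\x\trans(\cdot)\x_{\bmu}$ gives the compact identity
\[
  \bar\x\trans M\inv \x_{\bmu}
  = \frac{\bar\x\trans \tilde\bLambda\inv \x_{\bmu}}{1 + \hat p_{\bmu}\bar\x\trans \tilde\bLambda\inv \bar\x}.
\]

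The remaining step is to bound the numerator from below and the denominator from above. For the numerator, since $\norm{\hat p_{\bmu} \bSigma / \lambda_W}_2 \le \eps_S^2/\lambda_W \le \lambda_W/16 < 1$ under the hypothesis $\eps_S \le \lambda_W/4$, I would use the Neumann expansion $\tilde\bLambda\inv = \lambda_W\inv \Id - \lambda_W^{-2}\hat p_{\bmu}\bSigma + O(\lambda_W^{-3}\hat p_{\bmu}^2\eps_S^4)$ and plug in the estimates $\bar\x\trans\x_{\bmu} \ge 1 - \eps_S$ and $|\bar\x\trans\bSigma\x_{\bmu}| \le 2\eps_S^2$ to conclude $\bar\x\trans \tilde\bLambda\inv \x_{\bmu} \ge (1-\eps_S)/\lambda_W - 2\hat p_{\bmu}\eps_S^2/\lambda_W^2 - o(\lambda_W\inv) \ge 1/(2\lambda_W)$, using $\hat p_{\bmu} \le 1$, $\lambda_W \le 1$, and $\eps_S \le \lambda_W/4$. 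For the denominator, the crude bound $\tilde\bLambda\inv \preceq \lambda_W\inv\Id$ suffices: $1 + \hat p_{\bmu}\bar\x\trans\tilde\bLambda\inv\bar\x \le 1 + \hat p_{\bmu}\lambda_W\inv(1+\eps_S)^2 \le 2(\lambda_W + \hat p_{\bmu})/\lambda_W$, where the last inequality again uses $\eps_S \le \lambda_W/4$ and $\lambda_W \le 1$. Dividing gives the claimed lower bound $1/(4(\hat p_{\bmu} + \lambda_W))$.

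The only non-routine part is checking that the chosen slack $\eps_S \le \lambda_W/4$ is tight enough for both (i) $\tilde\bLambda$ being a controlled perturbation of $\lambda_W\Id$ (so the Neumann series converges fast), and (ii) the constants $(1-\eps_S)$ and $(1+3\eps_S)$ losing only a factor of $2$ in the final ratio. Everything else is standard linear algebra once the rank-one+perturbation structure is extracted.
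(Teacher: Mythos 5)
Your proof is correct, and it reaches the same bound via the same core objects — the expectation–variance decomposition of $\hat\E[\hat\x_{\bmu}\hat\x_{\bmu}\trans]$ and the rank-one-plus-perturbed-identity structure — but the order in which you apply the two matrix-inversion tools is reversed relative to the paper. The paper first strips out the variance term $\hat{p}_{\bmu}\widehat\Var[\hat\x_{\bmu}]$ with an operator-norm perturbation bound (incurring a $\pm_2\,\hat{p}_{\bmu}\eps_S^2/\lambda_W^2$ error that it attributes, somewhat loosely, to the Woodbury identity) and then inverts $\hat{p}_{\bmu}\hat\E[\hat\x_{\bmu}]^{\otimes 2}+\lambda_W\Id$ exactly by the projection/eigendecomposition formula. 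You instead apply Sherman--Morrison exactly to the rank-one update of $\tilde\bLambda := \lambda_W\Id + \hat{p}_{\bmu}\bSigma$, arriving at the exact scalar identity $\bar\x\trans M\inv\x_{\bmu} = \bar\x\trans\tilde\bLambda\inv\x_{\bmu}/(1 + \hat{p}_{\bmu}\bar\x\trans\tilde\bLambda\inv\bar\x)$, and only then perturb $\tilde\bLambda\inv$ around $\lambda_W\inv\Id$ via Neumann. The constants check out (the numerator lower bound uses $\lambda_W\le 1$ and $\eps_S\le\lambda_W/4$, the denominator bound only needs $\tilde\bLambda\succeq\lambda_W\Id$), so you land at the same $1/(4(\hat{p}_{\bmu}+\lambda_W))$. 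Your route is arguably a touch cleaner because Sherman--Morrison hands you an exact scalar ratio to bound, whereas the paper's intermediate perturbation step lives at the matrix level; as a small simplification, you don't even need the second-order Neumann terms — the one-line resolvent bound $\norm{\tilde\bLambda\inv - \lambda_W\inv\Id}_2 \le \eps_S^2/\lambda_W^2$ suffices for the numerator.
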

\begin{proof}
  First, by the standard expectation-variance decomposition of the second moment and 
  Assumption~\ref{assumption: patch}\ref{itm: patch intracluster distance}, we have 
  \begin{align*}
    \left( \hat{p}_{\bmu} \hat\E[ \hat{\x}_{\bmu}\hat{\x}_{\bmu}\trans ] + \lambda_W \Id_{d_x} \right)\inv
    &= \left( 
        \hat{p}_{\bmu} \hat\E[ \hat{\x}_{\bmu} ]^{\otimes 2} 
        + \lambda_W \Id_{d_x} 
        + \hat{p}_{\bmu} \widehat\Var[\hat{\x}_{\bmu}]
      \right)\inv \\
    &= \left( 
        \hat{p}_{\bmu} \hat\E[ \hat{\x}_{\bmu} ]^{\otimes 2} 
        + \lambda_W \Id_{d_x} 
        \pm_2 \hat{p}_{\bmu} \eps_S^2
      \right)\inv \\
    &= \left( \hat{p}_{\bmu} \hat\E[ \hat{\x}_{\bmu} ]^{\otimes 2} + \lambda_W \Id_{d_x}  \right)\inv
      \pm_2 \frac{\hat{p}_{\bmu} \eps_S^2}{\lambda_W^2},
  \end{align*}
  where the last line comes from the Woodbury identity. Then, for the first term, we have 
  \begin{align*}
    \left( \hat{p}_{\bmu} \hat\E[ \hat{\x}_{\bmu} ]^{\otimes 2} + \lambda_W \Id_{d_x}  \right)\inv 
    &= \left( 
        \left( \hat{p}_{\bmu} \norm{ \hat\E[ \hat{\x}_{\bmu} ] }^2 + \lambda_W \right) 
          \overline{\hat\E[ \hat{\x}_{\bmu} ]}^{\otimes 2} 
        + \lambda_W \left( \Id_{d_x} - \overline{\hat\E[ \hat{\x}_{\bmu} ]}^{\otimes 2}   \right)
      \right)\inv \\
    &= \frac{1}{\hat{p}_{\bmu} \norm{ \hat\E[ \hat{\x}_{\bmu} ] }^2 + \lambda_W} 
          \overline{\hat\E[ \hat{\x}_{\bmu} ]}^{\otimes 2} 
        + \frac{1}{\lambda_W} \left( \Id_{d_x} - \overline{\hat\E[ \hat{\x}_{\bmu} ]}^{\otimes 2}   \right).
  \end{align*}
  As a result, we have 
  \[
    \hat{\E}[ \hat\x_{\bmu} ]\trans
      \left( \hat{p}_{\bmu} \hat\E[ \hat{\x}_{\bmu}\hat{\x}_{\bmu}\trans ] + \lambda_W \Id_{d_x} \right)\inv
    = \frac{
        \hat\E[ \hat{\x}_{\bmu} ]\trans 
      }{ \hat{p}_{\bmu} \norm{ \hat\E[ \hat{\x}_{\bmu} ] }^2 + \lambda_W } 
      \pm_2 \frac{\eps_S^2}{\lambda_W^2}.
  \]
  Finally, by Assumption~\ref{assumption: patch}\ref{itm: patch unit norm} and \ref{itm: patch intracluster distance}, 
  as long as $\eps_S \ll 1$, we have 
  \begin{align*}
    \hat{\E}[ \hat\x_{\bmu} ]\trans
    \left( \hat{p}_{\bmu} \hat\E[ \hat{\x}_{\bmu}\hat{\x}_{\bmu}\trans ] + \lambda_W \Id_{d_x} \right)\inv
    \x_{\bmu}
    &= \frac{
        \inprod{\hat\E[ \hat{\x}_{\bmu} ]}{ \x_{\bmu} } 
      }{ \hat{p}_{\bmu} \norm{ \hat\E[ \hat{\x}_{\bmu} ] }^2 + \lambda_W } 
      \pm \frac{\eps_S^2}{\lambda_W^2} \\
    &\ge \frac{ 1/2 }{ \hat{p}_{\bmu} + \lambda_W } \pm \frac{\eps_S^2}{\lambda_W^2}.
  \end{align*}
  For the last term to be lower bounded by $( 4(\hat{p}_{\bmu} + \lambda_W) )\inv$, it suffices to require
  \[
    \frac{\eps_S^2}{\lambda_W^2}
    \le \frac{1}{4(\hat{p}_{\bmu} + \lambda_W)}
    \quad\Leftarrow\quad 
    \eps_S^2 
    \le \frac{\lambda_W^2}{8}
    \quad\Leftarrow\quad 
    \eps_S 
    \le \frac{\lambda_W}{4}.
  \]
\end{proof}

Now, we prove the following main lemma of training a single layer by combining the above two lemmas and choosing 
$N$ sufficiently large that $\hat{\q}_{\bmu}$ and $\hat{p}_{\bmu}$ concentrate around their respective means
(cf.~Lemma~\ref{lemma: concentration of hat q and hat p}).

\begin{lemma}[Training $\W$]
  \label{lemma: train W}
  Suppose that Assumption~\ref{assumption: patch} holds and let $\delta_{\P}, \eps_* \in (0, 0.1)$ 
  be our target failure probability and accuracy, respectively. 
  Let $\W_{N, T}$ be the gradient descent solution we obtain if we run gradient descent on \eqref{eq: training loss} 
  for $T$ steps from $\W = 0$ with step size $2|\cP_l|/(|\cP_l|+1)$ and $\lambda_W = 1/|\cP_l|$. Suppose that 
  \begin{gather*}
    N \ge 8 \left( \kappa \vee \eps^{-2} \right) \log(4|\cP_l|/\delta_{\P}) , \quad 
    T \ge |\cP_l| \log\left( 100 |\cP_l| \kappa  \sqrt{d_y}  / \eps_* \right), \\ 
    \eps_S \le \frac{\lambda_W}{4} = \frac{1}{4 |\cP_l|}, \quad 
    \eps_O \le \frac{\eps_*}{600 |\cP_l|^2 \kappa \sqrt{d_y}}.
  \end{gather*}
  Then, we have with probability at least $1 - \delta_{\P}$ that 
  \[
    \frac{ \W_{N, T}\x_{\bmu} }{ \inprod{\One}{\W_{N, T}\x_{\bmu}} }
    = \q_{\bmu} \pm_2 \eps_*, \quad 
    \forall \bmu \in \cP_l, \, \x_{\bmu} \in \cP_{\rmR, l, \bmu}.
  \]
\end{lemma}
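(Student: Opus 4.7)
The plan is to combine the three building blocks already in place: the closed-form/GD-convergence expression for $\W_{N,T}\x_{\bmu}$ from Lemma~\ref{lemma: convergence of gradient descent}, the scalar lower bound from Lemma~\ref{lemma: q coefficient lower bound}, and the empirical-to-population concentration bounds (Lemma~\ref{lemma: concentration of hat q and hat p}) for $\hat{p}_{\bmu}$ and $\hat{\q}_{\bmu}$. Concretely, I would denote
$$\alpha_{\bmu} := \hat{p}_{\bmu}\,\hat{\E}[\hat{\x}_{\bmu}]^\top\!\bigl(\hat{p}_{\bmu}\hat{\E}[\hat{\x}_{\bmu}\hat{\x}_{\bmu}^\top]+\lambda_W\Id_{d_x}\bigr)^{-1}\x_{\bmu},$$
so that Lemma~\ref{lemma: convergence of gradient descent} yields $\W_{N,T}\x_{\bmu}=\alpha_{\bmu}\hat{\q}_{\bmu}+\mbf{r}_{\bmu}$ with $\|\mbf{r}_{\bmu}\|\le e^{-T/|\cP_l|}|\cP_l|+6|\cP_l|^2\eps_O$ uniformly in $\bmu\in\cP_l$ and $\x_{\bmu}\in\cP_{\rmR,l,\bmu}$. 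The assumed lower bound on $T$ and upper bound on $\eps_O$ in the hypotheses are exactly chosen to push this residual below $\eps_*/(100\kappa|\cP_l|\sqrt{d_y})$ or so.

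The next observation is that $\hat{\q}_{\bmu}$ is a probability vector, so $\langle\One,\alpha_{\bmu}\hat{\q}_{\bmu}\rangle=\alpha_{\bmu}$, and therefore $\langle\One,\W_{N,T}\x_{\bmu}\rangle=\alpha_{\bmu}+\langle\One,\mbf{r}_{\bmu}\rangle$ with $|\langle\One,\mbf{r}_{\bmu}\rangle|\le\sqrt{d_y}\,\|\mbf{r}_{\bmu}\|$. Provided the denominator stays positive, I would expand
$$\frac{\W_{N,T}\x_{\bmu}}{\langle\One,\W_{N,T}\x_{\bmu}\rangle}=\hat{\q}_{\bmu}+\frac{\mbf{r}_{\bmu}-\langle\One,\mbf{r}_{\bmu}\rangle\hat{\q}_{\bmu}}{\alpha_{\bmu}+\langle\One,\mbf{r}_{\bmu}\rangle},$$
which shows that the $\ell_2$ distance from $\hat{\q}_{\bmu}$ is controlled by $O(\sqrt{d_y}\,\|\mbf{r}_{\bmu}\|/\alpha_{\bmu})$ as soon as $\sqrt{d_y}\,\|\mbf{r}_{\bmu}\|\le \alpha_{\bmu}/2$.

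To lower bound $\alpha_{\bmu}$, I would invoke Lemma~\ref{lemma: q coefficient lower bound} (applicable since the hypothesis $\eps_S\le\lambda_W/4$ holds), getting the bracketed scalar at least $1/(4(\hat{p}_{\bmu}+\lambda_W))\ge 1/8$, so $\alpha_{\bmu}\ge\hat{p}_{\bmu}/8$. Under Assumption~\ref{assumption: rhm}\ref{itm: rhm non-degeneracy}, $p_{\bmu}\ge (\kappa|\cP_l|)^{-1}$; applying the empirical concentration of $\hat{p}_{\bmu}$ from Lemma~\ref{lemma: concentration of hat q and hat p} (which holds once $N\gtrsim\kappa\log(|\cP_l|/\delta_{\P})$) gives $\hat{p}_{\bmu}\ge(2\kappa|\cP_l|)^{-1}$, hence $\alpha_{\bmu}\ge(16\kappa|\cP_l|)^{-1}$. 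Plugging in the quantitative hypotheses on $T,\eps_O$ then yields $\|\mbf{r}_{\bmu}\|\sqrt{d_y}/\alpha_{\bmu}\le\eps_*/2$, and the same Lemma~\ref{lemma: concentration of hat q and hat p} applied to $\hat{\q}_{\bmu}$ (valid because $N\ge 8\eps^{-2}\log(4|\cP_l|/\delta_{\P})$) yields $\|\hat{\q}_{\bmu}-\q_{\bmu}\|\le\eps_*/2$, closing the triangle-inequality bound after a union bound over $\bmu\in\cP_l$.

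The only genuinely delicate step is the interplay between the $\sqrt{d_y}$ coming from the $\One$-inner product, the $\kappa|\cP_l|$ from non-degeneracy, and the $|\cP_l|^2$ from the near-orthogonality error, which all compound when we divide by the normalization; the hypotheses of the lemma are calibrated precisely to make $\sqrt{d_y}\,\|\mbf{r}_{\bmu}\|\ll\alpha_{\bmu}$ uniformly, which simultaneously keeps the denominator safely positive and ensures the linearized ratio expansion is valid. Everything else is bookkeeping: substituting the explicit bounds on $N$, $T$, $\eps_O$, $\eps_S$ and carrying an absolute-constant-friendly chain of inequalities through the above expansion.
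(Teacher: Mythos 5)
Your overall structure mirrors the paper's proof — decompose $\W_{N,T}\x_{\bmu}$ via Lemmas~\ref{lemma: convergence of gradient descent} and~\ref{lemma: q coefficient lower bound}, normalize using $\langle\One,\hat{\q}_{\bmu}\rangle=1$, and close with the concentration from Lemma~\ref{lemma: concentration of hat q and hat p}. But there is a genuine quantitative gap in your lower bound on the normalizing scalar, and with the bound you derive, the stated hypotheses on $T$ and $\eps_O$ are \emph{not} strong enough to conclude.

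Concretely: you bound the bracketed scalar by $\frac{1}{4(\hat{p}_{\bmu}+\lambda_W)} \ge \frac{1}{8}$ and then multiply by $\hat{p}_{\bmu} \ge (2\kappa|\cP_l|)^{-1}$, obtaining $\alpha_{\bmu} \ge (16\kappa|\cP_l|)^{-1}$. This throws away a factor of $|\cP_l|$. The point is that $\lambda_W = |\cP_l|^{-1}$ and $\hat{p}_{\bmu} \gtrsim (\kappa|\cP_l|)^{-1}$ are of the \emph{same} order, so one should keep the ratio together:
\begin{equation*}
\hat{p}_{\bmu}\cdot\frac{1}{4(\hat{p}_{\bmu}+\lambda_W)}
= \frac{1}{4\bigl(1 + \lambda_W/\hat{p}_{\bmu}\bigr)}
\ge \frac{1}{4(1+2\kappa)}
\ge \frac{1}{12\kappa},
\end{equation*}
i.e.\ $\alpha_{\bmu} = \Omega(1/\kappa)$, \emph{independent of} $|\cP_l|$. (The paper gets $\ge 1/(5\kappa)$ by slightly more careful bookkeeping.) With your bound, $\sqrt{d_y}\,\eps_{\Tmp}/\alpha_{\bmu} \lesssim 16\kappa|\cP_l|\sqrt{d_y}\cdot \frac{\eps_*}{50\kappa\sqrt{d_y}} \approx 0.3\,|\cP_l|\,\eps_*$, which is not $\le \eps_*/2$ — so the proof does not close. (Relatedly, your remark that the hypotheses push $\eps_{\Tmp}$ below ``$\eps_*/(100\kappa|\cP_l|\sqrt{d_y})$ or so'' is off; they only give $\eps_{\Tmp}\le \eps_*/(50\kappa\sqrt{d_y})$, with no extra $|\cP_l|$.) Replacing the weak bound by the sharper $\alpha_{\bmu}=\Omega(1/\kappa)$ fixes the argument and reconciles it with the stated hypotheses.
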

\begin{proof}
  First, by Lemma~\ref{lemma: convergence of gradient descent}, we have 
  \[
    \W_{N, T}\x_{\bmu}
    = \hat{p}_{\bmu}  
      \underbrace{ 
        \left(
          \hat{\E}[ \hat\x_{\bmu} ]\trans
          \left( \hat{p}_{\bmu} \hat\E[ \hat{\x}_{\bmu}\hat{\x}_{\bmu}\trans ] + \lambda_W \Id_{d_x} \right)\inv
          \x_{\bmu}
        \right)
      }_{=: \alpha_{\bmu}}
      \hat{\q}_{\bmu}
      \pm_2 \underbrace{
        \left(
          e^{-T/|\cP_l|} |\cP_l|
          + 6 |\cP_l|^2 \eps_O
        \right) 
      }_{=: \eps_{\Tmp}}
  \]
  By Lemma~\ref{lemma: q coefficient lower bound}, $\alpha_{\bmu} \ge 1/(4(\hat{p}_{\bmu} + \lambda_W))$. In addition, 
  by Lemma~\ref{lemma: concentration of hat q and hat p}, to ensure $\hat{p}_{\bmu} \ge (1 \pm 0.5)p_{\bmu}/2$ with 
  probability at least $1 - \delta_{\P}$, it suffices to choose 
  \[
    N \ge 8 \kappa |\cP_l| \log(4|\cP_l|/\delta_{\P}).
  \]
  Recall from Assumption~\ref{assumption: rhm}\ref{itm: rhm non-degeneracy} that $p_{\bmu} \ge (\kappa|\cP_l|)\inv$.
  Hence, we have 
  \[
    \hat{p}_{\bmu} \alpha_{\bmu}
    \ge \frac{p_{\bmu} / 2}{ 3 p_{\bmu}/2 + \lambda_W }
    \ge \frac{ (\kappa|\cP_l|)\inv }{ 3 (\kappa|\cP_l|)\inv + 2 |\cP_l|\inv }
    \ge \frac{1}{ 3 + 2 \kappa }
    \ge \frac{1}{5 \kappa}.
  \]
  Recall that $\hat{\q}_{\bmu}$ is a probability vector, so $\One\trans\hat{\q}_{\bmu} = 1$ always holds. 
  Hence, for the normalizing factor, we have 
  \[
    \inprod{\One}{\W_{N, T}\x_{\bmu}}
    = \hat{p}_{\bmu} \alpha_{\bmu} \pm \sqrt{d_y} \eps_{\Tmp}
    = \hat{p}_{\bmu} \alpha_{\bmu} 
      \left( 1 \pm 5 \kappa \sqrt{d_y} \eps_{\Tmp} \right).
  \]
  For the (relative) error to be at most $0.2$, it suffices to require 
  \[
    5 \sqrt{d_y} \eps_{\Tmp}
    \le 0.2
    \quad\Leftarrow\quad 
     \eps_{\Tmp}
    \le \left( 100 \kappa \sqrt{d_y} \right)\inv.
  \]
  Then, for the normalized output, we have 
  \begin{align*}
    \frac{ \W_{N, T}\x_{\bmu} }{ \inprod{\One}{\W_{N, T}\x_{\bmu}} }
    &= \frac{ \hat{p}_{\bmu} \alpha_{\bmu} \hat{\q}_{\bmu} \pm_2 \eps_{\Tmp} }{
        \hat{p}_{\bmu} \alpha_{\bmu} 
        \left( 1 \pm 5 \kappa \sqrt{d_y} \eps_{\Tmp} \right)
      } \\
    &= \hat{\q}_{\bmu} \left( 1 \pm 10 \kappa \sqrt{d_y} \eps_{\Tmp} \right) 
      \pm 10 \eps_{\Tmp} \\
    &= \q_{\bmu} 
      \pm_2 \norm{ \hat{\q}_{\bmu} - \q_{\bmu} }
      \pm_2 20 \kappa \sqrt{d_y} \eps_{\Tmp}  .
  \end{align*}
  By Lemma~\ref{lemma: concentration of hat q and hat p}, for $\norm{ \hat{\q}_{\bmu} - \q_{\bmu} } \le \eps_*/2$
  to hold, it suffices to require
  \[
    N \ge \frac{8 \log(4|\cP_l|/\delta_{\P})}{\eps_*^2}.
  \]
  In addition, for the last error term to be bounded by $\eps_*/2$, we can require
  \begin{align*}
    20 \kappa \sqrt{d_y} \eps_{\Tmp}
    \le \frac{\eps_*}{2}
    &\quad\Leftarrow\quad 
    e^{-T/|\cP_l|} |\cP_l| + 6 |\cP_l|^2 \eps_O
    \le \frac{\eps_*}{40 \kappa \sqrt{d_y}} \\
    &\quad\Leftarrow\quad 
    T
    \ge |\cP_l| \log\left( \frac{100 |\cP_l| \kappa \sqrt{d_y}}{\eps_*} \right), \quad 
    \eps_O
    \le \frac{\eps_*}{600 |\cP_l|^2 \kappa \sqrt{d_y}}
  \end{align*}
\end{proof}

\begin{lemma}[Concentration of $\hat{\q}_{\bmu}$ and $\hat{p}_{\bmu}$]
  \label{lemma: concentration of hat q and hat p}
  Let $\delta_{\P} \in (0, 1)$ be our target failure probability. We have with probability at least $1 - \delta_{\P}$
  that 
  \[
    \hat{p}_{\bmu}
    = \left( 1 \pm \sqrt{ \frac{2 \kappa |\cP_l| \log(4|\cP_l|/\delta_{\P})}{ N} } \right) p_{\bmu}, 
    \quad 
    \norm{ \hat\q_{\bmu} - \q_{\bmu} } 
    \le \sqrt{ \frac{2 \log(4|\cP_l|/\delta_{\P})}{N} }, \quad 
    \forall \bmu \in \cP_l.
  \]
\end{lemma}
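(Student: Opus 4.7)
The plan is to prove the two bounds separately via standard concentration inequalities and then combine them by a union bound over $\bmu \in \cP_l$, allocating failure budget $\delta_{\P}/2$ to each of the two statements.

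For the multiplicative deviation of $\hat{p}_{\bmu}$, the cleanest approach is to observe that $N\hat{p}_{\bmu}$ is the sum of $N$ i.i.d.~$\mrm{Bernoulli}(p_{\bmu})$ indicators, so the standard multiplicative Chernoff inequality yields
\[
  \P\bigl[\, \abs{\hat{p}_{\bmu} - p_{\bmu}} \ge t\, p_{\bmu} \,\bigr]
  \le 2\exp\bigl(-N p_{\bmu}\, t^2/3 \bigr)
  \quad \text{for } t \in (0, 1).
\]
The non-degeneracy condition (Assumption~\ref{assumption: rhm}\ref{itm: rhm non-degeneracy}) supplies the lower bound $p_{\bmu} \ge 1/(\kappa |\cP_l|)$, hence $N p_{\bmu} \ge N/(\kappa |\cP_l|)$. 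Setting the tail probability equal to $\delta_{\P}/(4|\cP_l|)$ and inverting for $t$ produces the claimed multiplicative deviation rate, up to absolute constants in the exponent that are absorbed into the logarithm.

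For the $\ell_2$-concentration of $\hat\q_{\bmu}$, I would condition on the random index set $S_{\bmu} := \{ i \in [N] : \hat{\bmu}_i = \bmu \}$. Given $S_{\bmu}$, the labels $\{\hat{\zeta}_i\}_{i \in S_{\bmu}}$ are conditionally i.i.d.~with distribution $\q_{\bmu}$ (this is exactly the definition of $\q_{\bmu}$), and $\hat\q_{\bmu}$ is the empirical mean of the unit-norm one-hot vectors $\e_{\hat{\zeta}_i}$. A vector-valued Hoeffding inequality for sums of independent bounded Hilbert-space vectors (equivalently, coordinate-wise Hoeffding together with a union bound over $\cV_0$ followed by an $\ell_2$ conversion) then yields a conditional tail bound on $\norm{\hat\q_{\bmu} - \q_{\bmu}}$ scaling as $\sqrt{C \log(|\cP_l|/\delta_{\P})/|S_{\bmu}|}$, and the first-part bound on $\hat{p}_{\bmu}$ lower-bounds $|S_{\bmu}| = N\hat{p}_{\bmu}$ by a constant multiple of $N p_{\bmu}$. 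A final union bound over $\bmu \in \cP_l$ introduces the $\log(4|\cP_l|/\delta_{\P})$ factor.

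The only subtlety worth noting is the coupling between the numerator $\sum_{i \in S_{\bmu}} \e_{\hat{\zeta}_i}$ and its random denominator $|S_{\bmu}|$ that appears implicitly in $\hat\q_{\bmu}$: conditioning on $S_{\bmu}$ cleanly decouples them, so the label averaging becomes a genuine i.i.d.~empirical mean, and only then does one pay the price of lower-bounding $|S_{\bmu}|$ via the first-part Chernoff bound. No step is technically hard once this decomposition is in place; the entire argument amounts to a pair of routine tail bounds plus a union bound of size $2|\cP_l|$.
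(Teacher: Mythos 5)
Your treatment of $\hat{p}_{\bmu}$ is essentially the paper's: write $N\hat{p}_{\bmu}$ as a sum of i.i.d.\ Bernoulli$(p_{\bmu})$ indicators, apply multiplicative Chernoff, and invoke the non-degeneracy lower bound $p_{\bmu}\ge(\kappa|\cP_l|)^{-1}$ to convert to the stated rate. That part matches.

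For $\hat\q_{\bmu}$, you take a genuinely different route from the paper, and the difference matters. The paper's proof is a one-line appeal to the vector Azuma--Hoeffding inequality of \cite{pinelis_optimum_1994}, stated with $N$ in the denominator of the resulting bound; it does not spell out the underlying martingale or address the fact that $\hat\q_{\bmu}$ is a \emph{ratio} estimator with a random denominator $|S_{\bmu}|$. You correctly single out precisely this coupling and resolve it by conditioning on $S_{\bmu}$, after which the labels become genuinely i.i.d.\ with mean $\q_{\bmu}$. That is a cleaner and more transparent argument. However, it produces a conditional bound scaling as $\sqrt{\log(\cdot)/|S_{\bmu}|}$, and after lower-bounding $|S_{\bmu}|=N\hat{p}_{\bmu}\gtrsim Np_{\bmu}$ via the first part, your final bound is of order $\sqrt{\log(\cdot)/(Np_{\bmu})}$. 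Since $p_{\bmu}$ can be as small as $(\kappa|\cP_l|)^{-1}$, this is weaker than the lemma's claimed $\sqrt{2\log(4|\cP_l|/\delta_{\P})/N}$ by a factor up to $\sqrt{\kappa|\cP_l|}$. So as written, your argument establishes a weaker statement than the lemma. Before concluding, you should either (a) identify a mechanism that actually replaces $|S_{\bmu}|$ with $N$ in the denominator (your own reasoning about conditional i.i.d.\ sampling suggests this is not available without extra structure), or (b) note explicitly that the bound you obtain is $\sqrt{C\log(\cdot)/(Np_{\bmu})}$ and verify that this weaker rate still suffices for the downstream uses in Lemma~\ref{lemma: train W} and Lemma~\ref{lemma: training a single layer} (it does, at the cost of an extra $\kappa|\cP_l|=\kappa Vm$ in the sample-complexity requirement, which is $d^{o(1)}$ in the regime of Proposition~\ref{prop: signal lower bound} and hence does not change the theorem's headline rate).
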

\begin{proof}
  Fix $\bmu \in \cP_l$. Note that $\hat{p}_{\bmu} \overset{d}{=} N\inv \sum_{i=1}^{N} Z_{\bmu, i}$ where $(Z_{\bmu, i})_i$
  are i.i.d.~$\mrm{Bernoulli}(p_{\bmu})$ variables. Recall from Assumption~\ref{assumption: rhm}\ref{itm: rhm non-degeneracy} 
  that $p_{\bmu} \ge ( \kappa |\cP_l|)\inv$. Therefore, for any $\delta_{\P} \in (0, 1)$, the standard Chernoff yields
  \[
    \hat{p}_\mu 
    = \left( 1 \pm \sqrt{ \frac{2 \kappa |\cP_l| \log(2/\delta_{\P})}{ N} } \right) p_{\bmu}, \quad 
    \text{with probability at least $1 - \delta_{\P}$}.
  \]
  Similarly, since $\norm{ \e_{\hat\zeta} }_2 \le 1$, by the vector version of the Azuma-Hoeffding inequality
  (cf.~Theorem 3.5 of \cite{pinelis_optimum_1994}), we have 
  \[
    \norm{ \hat\q_{\bmu} - \q_{\bmu} } 
    \le \sqrt{ \frac{2 \log(2/\delta_{\P})}{N} }, 
    \quad \text{with probability at least $1 - \delta_{\P}$}.
  \]
  Take union bound over $\bmu \in \cP_l$, and we complete the proof.
\end{proof}

\subsection{Shallow-to-deep chaining}
\label{subsec: appendix: chaining}

In this subsection, we chain the results obtained in the previous subsections to prove Theorem~\ref{thm: opt main}.
First, we prove Lemma~\ref{lemma: training a single layer} (restated below), which determines how small the level-$(l+1)$ accuracy needs to be in order 
for achieving accuracy $\eps\ps{l}_*$ at level $l$ to be possible. This is a direct corollary of Lemma~\ref{lemma: 
error propagation in the RF layer} and Lemma~\ref{lemma: train W}.

\TrainingASingleLayer*
\begin{proof}
  First, by Lemma~\ref{lemma: train W}, we need 
  \begin{gather*}
    N \ge 8 \left( \kappa \vee (\eps\ps{l}_*)^{-2} \right) \log(4|\cP_l|/\delta_{\P}) , \quad 
    T \ge |\cP_l| \log\left( 100 |\cP_l|  \kappa \sqrt{d_y}  / \eps\ps{l}_* \right), \\ 
    \eps_S \le \frac{1}{4 |\cP_l|}, \quad 
    \eps_O \le \frac{\eps\ps{l}_*}{600 |\cP_l|^2 \kappa \sqrt{d_y}}.
  \end{gather*}
  For simplicity, we will choose $\eps_S, \eps_O$ so that the equalities hold. Now, we use Lemma~\ref{lemma: error 
  propagation in the RF layer} to convert these to conditions on $\eps\ps{l+1}_*$. 
  Recall from Assumption~\ref{assumption: rhm}\ref{itm: rhm nontrivial signals} that $\rho\ps{l+1}$ is a lower bound 
  for $\| \q\ps{l+1}_{\bmu} - \q\ps{l+1}_{\bmu'} \| $ when $\bmu, \bmu'$ are not synonyms. Hence, if the level-$(l+1)$
  accuracy $\eps\ps{l+1}_*$ is upper bounded by $\rho\ps{l+1}/2$, we can ensure the level-$l$ intercluster distance 
  is at least $\rho\ps{l+1} - \rho\ps{l+1}/2 = \rho\ps{l+1}/2$. In other words, at level-$l$, Assumption~\ref{assumption: 
  token embeddings}\ref{itm: token intercluster distance} holds with $\tilde{\rho} = \rho\ps{l+1}/2$.
  
  In particular, by Lemma~\ref{lemma: error propagation in the RF layer}, this implies we can choose 
  $\sigma\ps{l}$ as follows to ensure the patch-level orthogonality error is at most $\eps_O\ps{l}$:
  \[
    ( \sigma\ps{l} )^2 
    = \frac{2 ( \rho\ps{l+1}/2 )^2}{\log( 2 / \eps_O\ps{l} )}
    = \frac{( \rho\ps{l+1} )^2}{2 \log\left( 1200 |\cP_l|^2 \kappa \sqrt{d_y} / \eps\ps{l}_*  \right)}
  \]
  This imposes a condition on the intracluster distance $\tilde{\eps}_S$. Again by Lemma~\ref{lemma: error propagation 
  in the RF layer}, it suffices to have 
  \[
    \tilde{\eps}_S 
    \le \frac{\eps_S \sigma\ps{l}}{\sqrt{s}}
    = \frac{ \rho\ps{l+1} }{\sqrt{ 32 |\cP_l|^2 s \log\left( 1200 |\cP_l|^2 \kappa \sqrt{d_y} / \eps\ps{l}_*  \right)}}. 
  \]
  Note that the intracluster distance is at most $2 \eps\ps{l+1}_*$. Meanwhile, recall from Assumption~\ref{assumption: rhm}
  that our RHM is $(V, m)$-uniform, so $|\cP_l| = V m$, and $d_y = |\cV_0| = V$. Thus, it is sufficient to require
  \[
    \eps\ps{l+1}_*
    \le \frac{ \rho\ps{l+1} }{\sqrt{ 200 V^2 m^2 s \log\left( 1200 V^{2.5} m^2 \kappa / \eps\ps{l}_*  \right)}}
    \;\Leftarrow\;
    \eps\ps{l+1}_*
    \le \frac{ \rho\ps{l+1} }{\sqrt{ 2000 V^2 m^2 s \log\left( V m \kappa / \eps\ps{l}_* \right)}}.
  \]
  Finally, the numbers of needed samples and gradient steps follow directly from Lemma~\ref{lemma: train W} and 
  the model width requirement comes from Lemma~\ref{lemma: error propagation in the RF layer}.
\end{proof}

Now, we prove our main optimization result (Theorem~\ref{thm: opt main}) using the above lemma. 
\OptMain*
\begin{proof}
  First, by the non-ambiguity, we can recover the ground-truth label as long as $\eps\ps{1}_* \le c$ for some small 
  constant $c > 0$. Recall from Assumption~\ref{assumption: rhm} that $\rho\ps{l} = K_\rho m^{-l/2}$. 
  By Lemma~\ref{lemma: training a single layer}, it suffices to choose the target accuracies so that 
  \[
    \eps\ps{l+1}_*
    \le \frac{ \rho\ps{l+1} }{\sqrt{ 2000 V^2 m^2 s \log( V m \kappa / \eps\ps{l}_* )}}
    = \frac{ K_\rho m^{-(l+1)/2} }{\sqrt{ 2000 V^2 m^2 s \log( V m \kappa / \eps\ps{l}_* )}}. 
  \]
  To this end, we can choose 
  \[
    \eps\ps{l}_*
    = \frac{ K_\rho m^{-l/2} }{ \sqrt{ C V^2 m^2 s  L\log m }}, 
  \]
  for some sufficiently large universal constant $C \ge 1$. By Lemma~\ref{lemma: training a single layer}, 
  this implies the number of samples needed at level $l$ satisfies
  \[
    N\ps{l} \le 20 C \kappa V^2 m^2 s L\log^2(V m \kappa /\delta_{\P}) K_\rho^{-2} m^l .
  \]
  Note that this is a geometric sequence. Thus, $\sum_{l=1}^{L} N\ps{l} \lesssim 
  \kappa V^2 m^2 s L\log^2(V m \kappa /\delta_{\P}) K_\rho^{-2} m^L$. The bounds on the model width and number of 
  gradient steps follow directly from the corresponding results in Lemma~\ref{lemma: training a single layer}.
\end{proof}

\newpage
\section{Signal lower bounds}
\label{sec: signal lower bounds}

In this section, we prove the following proposition, which verifies Assumption~\ref{assumption: rhm}
in the case where $L \sim \log d/\log\log d$ and $m, V, s \sim \log d$, and the production rules are randomly sampled, 
where $d$ is the length of the input sequence,\footnote{Note that $(\log d)^{\log d / \log\log d} = d$.} and the productions are sampled without replacement uniformly at random 
while maintaining the marginal distributions of the first tokens being uniform. 
Namely, for each $l \in [L]$ and $\mu \in \cV_l$, we first sample $\bmu_{1, \ge 2}, \dots, \bmu_{m \ge 2}$ from $\cV_l^{s-1}$ 
uniformly at random without replacement, and put $\{ (\mu, \bmu_{i, \ge 2}) \}_{i \in [m]} \subset \cV_l^s$ into the 
collection of candidate patches. Then, we associate the candidate patches randomly with $\nu \in \cV_{l-1}$ so that 
each $\nu$ is associated with exactly $m$ patches. For the distribution of the labels, we assume $\cD_0 = \Unif(\cV_0)$.

\begin{restatable}{rProposition}{rhoLowerBound}
  \label{prop: signal lower bound}
  Suppose that $m, V = \Theta(\log d)$ and $L = O(\log d/ \log\log d)$ and $\min\{m, V\} \gg \log^3(1/\delta_{\P})$.
  Suppose that the RHM productions are randomly sampled while maintaining the marginal distribution of the first token 
  at each level is uniform over the vocabulary.\footnote{Maintaining the uniformity will only decrease
  the signal size, since we will lose the variance coming from the denominator. We choose to maintain the uniformity as 
  it leads to a cleaner analysis.} 
  Then, with probability at least $1 - \delta_{\P}$, we have 
  $\| \q\ps{l}_{\bmu} - \q\ps{l}_{\bmu'} \| \ge (20 m)^{-(l-1)/2} \ge d^{-o(1)} m^{-(l-1)/2}$ for all $l \in [L]$ 
  and non-synonyms $\bmu, \bmu' \in \cP_l$. 
\end{restatable}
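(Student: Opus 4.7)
The plan is to reduce the patch-level claim to a token-level claim via non-ambiguity, then prove the token-level bound inductively on the tree depth using a second-moment-plus-concentration argument on a random recursion.

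\textbf{Step 1 (reduction).} Let $T_k\in\cV_k$ denote the first token at level $k$ of the generated sentence, and define $\tilde{\q}\ps{k}_\nu\in\Delta_{\cV_0}$ by $\tilde{\q}\ps{k}_\nu[\zeta]:=\P[T_0=\zeta\mid T_k=\nu]$. Any valid level-$l$ patch $\bmu$ is produced by a unique level-$(l-1)$ token $\nu(\bmu)$, and by the downward Markov property of the generative tree the children of $T_{l-1}$ are conditionally independent of $T_0$ given $T_{l-1}$. Hence $\q\ps{l}_{\bmu}=\tilde{\q}\ps{l-1}_{\nu(\bmu)}$, and non-synonymous $\bmu\ne\bmu'$ satisfy $\nu(\bmu)\ne\nu(\bmu')$, so it suffices to show $\|\tilde{\q}\ps{k}_\nu-\tilde{\q}\ps{k}_{\nu'}\|\ge(20m)^{-k/2}=:\rho_k$ for all $k\in[L-1]_0$ and $\nu\ne\nu'\in\cV_k$.

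\textbf{Step 2 (random recursion).} Under the uniform-marginal condition both $T_k$ and $T_{k-1}$ are uniform on their vocabularies, so Bayes gives $\P[T_{k-1}=\nu''\mid T_k=\nu]=N^{(k)}(\nu'',\nu)/m$, where $N^{(k)}(\nu'',\nu)\in\bbN$ counts the $\nu''$-rules whose first child is $\nu$. The random matrix $N^{(k)}\in\bbN^{V\times V}$ has all row and column sums equal to $m$ (a uniform random $m$-bi-regular bipartite multigraph adjacency), and
$$\tilde{\q}\ps{k}_\nu-\tilde{\q}\ps{k}_{\nu'}=\frac{1}{m}\sum_{\nu''\in\cV_{k-1}}\bigl(N^{(k)}(\nu'',\nu)-N^{(k)}(\nu'',\nu')\bigr)\,\tilde{\q}\ps{k-1}_{\nu''},$$
with $\{\tilde{\q}\ps{k-1}_{\nu''}\}$ depending only on productions at levels strictly below $k-1$ and therefore independent of $N^{(k)}$. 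I would induct on $k$, the base case $k=0$ being immediate from $\tilde{\q}\ps{0}_\zeta=\e_\zeta$ (giving pairwise distances $\sqrt{2}\ge\rho_0$).

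\textbf{Step 3 (second moment plus concentration).} For the inductive step, I would condition on lower-level productions so $\{\tilde{\q}\ps{k-1}_{\nu''}\}$ is fixed with pairwise separation $\ge\rho_{k-1}$. Using the zero-sum identity $\sum_{\nu''}(N^{(k)}(\nu'',\nu)-N^{(k)}(\nu'',\nu'))=0$, one may replace $\tilde{\q}\ps{k-1}_{\nu''}$ by its centered version $\chi_{\nu''}:=\tilde{\q}\ps{k-1}_{\nu''}-\One/V$; a pigeonhole argument (at most one centered vector can have norm $<\rho_{k-1}/2$, else two of them would be $\rho_{k-1}$-close) gives $\sum_{\nu''}\|\chi_{\nu''}\|^2\gtrsim V\rho_{k-1}^2$. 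A second-moment computation using that each row of $N^{(k)}$ is close to $\mathrm{Multinomial}(m;\One/V)$, with entry variance $\Theta(m/V)$ and small covariance corrections of order $1/V$ from the column-sum constraint, then yields $\E_{N^{(k)}}[\|\tilde{\q}\ps{k}_\nu-\tilde{\q}\ps{k}_{\nu'}\|^2]\gtrsim\rho_{k-1}^2/m$. Concentration of this quadratic form in $N^{(k)}$, via Azuma--Hoeffding on a Doob martingale that reveals the candidate-to-parent assignment one patch at a time (or via Hanson--Wright after an independent-rows approximation), upgrades this to $\|\tilde{\q}\ps{k}_\nu-\tilde{\q}\ps{k}_{\nu'}\|^2\ge\rho_{k-1}^2/(20m)=\rho_k^2$ with per-pair failure probability at most $\exp(-c\min(m,V))$. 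A union bound over the $\le LV^2=(\log d)^{O(1)}$ pairs across all levels, combined with $\min(m,V)\gg\log^3(1/\delta_{\P})$, closes the induction.

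\textbf{Main obstacle.} The chief technical difficulty is the double-stochasticity of $N^{(k)}$: the column-sum constraint creates correlations that perturb both the variance computation and the concentration step away from the clean independent-rows multinomial case, and since $m,V=\Theta(\log d)$ the constraint corrections are not uniformly negligible. A clean workaround is to first prove the estimate for an independent-sampling surrogate model (each $\nu''$'s $m$ rules draw their first children i.i.d.\ uniformly from $\cV_k$, ignoring the column constraint) and then transfer to the uniform-marginal model by a coupling or switching argument losing only a polynomial factor in the failure probability, which the union bound absorbs. Under the surrogate the quadratic-form and concentration estimates reduce to standard calculations for sums of independent variables.
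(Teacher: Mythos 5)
Your Steps 1--2 are correct and match the paper: the reduction $\q\ps{l}_{\bmu}=\tilde{\q}\ps{l-1}_{\nu(\bmu)}$ is exactly the paper's move, and the identity $\tilde{\q}\ps{k}_\nu=\bP_1\cdots\bP_k\e_\nu$ (via row- and column-stochasticity of the $\bP_j$'s and Bayes) is the same. Where you genuinely diverge is the order in which the product is peeled. The paper peels the outermost (top-of-tree) factor: conditioning on $\bP_2,\dots,\bP_l$, it applies a one-step contraction lemma $\norm{\bP\w}^2\ge\norm{\w}^2/(20m)$ for a \emph{fixed} $\w\perp\One$, then chains. The invariant propagated is just $\norm{\w}$ and $\One\trans\w=0$; no pairwise-separation hypothesis is needed. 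The key structural payoff is that $\norm{\bP\w}^2=\sum_\zeta\tau_\zeta^2$ is a sum of nonnegative, bin-local terms, so the lower tail is controlled by a one-sided Bernstein martingale that reveals bins one at a time and uses conditional moments of each $\tau_\zeta$. You instead peel the innermost factor $\bP_k$, so you must lower-bound the cross-term-laden quadratic form $\w'\trans\mbf{M}_0\trans\mbf{M}_0\w'$ with $\mbf{M}_0=\bP_1\cdots\bP_{k-1}$ fixed and $\w'=\bP_k(\e_\nu-\e_{\nu'})$ random. Your second-moment calculation for this form is in fact correct --- since $\sum_{\nu''}D_{\nu''}=0$, $\sum_{\nu''}\chi_{\nu''}=0$, and the $D_{\nu''}$'s are exchangeable, the cross terms collapse and the mean is $\Theta(\rho_{k-1}^2/m)$.

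The gap is the concentration step, and it is not merely the double-stochasticity nuisance you flag. A single ball-swap moves $\frac1m\sum_{\nu''}D_{\nu''}\chi_{\nu''}$ by $O(1/m)$ in norm, hence moves the squared norm by $O(1/m)$ (since that vector has norm $O(1)$); with $\Theta(m)$ relevant balls, Azuma/McDiarmid on your exposure martingale gives a deviation of order $1/\sqrt{m}$, which swamps the mean $\Theta((20m)^{-k})$ already at $k=1$ and catastrophically so for $k\ge 2$. Hanson--Wright after an independence surrogate is also not obviously sufficient: the sub-gaussian parameter of $D_{\nu''}/m$ and the Frobenius norm of $\mbf{M}_0\trans\mbf{M}_0$ combine to a deviation bound of order $1/m$, and your induction hypothesis (pairwise separation of the $\tilde{\q}$'s, i.e.\ a \emph{lower} bound on diagonal entries) gives you no upper control on the spectrum or Frobenius norm of $\mbf{M}_0\trans\mbf{M}_0$, which you would need. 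The root cause is that $\norm{\mbf{M}_0\w'}^2=\sum_\zeta\bigl(\sum_{\nu''}(\mbf{M}_0)_{\zeta,\nu''}w'_{\nu''}\bigr)^2$ mixes all bins inside each square, so the clean bin-by-bin Bernstein martingale the paper uses is unavailable here. The fix is simply to adopt the paper's peeling order: your Steps 1--2 are unchanged, and the pairwise-separation induction is replaced by the weaker and cleaner norm-propagation invariant, for which the one-step lemma (paper's Lemma on one-step concentration, proved via conditional second and fourth moments of the bin averages $\tau_\zeta$ and a Bennett-type martingale bound) does all the work.
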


Since Assumption~\ref{assumption: rhm} concerns only the first patch at each level and the first patch is generated 
by the first token of the previous level, it suffices to consider the conditional distribution of the label given the 
first token. 

First, we introduce several notations. 
For a given RHM instance, let $\bP_l: \cV_{l-1} \times \cV_l \to \R_{\ge 0}$ denote the transition function 
at level $l-1$. That is, for any $\mu \in \cV_l$ and $\nu \in \cV_{l-1}$, $P_l(\nu, \mu)$ is the probability that 
the first token at level $l$ is $\mu$, given that the first token at level $l-1$ is $\nu$. We will also abuse the 
notations and use $\bP_l$ to denote the corresponding transition matrix. 

For any label $\zeta \in \cV_0$ and level-$l$ token $\mu$, we have 
\[
  \P\big[ \mu\ps{l}_1 = \mu \mid \text{label is } \zeta \big]
  = \e_\zeta\trans \bP_1 \cdots \bP_l \e_\mu.
\]
By the Bayes rule, this implies  
\[
  \P\big[ \text{label is } \zeta \mid \mu\ps{l}_1 = \mu \big]
  = \frac{
      \P\big[ \mu\ps{l}_1 = \mu \mid \text{label is } \zeta ]
      \P\big[ \text{label is } \zeta ]
    }{
      \sum_{\zeta' \in \cV_0} \P\big[ \mu\ps{l}_1 = \mu \mid \text{label is } \zeta_0 ]
      \P\big[ \text{label is } \zeta_0 ]
    } \\
  = \frac{ \e_\zeta\trans \bP_1 \cdots \bP_l \e_\mu }{ \One\trans \bP_1 \cdots \bP_l \e_\mu }.
\]
By definition, for any $k \in [l]$, $\bP_k$ is row stochastic.
In addition, since $\sum_\nu P_k(\nu, \mu)$ is equal to ratio of possible level-$k$ patches whose first token is $\mu$, 
by our sampling procedure, this is $1/V$. Therefore, $\bP_k$ is also column stochastic. In particular, this implies that 
$\One\trans\bP_k = 1$ and, therefore, $\One\trans \bP_1 \cdots \bP_l \e_\mu = 1$. Thus, we have 
\[
  \y\ps{l}_\mu
  := \left( \P\big[ \text{label is } \zeta \mid \mu\ps{l}_1 = \mu \big] \right)_{\zeta \in \cV_0}
  = \bP_1 \cdots \bP_l \e_\mu. 
\]
Note that the distribution of $\bP_k$ can be described using the balls-into-bins model as follows. 
The statement of it is essentially its proof. 

\begin{lemma}
  \label{lemma: P to balls into bins}
  Suppose that we have $V$ colors, labeled with $\mu \in [V] \cong \cV_l$. 
  For each color $\mu$, there are $m$ color-$\mu$ balls. 
  In addition, we have $V$ different bins, labeled with $\nu \in [V] \cong \cV_{l-1}$, and each of them has $m$ slots. 
  We randomly distribute the balls into the bins. 
  Let $N_{\nu, \mu}$ denote the number of color-$\mu$ balls in bin $\nu$. 
  Define $\bP \in \R^{V \times V}$ by $P_{\nu, \mu} = N_{\nu, \mu} / m$. 
  Then, we have $\bP_1, \dots, \bP_L \overset{\mrm{i.i.d.}}{\sim} \mrm{Law}(\bP)$.
\end{lemma}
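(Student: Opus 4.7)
The lemma is essentially a rephrasing of the production-rule sampling procedure in the language of balls and bins, so my plan is structural rather than computational. I will first unpack the randomness at a fixed level $l$ and identify the correspondence with the balls-into-bins model, then marginalize to get rid of the extra ``tail'' structure in the patches, and finally observe independence across levels.

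At a fixed level $l$, the sampling procedure produces, for each $\mu \in \cV_l$, exactly $m$ candidate patches $(\mu, \bmu_{1, \ge 2}), \dots, (\mu, \bmu_{m, \ge 2})$ whose first token is $\mu$. My plan is to think of these as $m$ \emph{color-$\mu$ balls}. Each $\nu \in \cV_{l-1}$ ends up associated with exactly $m$ production rules, which I will think of as the $m$ slots of \emph{bin $\nu$}. The ``random association'' step in the sampling procedure is by construction uniform over all ways to distribute the $mV$ candidate patches into the $V$ bins with $m$ slots per bin, and this is precisely the balls-into-bins allocation in the lemma.

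Next I will compute $\bP_l$ in this language. Because $\nu$ has exactly $m$ production rules and each is selected uniformly from the $m$ associated to it, $P_l(\nu, \mu)$ equals the fraction of $\nu$'s rules whose first token is $\mu$, which is exactly $N_{\nu, \mu}/m$. The one genuinely non-trivial point is that the candidate patches carry additional structure (the tails $\bmu_{i, \ge 2}$) that the abstract balls do not. I will handle this by noting that both the matrix $\bP_l$ and the integer array $(N_{\nu, \mu})_{\nu, \mu}$ are measurable with respect to the coarsened assignment of patches to bins \emph{grouped by first token}, i.e.\ they are invariant under permutations of the tails within each color class. Integrating out the tail randomness therefore leaves the joint law of $(N_{\nu, \mu})$ under the RHM sampling identical to that of the balls-into-bins model. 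This yields $\bP_l \overset{d}{=} \bP$ for every fixed $l$.

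Finally, the i.i.d.\ conclusion follows because the sampling procedure at each of the $L$ levels is carried out with fresh, independent randomness (both the tail sampling and the random association), and the law $\mrm{Law}(\bP)$ depends only on the parameters $(V, m)$, which are shared across levels by the $(V, m)$-uniformity assumption. I do not expect any serious obstacle; the only item that requires explicit justification is the marginalization argument of the previous paragraph, which amounts to checking the stated permutation-invariance and applying the tower property once.
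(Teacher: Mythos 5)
Your proposal is correct and matches the paper's intent: the paper gives no separate argument, remarking only that ``the statement of it is essentially its proof,'' and your write-up simply makes explicit the same correspondence (candidate patches with first token $\mu$ as color-$\mu$ balls, the $m$ rules of each $\nu$ as a bin, $P_l(\nu,\mu)=N_{\nu,\mu}/m$, tails marginalized out, fresh randomness across levels). No gaps; the extra care you take with the tail-invariance and level-wise independence is exactly the justification the paper leaves implicit.
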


Our goal is to lower bound $\norm{ \y\ps{l}_\mu - \y\ps{l}_{\mu'} }$ for $\mu \ne \mu'$.
Note that for any $\mu \ne \mu' \in \cV_l$, $\One\trans(\e_{\mu} - \e_{\mu'}) = 0$ and for any $\z \in \R^V$ with 
$\One\trans\z = 0$, we have $\One\trans(\bP_k\z) = \One\trans\z = 0$. As a result, it suffices to lower bound 
$\norm{ \bP\z }$ using $\|\z\|$ for $\z$ with $\One\trans\z = 0$, and then inductively lower bound 
$\norm{ \bP_1 ( \bP_2 \cdots \bP_l \z) }$.

\subsection{One-step concentration}

Let $\z \in \R^V$ be such that $\One\trans\z = 0$ and $\bP \in \R^{V \times V}$ be the random transition matrix described
in Lemma~\ref{lemma: P to balls into bins}. 
In this subsection, we will always identify $\cV_{l-1}$ with $[V]$ and this also induces an ordering of the bins. 
Define $\btau := \bP \z$. Note that, by definition, 
\[
  \tau_\nu 
  = [\bP\z]_\nu
  = \frac{1}{m} \sum_{\mu \in \cV_l} N_{\nu, \mu} z_\mu, \quad 
  \forall \nu \in \cV_{l-1}. 
\]

First, we show that $\tau_\nu$ can be equivalently viewed as a sample mean obtained by sampling without replacement. 
\begin{lemma}
  \label{lemma: sum N x = sum X}
  Let the population be the multiset 
  \[
    \cZ := \bigcup_{\mu \in \cV_l} \big\{ \underbrace{ z_\mu, \dots, z_\mu }_{m \text{ copies}} \big\}, \quad 
    |\cZ| = m V.
  \]
  Let $\{ Z_{\nu, s} \}_{\nu \in \cV_{l-1}, s \in [m]}$ be obtained by sampling without replacement from $\cZ$. We have 
  \[
    \btau 
    = \left( \frac{1}{m} \sum_{\mu \in \cV_l} N_{\nu, \mu} z_\mu \right)_{\nu \in \cV_{l-1}}
    \overset{d}{=} \left( \frac{1}{m} \sum_{s=1}^{m} Z_{\nu, s} \right)_{\nu \in \cV_{l-1}}.
  \]
\end{lemma}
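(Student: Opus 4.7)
The plan is to show that both sides arise from the same underlying uniform random bijection, so that the equality in distribution is essentially a tautology once the two processes are described in a common language.

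First, I would rephrase the balls-into-bins model of Lemma~\ref{lemma: P to balls into bins} by distinguishing the $m$ copies of each color. Label the $mV$ balls by pairs $(\mu, t) \in \cV_l \times [m]$ (where $t$ indexes the copies of color $\mu$) and the $mV$ slots by pairs $(\nu, s) \in \cV_{l-1} \times [m]$. The uniform random distribution of balls into bins (with bin capacity $m$) is equivalent to drawing a uniformly random bijection $\pi$ from the slot set to the ball set; letting $\mu(\nu, s)$ denote the color coordinate of $\pi(\nu, s)$, we have $N_{\nu, \mu} = |\{s \in [m] : \mu(\nu, s) = \mu\}|$, hence
\[
  \frac{1}{m} \sum_{\mu \in \cV_l} N_{\nu, \mu} z_\mu
  = \frac{1}{m} \sum_{s=1}^m z_{\mu(\nu, s)}.
\]

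Next, I would rewrite the sampling-without-replacement side the same way. Sampling all $mV$ elements of the multiset $\cZ$ without replacement is equivalent to picking a uniformly random permutation of the $mV$ distinguishable copies (again indexed by $(\mu, t)$); reading off this permutation in blocks of $m$ and indexing the $\nu$-th block by $s \in [m]$ produces a uniformly random bijection $\pi'$ between slots and balls, with $Z_{\nu, s} = z_{\mu'(\nu, s)}$ where $\mu'(\nu, s)$ is the color coordinate of $\pi'(\nu, s)$. Since both $\pi$ and $\pi'$ are uniform on the same finite set of bijections, the joint distributions of $(\mu(\nu, s))_{\nu, s}$ and $(\mu'(\nu, s))_{\nu, s}$ coincide, and the claimed equality in distribution follows by applying the deterministic map $(\mu(\nu, s))_{\nu, s} \mapsto (m^{-1} \sum_s z_{\mu(\nu,s)})_\nu$ to both.

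There is no real obstacle here — the only care needed is to set up the distinguishable-copy labeling so that ``uniform over bin assignments with capacity $m$'' and ``uniform permutation of the $mV$-element multiset, grouped into blocks of $m$'' are manifestly the same distribution over bijections. Once that coupling is written down, both vectors are the same deterministic functional of one uniformly random object.
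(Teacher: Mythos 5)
Your proof is correct and takes essentially the same approach as the paper: both arguments assign value $z_\mu$ to each of the $m$ distinguishable color-$\mu$ balls, observe that the balls-into-bins outcome and the without-replacement draw are the same uniform random object (a bijection from slots to labeled balls), and then read off both vectors as the same deterministic functional of it. Your write-up just makes the coupling more explicit than the paper's terse one-paragraph version.
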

\begin{proof}
  For $\mu \in \cV_l$, we assign value $z_\mu$ to each color-$\mu$ ball. Meanwhile, 
  for each bin $\nu \in \cV_{l-1}$ and slot $s \in [m]$, let $Z_{\nu, s}$ be the value of the ball in slot $(v, s)$. 
   Note that 
  $\sum_{s=1}^{m} Z_{\nu, s}$ is the sum of the values of the balls in bin $\nu$, which, by definition, is equal to 
  $\sum_{\mu \in \cV_l} N_{\nu, \mu} z_\mu$. Moreover, $\{ Z_{\nu, \mu} \}_{\nu, \mu}$ can be obtained by sampling 
  without replacement from the universe $\cZ$. 
\end{proof}

As the first application of the above formula, we now derive bounds on the conditional expectation
and variance of $\tau_\nu^2$. We will need the following Hoeffding's comparison lemma. In words, it says that 
for convex test functions, without-replacement samples are more concentrated than with-replacement samples. 
\begin{lemma}[Theorem 4 of \cite{hoeffding_probability_1963}]
  \label{lemma: hoeffding comparison}
  Let $\cZ \subset \R$ be a finite multiset. Fix $n \le |\cZ|$. 
  Let $X_1^{\wr}, \dots, X_n^{\wr}$ and $X_1^{\wor}, \dots, X_n^{\wor}$ be with- and without-replacement 
  samples from $\cZ$. Then, for any convex $g: \R \to \R$, we have 
  \[
    \E g\left( \sum_{i=1}^{n} X_i^{\wor} \right)
    \le \E g\left( \sum_{i=1}^{n} X_i^{\wr} \right)
  \]
\end{lemma}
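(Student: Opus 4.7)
My plan is to prove the inequality by a direct coupling argument: I will construct a joint distribution of $(X_i^{\wor})_{i \le n}$ and $(X_i^{\wr})_{i \le n}$, carrying the correct marginals, such that
\[
  \E\!\left[\sum_{i=1}^n X_i^{\wr} \,\Bigm|\, (X_i^{\wor})_{i \le n}\right] = \sum_{i=1}^n X_i^{\wor} \quad \text{almost surely}.
\]
Given such a coupling, conditional Jensen immediately yields
\[
  \E g\!\left(\sum_{i=1}^n X_i^{\wor}\right)
  = \E g\!\left(\E\!\left[\sum_i X_i^{\wr} \,\Bigm|\, (X_i^{\wor})\right]\right)
  \le \E\!\left[g\!\left(\sum_{i=1}^n X_i^{\wr}\right)\right],
\]
which is exactly the desired statement. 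The entire proof thus reduces to exhibiting such a martingale coupling.

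For the construction, enumerate the multiset $\cZ$ as $z_1, \ldots, z_N$ (with multiplicities when $\cZ$ is a genuine multiset), so that WR sampling yields counts $\mbf{N} = (N_1, \ldots, N_N) \sim \mrm{Multinomial}(n; 1/N, \ldots, 1/N)$ with $\sum_i X_i^{\wr} = \sum_i z_i N_i$, while WOR sampling is encoded by a uniform indicator vector $\mbf{B} = (B_1, \ldots, B_N)$ on $\{\mbf{b} \in \{0,1\}^N : \sum_i b_i = n\}$ (the multivariate hypergeometric law) with $\sum_i X_i^{\wor} = \sum_i z_i B_i$. The coupling is: first draw $\mbf{N}$; set $B_i := \indi\{N_i \ge 1\}$; and if the WR draws produced collisions, i.e.\ $\sum_i B_i < n$, augment $\mbf{B}$ by selecting $n - \sum_i B_i$ additional indices uniformly at random without replacement from $\{i : N_i = 0\}$ and flipping their $B_i$ from $0$ to $1$.

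Two facts follow from the observation that the joint law of $(\mbf{N}, \mbf{B})$ is invariant under every permutation of $[N]$ (the multinomial marginal is exchangeable, and the augmentation uses a uniform random subset). First, $\mbf{B}$ is exchangeable and supported on $\{\mbf{b} : \sum_i b_i = n\}$, hence uniform on that set -- so $\mbf{B}$ has the correct multivariate hypergeometric marginal. Second, conditional on $\mbf{B} = \mbf{b}$, the coordinates $(N_i)_{i : b_i = 1}$ are exchangeable (since any permutation of $[N]$ stabilizing $\mbf{b}$ permutes the active block $\{i : b_i = 1\}$ into itself), while the construction forces $N_i = 0$ almost surely on $\{i : b_i = 0\}$. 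Since $\sum_i N_i = n$ and the active block has exactly $n$ coordinates, exchangeability within this block gives $\E[N_i \mid \mbf{B} = \mbf{b}] = 1 = b_i$, whence $\E[\mbf{N} \mid \mbf{B}] = \mbf{B}$. Applying this to the linear functional $\mbf{k} \mapsto \sum_i z_i k_i$ yields $\E[\sum_i z_i N_i \mid \mbf{B}] = \sum_i z_i B_i$, which is exactly the martingale identity needed for the opening Jensen argument.

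The hard part is verifying the two permutation-invariance consequences above, namely that $\mbf{B}$ is uniform on the correct support and that $(N_i)_{i : b_i = 1} \mid \mbf{B} = \mbf{b}$ is exchangeable; both follow from the same symmetry principle, but must be checked carefully when $\cZ$ has repeated values (in which case $z_i$ and $z_j$ may be numerically equal even for $i \ne j$, yet the indices $i, j$ must still be treated as distinct for the exchangeability argument). Once these two invariance facts are in place, the rest is a one-line application of conditional Jensen, as in the opening display.
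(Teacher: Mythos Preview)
The paper does not prove this lemma; it is simply quoted as Theorem~4 of \cite{hoeffding_probability_1963} and used as a black box. So there is no ``paper's own proof'' to compare your argument against.

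Your coupling argument is correct. The two symmetry facts you single out as ``the hard part'' hold for exactly the reason you give: the joint law of $(\mbf{N}, \mbf{B})$ is invariant under the full symmetric group acting on indices, because the multinomial is exchangeable and the augmentation step (uniform random subset of $\{i : N_i = 0\}$) is permutation-equivariant. From this, (i) $\mbf{B}$ is exchangeable and supported on weight-$n$ indicator vectors, hence uniform there; and (ii) any permutation fixing $\mbf{b}$ fixes the conditional law of $\mbf{N}$ given $\mbf{B} = \mbf{b}$, so $(N_i)_{i : b_i = 1}$ is exchangeable. Since $N_i = 0$ on $\{i : b_i = 0\}$ and the $n$ active coordinates sum to $n$, the conditional mean is $\mbf{B}$, and conditional Jensen finishes. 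Your remark about repeated values in $\cZ$ is also well-taken: the symmetry argument operates on \emph{indices} $i \in [N]$, not on the values $z_i$, so multiplicities cause no trouble.

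For context, this ``extend the WR support to size $n$ by uniform augmentation'' coupling is one of the standard modern routes to Hoeffding's comparison; Hoeffding's original 1963 argument is phrased more combinatorially but establishes the same convex-order relation $S^{\wor} \le_{\mrm{cx}} S^{\wr}$.
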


One can apply the above lemma to moment generating functions to obtain following Chernoff bound. 
\begin{corollary}
  \label{cor: sample with replacement, more concentrated}
  Let $\cZ \subset \R$ be a finite multiset with $|\cZ|\inv \sum_{x \in \cZ} x =: \mu_{\cZ}$. Fix $n \le |\cZ|$. 
  Let $X_1^{\wr}, \dots, X_n^{\wr}$ and $X_1^{\wor}, \dots, X_n^{\wor}$ be with- and without-replacement 
  samples from $\cZ$. Then, for any $t \ge 0$, we have 
  \begin{align*}
    \P\left( \sum_{i=1}^{n} X_i^{\wor} - n \mu_{\cZ} \ge t  \right)
    &\le \inf_{\lambda \ge 0} e^{-\lambda t} \left( \E e^{\lambda \left( X_1^{\wr} - \mu_{\cZ} \right)} \right)^n, \\
    \P\left( \sum_{i=1}^{n} X_i^{\wor} - n \mu_{\cZ} \le - t  \right)
    &\le \inf_{\lambda \ge 0} e^{-\lambda t} \left( \E e^{-\lambda \left( X_1^{\wr} - \mu_{\cZ} \right)} \right)^n.
  \end{align*}
\end{corollary}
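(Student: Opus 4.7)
The plan is to derive the corollary by combining the standard Chernoff method with the Hoeffding comparison lemma stated immediately above it. This is a short, routine deduction, so the proposal will be brief.

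First, I would fix $\lambda \ge 0$ and apply Markov's inequality to the non-negative random variable $\exp(\lambda (\sum_{i=1}^n X_i^{\wor} - n\mu_{\cZ}))$, obtaining
\[
  \P\left( \sum_{i=1}^{n} X_i^{\wor} - n \mu_{\cZ} \ge t  \right)
  \le e^{-\lambda t} \, \E\,\exp\!\left( \lambda \sum_{i=1}^{n} \bigl( X_i^{\wor} - \mu_{\cZ} \bigr) \right).
\]
Writing $Y_i^{\wor} := X_i^{\wor} - \mu_{\cZ}$ and $Y_i^{\wr} := X_i^{\wr} - \mu_{\cZ}$, I observe that $(Y_i^{\wor})_i$ is a without-replacement sample from the centered multiset $\cZ - \mu_{\cZ} := \{x - \mu_{\cZ} : x \in \cZ\}$, and likewise $(Y_i^{\wr})_i$ is the corresponding with-replacement sample (centering amounts to a deterministic shift, which commutes with both sampling schemes).

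Next, I would invoke Lemma \ref{lemma: hoeffding comparison} with the convex test function $g(u) = e^{\lambda u}$ (convex for every real $\lambda$) applied to the centered populations. This yields
\[
  \E\,\exp\!\left( \lambda \sum_{i=1}^{n} Y_i^{\wor} \right)
  \le \E\,\exp\!\left( \lambda \sum_{i=1}^{n} Y_i^{\wr} \right)
  = \left( \E\,e^{\lambda (X_1^{\wr} - \mu_{\cZ})} \right)^n,
\]
where the final equality uses the independence of the with-replacement samples. Combining this with the Markov step and taking the infimum over $\lambda \ge 0$ gives the upper tail bound.

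For the lower tail, I would apply exactly the same argument to $-\sum_i Y_i^{\wor}$: Markov on $\exp(\lambda (n \mu_{\cZ} - \sum_i X_i^{\wor}))$ for $\lambda \ge 0$, followed by Hoeffding's comparison with the convex function $g(u) = e^{-\lambda u}$ applied to $\sum_i Y_i^{\wor}$ versus $\sum_i Y_i^{\wr}$, and finally independence to factor the with-replacement MGF as $(\E e^{-\lambda (X_1^{\wr} - \mu_{\cZ})})^n$. There is no real obstacle here; the only thing to verify is that $g$ remains convex in both signs of the exponent, which holds because $u \mapsto e^{\pm \lambda u}$ is convex for all real $\lambda$. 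This completes the proof.
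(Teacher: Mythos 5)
Your proposal is correct and is exactly the argument the paper intends: it states that the corollary follows by applying the Hoeffding comparison lemma to moment generating functions, which is precisely your Chernoff/Markov step followed by Lemma~\ref{lemma: hoeffding comparison} with the convex functions $u \mapsto e^{\pm\lambda u}$ and factorization of the with-replacement MGF by independence. No gaps; the centering detail is handled correctly and both tails go through as you describe.
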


\begin{lemma}[Number of used balls]
  \label{lemma: number of used balls}
  Identify $\cV_{l-1}$ with $[V]$ and let $N_{\le \nu, \mu}$ denote the number of color-$\mu$ balls used in the 
  first $\nu$ bins. Suppose that $\gamma \in (0, 1/2)$ and $\eps \le 0.1$. Then, for each fixed $\mu \in \cV_l$, 
  we have 
  \[
    \P\left( |N_{\le \gamma V, \mu} - \gamma m| \ge \eps \gamma m  \right)
    \le 2 e^{-\eps^2 \gamma m / 3}. 
  \]
\end{lemma}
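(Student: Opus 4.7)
My plan is to recognize $N_{\le \gamma V, \mu}$ as a hypergeometric random variable and then invoke Corollary~\ref{cor: sample with replacement, more concentrated} to compare with the binomial (with-replacement) case, for which the standard multiplicative Chernoff bound yields exactly the stated tail.

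More concretely, I would set things up as follows. Each ball is placed into a uniformly random empty slot, so filling the $\gamma V \cdot m$ slots of the first $\gamma V$ bins is equivalent to drawing $n := \gamma m V$ balls uniformly without replacement from the universe of $mV$ balls, of which exactly $m$ have color $\mu$. Letting $X_i^{\wor}$ be the indicator that the $i$-th drawn ball has color $\mu$, we then have
\[
  N_{\le \gamma V, \mu} \overset{d}{=} \sum_{i=1}^n X_i^{\wor}, \qquad \E\left[ \sum_{i=1}^n X_i^{\wor} \right] = n \cdot \frac{m}{mV} = \gamma m,
\]
so the claimed concentration is a statement about a hypergeometric sum around its mean $\gamma m$.

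Next I would apply the Hoeffding comparison Corollary~\ref{cor: sample with replacement, more concentrated} with the population multiset $\cZ$ consisting of $m$ ones (for color-$\mu$ balls) and $mV-m$ zeros. The with-replacement version $X_1^{\wr}, \dots, X_n^{\wr}$ is then a sequence of i.i.d.~$\mrm{Bernoulli}(1/V)$ variables with mean $\mu_{\cZ} = 1/V$, so $\sum_i X_i^{\wr} \sim \mrm{Binomial}(n, 1/V)$, whose mean is likewise $\gamma m$. The standard multiplicative Chernoff bound gives, for $\eps \in (0,1)$,
\[
  \P\left( \left| \sum_{i=1}^n X_i^{\wr} - \gamma m \right| \ge \eps \gamma m \right) \le 2 \exp\!\left( - \eps^2 \gamma m / 3 \right).
\]
Plugging the Chernoff MGF bound $\E e^{\pm \lambda (X_1^{\wr} - 1/V)} \le \exp(\lambda^2/(8))$ (or the sharper binomial MGF) into Corollary~\ref{cor: sample with replacement, more concentrated} transfers this same bound to the without-replacement sum, i.e., to $N_{\le \gamma V, \mu}$, yielding exactly the claim.

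I do not anticipate a real obstacle here: the only subtle point is that the Corollary is stated for one-sided deviations in terms of MGFs, so I would separately apply it to the upper and lower tails and union-bound (giving the factor of $2$). The assumptions $\gamma < 1/2$ and $\eps \le 0.1$ are only used to keep $\eps \in (0,1)$ and to ensure that the sampled size $n = \gamma m V$ is at most $|\cZ| = mV$, so that Lemma~\ref{lemma: hoeffding comparison} applies. Hence the proof reduces to two standard ingredients --- the hypergeometric reformulation and Hoeffding's comparison --- chained together with multiplicative Chernoff.
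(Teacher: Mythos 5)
Your proposal follows essentially the same route as the paper: reformulate $N_{\le \gamma V, \mu}$ as a sum of without-replacement Bernoulli indicators, pass to the with-replacement (binomial) case via Corollary~\ref{cor: sample with replacement, more concentrated}, and apply the multiplicative Chernoff bound, giving the two-sided bound with a union over the upper and lower tails.

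One small but real caveat: the parenthetical suggestion to plug in the sub-Gaussian MGF bound $\E e^{\pm\lambda(X_1^{\wr}-1/V)}\le e^{\lambda^2/8}$ would \emph{not} produce the stated rate. With that bound, the Chernoff optimization gives $\exp(-2t^2/n)$ with $t=\eps\gamma m$ and $n=\gamma m V$, i.e.\ $\exp(-2\eps^2\gamma m / V)$, which is weaker by a factor of $V$ in the exponent. To get $\exp(-\eps^2\gamma m/3)$ you must use the exact Bernoulli MGF
$\E e^{\lambda(\mathrm{Bernoulli}(1/V)-1/V)} = (1+(e^\lambda-1)/V)\,e^{-\lambda/V} \le \exp((e^\lambda-1-\lambda)/V)$,
whose $1/V$ cancels against the exponent $n=\gamma m V$, and then pick $\lambda=\log(1+\eps)$ and expand $(1+\eps)\log(1+\eps)$ — exactly the Poisson/Bennett-style calculation the paper carries out. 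You allude to this with ``or the sharper binomial MGF,'' so the fix is available in your write-up, but the Hoeffding route you mention first should be discarded as it does not give the claimed bound.
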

\begin{remark}
  We will eventually choose $\gamma = \Theta(1)$ and take union bound over $\poly(V, m)$ such events. 
  In the regime $V, m = \Theta(\log d)$, the overall failure probability can be bounded 
  $e^{ \Theta(\log\log d) - \eps^2 \Theta( \log d ) }$. In particular, for any $\nu \in [V/3, 2V/3]$ and 
  $\mu \in \cV_l$, by choosing $\gamma = \nu/V$, we have 
  \begin{equation}
    \label{eq: N <= nu mu approx nu m / V}
    N_{\le \nu, \mu} 
    = \frac{\nu m}{V} \pm \sqrt{ 2m\log(2/\delta_{\P}) }, 
    \quad\text{with probability at least $1 - \delta_{\P}$.}
  \end{equation}
\end{remark}
\begin{proof}
  Let $\gamma \in (0, 1/2)$ be a small constant to be determined later. Define $Z_{\nu, s} = 
  \indi\{ Z_{\nu, s} = z_\mu \}$ be the indicator of the event that the ball in slot $(\nu, s)$ is of color $\mu$. 
  Note that $N_{\le \gamma V, \mu} = \sum_{\nu=1}^{\gamma V} \sum_{s=1}^{m} Z_{\nu, s}$ and $\{ Z_{\nu, s} \}_{\mu, s}$
  are without-replacement samples from the multiset consisting of $m$ ones and $(V-1)m$ zeros. Hence, by 
  Corollary~\ref{cor: sample with replacement, more concentrated}, we have 
  \begin{align*}
    \P\left( N_{\le \gamma V, \mu} - \gamma m \ge \eps \gamma m  \right)
    &\le \inf_{\lambda \ge 0} e^{-\lambda \eps \gamma m} \left( \E e^{\lambda \left( \mrm{Bernoulli}(1/V) - 1/V \right)} \right)^{\gamma m V} \\
    &= \inf_{\lambda \ge 0} e^{-\lambda \eps \gamma m} \left( 
        \left( 1 + (e^{\lambda} - 1) / V \right)
        e^{- \lambda/V} 
      \right)^{\gamma m V} \\
    &\le \inf_{\lambda \ge 0} 
        \exp\left( 
          - (1 + \eps) \lambda \gamma m
          + (e^{\lambda} - 1) \gamma m 
        \right).
  \end{align*}
  Suppose that $\eps \le 0.1$. Pick $\lambda = \log(1+\eps) > 0$, and we obtain 
  \[
    \P\left( N_{\le \gamma V, \mu} - \gamma m \ge \eps \gamma m  \right)
    \le \exp\left( 
          - (1 + \eps) \log(1+\eps) \gamma m
          + \eps \gamma m 
        \right)
    \le \exp\left( - \eps^2 \gamma m / 3 \right),
  \]
  where the last inequality comes from Taylor expanding $\eps \mapsto (1 + \eps)\log(1 + \eps)$ around $0$. 
  By the same argument and $\lambda$, we get  
  \begin{align*}
    \P\left( N_{\le \gamma V, \mu} - \gamma m \le - \eps \gamma m  \right)
    &\le \inf_{\lambda \ge 0} e^{-\lambda \eps \gamma m} \left( \E e^{-\lambda \left( \mrm{Bernoulli}(1/V) - 1/V \right)} \right)^{\gamma m V} \\
    &= \inf_{\lambda \ge 0} 
        \exp\left( 
          (e^{-\lambda} - 1) \gamma m 
          + (1 - \eps) \lambda \gamma m 
        \right) \\
    &\le \exp\left( 
          \left( \frac{1}{1 + \eps}  - 1 \right) \gamma m 
          + (1 - \eps) \log(1 + \eps) \gamma m 
        \right) 
    \le \exp\left( - \eps^2 \gamma m \right). 
  \end{align*}
\end{proof}

\begin{lemma}[Conditional expectation and variance of $X$]
  \label{lemma: X conditional expectation and variance}
  Let $(\cF_\nu)_\nu$ be the $\sigma$-algebra generated by the first $\nu$ bins. 
  Let $\bar{z}_\nu := \E[ Z_{\nu, 1} \mid \cF_{\nu-1} ]$ and $\sigma_{x, \nu}^2 := \Var[ Z_{\nu, 1} \mid \cF_{\nu-1} ] $
  denote the conditional mean and variance of the values of a random ball in the remaining balls after $\nu - 1$ bins. 
  Then, we have 
  \begin{align*}
    \Cov(Z_{\nu, 1}, Z_{\nu, 2} \mid \cF_{\nu-1})
    = - \frac{\sigma_{x, \nu}^2}{m(V-\nu+1)-1}, \\
    \E[ \tau_\nu^2 \mid \cF_{\nu-1} ]
    = \frac{\sigma_{x, \nu}^2 }{m} \left( 1 - \frac{m-1}{m(V-\nu+1)-1}  \right) + \bar{z}_\nu^2.
  \end{align*}
\end{lemma}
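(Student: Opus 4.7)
The plan is to exploit exchangeability under the conditional distribution on $\cF_{\nu-1}$ together with the standard sampling-without-replacement covariance identity.

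First, I would observe the following conditional structure: given $\cF_{\nu-1}$, the multiset of balls placed in bin $\nu$ is drawn uniformly without replacement from the \emph{remaining} multiset $\cZ_{\nu-1} \subset \cZ$ of size $N_{\nu-1} := m(V-\nu+1)$. In particular, under $\P[\,\cdot\mid\cF_{\nu-1}]$, the variables $Z_{\nu,1},\dots,Z_{\nu,m}$ are exchangeable, each with mean $\bar z_\nu$ and variance $\sigma_{x,\nu}^2$, by the very definition of these conditional moments.

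Next, I would compute $\Cov(Z_{\nu,1}, Z_{\nu,2}\mid \cF_{\nu-1})$ by the standard ``total-sum is deterministic'' trick: if one extended the sampling to exhaust all $N_{\nu-1}$ balls of $\cZ_{\nu-1}$, the total would be a constant, and hence its variance would vanish, yielding
\[
0 \;=\; N_{\nu-1}\,\sigma_{x,\nu}^2 \;+\; N_{\nu-1}(N_{\nu-1}-1)\,\Cov(Z_{\nu,1},Z_{\nu,2}\mid\cF_{\nu-1}).
\]
Solving gives $\Cov(Z_{\nu,1},Z_{\nu,2}\mid \cF_{\nu-1}) = -\sigma_{x,\nu}^2/(N_{\nu-1}-1)$, which is exactly the first claimed identity.

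Finally, I would assemble the second identity by the bias-variance decomposition. Since $\tau_\nu = \frac1m\sum_{s=1}^m Z_{\nu,s}$, exchangeability yields $\E[\tau_\nu\mid\cF_{\nu-1}] = \bar z_\nu$ and
\[
\Var[\tau_\nu\mid\cF_{\nu-1}]
= \frac{1}{m^2}\Bigl(m\,\sigma_{x,\nu}^2 + m(m-1)\,\Cov(Z_{\nu,1},Z_{\nu,2}\mid\cF_{\nu-1})\Bigr)
= \frac{\sigma_{x,\nu}^2}{m}\Bigl(1 - \frac{m-1}{N_{\nu-1}-1}\Bigr).
\]
Adding $\bar z_\nu^2$ gives the stated expression for $\E[\tau_\nu^2\mid\cF_{\nu-1}]$. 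There is no real obstacle here: everything reduces to exchangeability plus the one-line ``constant total sum'' identity for sampling without replacement, and the only bookkeeping item is correctly identifying the population size as $N_{\nu-1} = m(V-\nu+1)$, which follows immediately from the balls-into-bins reformulation in Lemma~\ref{lemma: P to balls into bins} and Lemma~\ref{lemma: sum N x = sum X}.
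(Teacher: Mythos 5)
Your proof is correct. The second identity is assembled essentially as in the paper: both arguments use exchangeability of $Z_{\nu,1},\dots,Z_{\nu,m}$ under $\P[\cdot\mid\cF_{\nu-1}]$ to reduce $\E[\tau_\nu^2\mid\cF_{\nu-1}]$ to the pair moments $\E[Z_{\nu,1}^2\mid\cF_{\nu-1}]$ and $\E[Z_{\nu,1}Z_{\nu,2}\mid\cF_{\nu-1}]$, and the population size $N_{\nu-1}=m(V-\nu+1)$ is identified exactly as you do.

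Where you genuinely diverge from the paper is in how the covariance is obtained. The paper computes $\E[Z_{\nu,1}Z_{\nu,2}\mid\cF_{\nu-1}]$ directly as an average over ordered pairs of distinct balls in the remaining multiset $\cX_\nu$, expands $\sum_{x\ne x'} x x' = \bigl(\sum_x x\bigr)^2 - \sum_x x^2$, and then subtracts $\bar z_\nu^2$; a few lines of algebra yield $-\sigma_{x,\nu}^2/(|\cX_\nu|-1)$. You instead invoke the standard ``deterministic total sum'' identity: if all $N_{\nu-1}$ remaining balls were drawn, the conditional variance of the total is zero, giving $0=N_{\nu-1}\sigma_{x,\nu}^2 + N_{\nu-1}(N_{\nu-1}-1)\Cov(Z_{\nu,1},Z_{\nu,2}\mid\cF_{\nu-1})$ at once. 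Your route is shorter and conceptually cleaner (it isolates exactly why the covariance is negative and why the denominator is $N_{\nu-1}-1$), at the cost of being less self-contained for a reader who has not seen the trick; the paper's direct expansion requires no auxiliary thought experiment but involves more bookkeeping with sums over pairs. Both are valid, and the results agree.
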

\begin{proof}
  Let $\cX_\nu$ denote the remaining values of $\cZ$ after filling up $\nu - 1$ bins. 
  Note that $|\cX_\nu| = m (V - \nu + 1)$ and we have 
  \begin{align*}
    \Cov(Z_{\nu, 1}, Z_{\nu, 2} \mid \cF_{\nu-1})
    &= \E[ Z_{\nu, 1} Z_{\nu, 2} \mid \cF_{\nu-1} ] - \E[ Z_{\nu, 1} \mid \cF_{\nu-1} ]^2 \\
    &= \frac{1}{|\cX_\nu|( |\cX_\nu| - 1 )} \sum_{\text{$x, x'$: different balls in $\cX_\nu$}} x x'
      - \bar{z}_\nu^2 \\
    &= \frac{1}{|\cX_\nu|( |\cX_\nu| - 1 )} \sum_{x \in \cX_\nu} x \left( \sum_{x' \in \cX_\nu} x' - x \right) 
      - \bar{z}_\nu^2 \\
    &= \frac{ \left( \sum_{x \in \cX_\nu} x \right)^2 - \sum_{x \in \cX_\nu} x^2 }{|\cX_\nu|( |\cX_\nu| - 1 )} 
      - \frac{\left( \sum_{x \in \cX_\nu} x \right)^2}{|\cX_\nu|^2}   \\
    &= \frac{\left( \sum_{x \in \cX_\nu} x \right)^2 }{|\cX_\nu|^2( |\cX_\nu| - 1 )} 
      - \frac{\sum_{x \in \cX_\nu} x^2}{|\cX_\nu|( |\cX_\nu| - 1 )}  \\
    &= \frac{ \bar{z}_\nu^2 }{ |\cX_\nu| - 1 } 
      - \frac{ \E[ Z_{\nu, 1}^2 \mid \cF_{\nu-1} ] }{ |\cX_\nu| - 1 } 
    = - \frac{\sigma_{x, \nu}^2}{|\cX_\nu|-1}, 
  \end{align*}
  Then, for $\E[ \tau_\nu^2 \mid \cF_{\nu-1} ]$, by symmetry, we have 
  \begin{align*}
    \E[ \tau_\nu^2 \mid \cF_{\nu-1} ]
    &= \frac{1}{m^2} \sum_{s \in [m]} \E[ Z_{\nu, s}^2 \mid \cF_{\nu-1} ]
      + \frac{1}{m^2} \sum_{s \ne s' \in [m]} \E[ Z_{\nu, s} Z_{\nu, s'} \mid \cF_{\nu-1} ] \\
    &= \frac{1}{m} \E[ Z_{\nu, 1}^2 \mid \cF_{\nu-1} ]
      + \frac{m-1}{m} \E[ Z_{\nu, 1} Z_{\nu, 2} \mid \cF_{\nu-1} ] \\
    &= \frac{\sigma_{x, \nu}^2 + \bar{z}_\nu^2}{m} 
      + \frac{m-1}{m} \left( \Cov(Z_{\nu, 1}, Z_{\nu, 2} \mid \cF_{\nu-1}) + \bar{z}_\nu^2 \right).  
  \end{align*}
  Then, using the previous formula on the covariance, we obtain
  \begin{align*}
    \E[ \tau_\nu^2 \mid \cF_{\nu-1} ]
    &= \frac{\sigma_{x, \nu}^2 + \bar{z}_\nu^2}{m} 
      + \frac{m-1}{m} \left( - \frac{\sigma_{x, \nu}^2}{m(V-\nu+1)-1} + \bar{z}_\nu^2 \right) \\
    &= \frac{\sigma_{x, \nu}^2 }{m} \left( 1 - \frac{m-1}{m(V-\nu+1)-1}  \right)  
      + \bar{z}_\nu^2. 
  \end{align*}
\end{proof}

Note that both $\sigma_{x, \nu}^2$ and $\bar{z}_\nu$ are still random variables. 
We now use Lemma~\ref{lemma: number of used balls} to derive high probability bounds for them and the mean and variance 
of $\tau_\nu^2$.

\begin{lemma}[Concentration of conditional mean and variance of $\tau_\nu^2$]
  \label{lemma: ynu2 conditional mean and var concentration}
  Let $\cF_\nu$ be the $\sigma$-algebra generated by the first $\nu$ bins. 
  Fix $\nu \in [V/3, 2V/3]$, for any $\delta_{\P} \in (0, 1)$, we have with probability at least $1 - \delta_{\P}$ 
  that 
  \begin{align*}
    \E[ \tau_\nu^2 \mid \cF_{\nu-1} ]
    &\ge \left( 1 \pm \frac{60\log(2 V / \delta_{\P})}{m } \right) 
      \frac{\norm{\z}^2}{5 m V},  \\
    \E[ \tau_\nu^2 \mid \cF_{\nu-1} ]
    &\le 60 \log(2 V / \delta_{\P}) \frac{\norm{\z}_2^2}{m V}, \\
    \E[ \tau_\nu^4 \mid \cF_{\nu-1} ]
    &\lesssim \log^2(2 V / \delta_{\P})  \frac{\norm{\z}^4}{m^2 V^2}.
  \end{align*}
\end{lemma}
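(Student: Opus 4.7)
The plan is to condition on the $\sigma$-algebra $\cF_{\nu-1}$ and plug into the explicit formula from Lemma~\ref{lemma: X conditional expectation and variance}, after obtaining tight control on how many balls of each color remain after the first $\nu-1$ bins are filled. Concretely, I first invoke eq.~\eqref{eq: N <= nu mu approx nu m / V} simultaneously for all $\mu \in \cV_l$ via a union bound: with probability at least $1-\delta_{\P}$,
\[
N_{\le \nu-1,\mu} = \tfrac{(\nu-1)m}{V} + \Delta_\mu, \qquad |\Delta_\mu| \le \sqrt{2 m \log(2V/\delta_{\P})} =: D, \quad \forall \mu \in \cV_l.
\]
Henceforth I condition on this event. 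On it, the remaining multiset consists of $m - N_{\le \nu-1,\mu}$ copies of each $z_\mu$, summing to $m(V-\nu+1)$ balls, so the $\cF_{\nu-1}$-conditional mean and (uncentered) second moment of a single draw admit closed forms in terms of the $\Delta_\mu$.

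Using the crucial identity $\One\trans\z = 0$, the leading contribution $\frac{(\nu-1)m}{V}\sum_\mu z_\mu$ cancels and
\[
\bar z_\nu = \frac{-\sum_\mu \Delta_\mu z_\mu}{m(V-\nu+1)}, \qquad \sigma_{x,\nu}^2 + \bar z_\nu^2 = \frac{\|\z\|^2}{V} - \frac{\sum_\mu \Delta_\mu z_\mu^2}{m(V-\nu+1)}.
\]
Cauchy--Schwarz gives $|\sum_\mu \Delta_\mu z_\mu| \le \sqrt V\, D\, \|\z\|$ and trivially $|\sum_\mu \Delta_\mu z_\mu^2| \le D \|\z\|^2$, so with $\nu \in [V/3, 2V/3]$ (hence $V - \nu + 1 = \Theta(V)$) one obtains $\bar z_\nu^2 \lesssim \log(V/\delta_{\P})\|\z\|^2/(mV)$ and $\sigma_{x,\nu}^2 = \|\z\|^2/V\,\bigl(1 \pm C\sqrt{\log(V/\delta_{\P})/m}\bigr)$. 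Substituting into Lemma~\ref{lemma: X conditional expectation and variance} and noting that the prefactor $1 - \tfrac{m-1}{m(V-\nu+1)-1}$ lies in $[1/2, 1]$ for $\nu \in [V/3, 2V/3]$ and $V$ not too small yields both the lower bound $\E[\tau_\nu^2 \mid \cF_{\nu-1}] \ge \|\z\|^2/(5mV)\cdot(1 \pm 60\log(2V/\delta_{\P})/m)$ and the upper bound $\E[\tau_\nu^2 \mid \cF_{\nu-1}] \le 60\log(2V/\delta_{\P})\|\z\|^2/(mV)$, with the generous constants $1/5$ and $60$ absorbing the $\sqrt{\log(V/\delta_{\P})/m}$-type errors under the assumption $m \gg \log^3(1/\delta_{\P})$.

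For the fourth-moment bound, I decompose $\tau_\nu^4 \le 8\bar z_\nu^4 + 8(\tau_\nu - \bar z_\nu)^4$. The deterministic part is controlled directly by the estimate on $\bar z_\nu^2$ above, giving $\bar z_\nu^4 \lesssim \log^2(V/\delta_{\P})\|\z\|^4/(m^2V^2)$. For the centered piece, I apply Hoeffding's comparison (Lemma~\ref{lemma: hoeffding comparison}) with the convex function $g(x) = x^4$ to the shifted partial sum $\sum_s (Z_{\nu,s} - \bar z_\nu)$, whose fourth moment is then dominated by the corresponding i.i.d.~with-replacement sum. A standard expansion yields $\E[(\sum_s W_s)^4 \mid \cF_{\nu-1}] = m\,\E[W_1^4 \mid \cF_{\nu-1}] + 3m(m-1)\sigma_{x,\nu}^4$, and the crude bound $|W_1| \le 2\|\z\|_\infty \le 2\|\z\|$ gives $\E[W_1^4 \mid \cF_{\nu-1}] \le 4\|\z\|^2 \sigma_{x,\nu}^2$; dividing by $m^4$ and using $\sigma_{x,\nu}^2 \lesssim \|\z\|^2/V$ yields the claimed bound in the regime $m, V = \Theta(\log d)$. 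The main obstacle is the fourth-moment step: the bound $\|\z\|_\infty \le \|\z\|$ is wasteful, and one needs to verify it is tight enough given the matching scales of $m$ and $V$, so that the contribution $4m\|\z\|^2\sigma_{x,\nu}^2/m^4$ is absorbed into $\log^2(V/\delta_{\P})\|\z\|^4/(m^2V^2)$ rather than dominating it; in more general regimes one would have to propagate an inductive bound on $\|\z\|_\infty$ through the chaining.
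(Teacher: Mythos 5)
Your proof of the two second-moment bounds is essentially the paper's argument: condition on the same balls-and-bins concentration event (eq.~\eqref{eq: N <= nu mu approx nu m / V} with a union bound over $\mu$), exploit $\One\trans\z = 0$ so the leading $\frac{(\nu-1)m}{V}\sum_\mu z_\mu$ term vanishes, reduce everything to the fluctuations $\Delta_\mu$, and plug the resulting estimates on $\bar z_\nu$ and $\sigma_{x,\nu}^2$ into the explicit formula from Lemma~\ref{lemma: X conditional expectation and variance}. One minor discrepancy: your relative error on $\sigma_{x,\nu}^2$ is of order $\sqrt{\log(V/\delta_{\P})/m}$, while the lemma states $\log(2V/\delta_{\P})/m$. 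Tracing the paper's proof, the dominant error term is $\frac{3\delta_H}{mV}\|\z\|^2$ (from the $\E[Z_{\nu,1}^2]$ estimate), which also has the $\sqrt{\log/m}$ scaling, so your scaling appears to be the honest one; this does not affect the downstream application since all that is needed is a constant-factor lower bound on $\E[\tau_\nu^2 \mid \cF_{\nu-1}]$ when $m \gg \log^3(1/\delta_{\P})$.

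Your fourth-moment argument, however, takes a genuinely different route. The paper expands $\tau_\nu^4$ as a full multinomial sum over index tuples, groups terms by the number of distinct indices, and invokes negative association of without-replacement samples to bound each joint moment $\E[Z_{\nu,s_1}\cdots Z_{\nu,s_4} \mid \cF_{\nu-1}]$ by products of marginal moments $\E[Z_{\nu,1}^p \mid \cF_{\nu-1}]$; the latter are then controlled via $\E[Z_{\nu,1}^p] \lesssim \|\z\|_\infty^{p-2}\|\z\|^2/V$ for $p \ge 2$ and $|\bar z_\nu| \lesssim \delta_H\|\z\|/(m\sqrt V)$. You instead split $\tau_\nu^4 \le 8\bar z_\nu^4 + 8(\tau_\nu - \bar z_\nu)^4$, bound the bias term directly from the $\bar z_\nu^2$ estimate, and handle the centered term by applying Hoeffding's comparison (Lemma~\ref{lemma: hoeffding comparison}) with the convex map $g(x) = x^4$ to reduce to the i.i.d.~(with-replacement) case, after which the $m\,\E W_1^4 + 3m(m-1)\sigma_{x,\nu}^4$ formula for centered i.i.d.~sums does the work. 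This is a cleaner route: it sidesteps the paper's negative-association step, which is delicate for non-monotone test functions such as $x \mapsto x^3$ and is stated only at the $\lesssim$ level. Both routes end up paying the same price through $\|\z\|_\infty \le \|\z\|$, and both leave a stray $\|\z\|^4/(m^3 V)$ term that is absorbed only because $m \gtrsim V$ in the $m, V = \Theta(\log d)$ regime; you correctly flag this as the loose point and note that a more general chaining would require an inductive bound on $\|\z\|_\infty$.
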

\begin{proof}
  For each $\nu \in \cV_{l-1}, \mu \in \cV_l$, let $H_{\nu, \mu} = m - N_{\le \nu-1, \mu}$ denote the number of 
  unused color-$\mu$ balls before we start to fill bin $\nu$. 
  By \eqref{eq: N <= nu mu approx nu m / V} and union bound, we know 
  with probability at least $1 - \delta_{\P}$, 
  \[
    H_{\nu, \mu} 
    = \left( 1 - \frac{\nu-1}{V}\right) m  \pm \sqrt{ 2 m \log(2 V/\delta_{\P}) }
    =: \left( 1 - \frac{\nu-1}{V}\right) m  \pm \delta_H, 
    \quad \forall \mu \in \cV_l,
  \]
  where $\delta_H = \tilde{\Theta}(\sqrt{m})$.
  Consider $\nu \in [V/3, 2V/3]$. Using the above bound and the condition $\sum_\mu z_\mu = 0$, we can derive 
  \begin{align*}
    \abs{ \bar{z}_\nu  }
    = \abs{ \frac{1}{m(V-\nu+1)} \sum_{\mu \in \cV_l} H_{\nu, \mu} z_\mu  }
    &= \abs{ 
        \frac{1}{m(V-\nu+1)} 
        \sum_{\mu \in \cV_l} 
        \left(
          \left( 1 - \frac{\nu-1}{V}\right) m  \pm \delta_H
        \right) z_\mu  
      } \\
    &\le \frac{1}{m(V-\nu+1)} \sum_{\mu \in \cV_l} \delta_H |z_\mu| 
    \le \frac{3 \delta_H}{m V} \norm{\z}_1
    \le \frac{3 \delta_H}{m \sqrt{V}} \norm{\z}_2.
  \end{align*}
  Similarly, for the second moment, we have 
  \begin{align*}
    \E[ Z_{\nu, 1}^2 \mid \cF_{\nu-1} ]
    &= \frac{1}{m(V-\nu+1)} \sum_{\mu \in \cV_l} H_{\nu, \mu} z_\mu^2 \\
    &= \frac{1}{m(V-\nu+1)} 
      \sum_{\mu \in \cV_l} \left( \left( 1 - \frac{\nu-1}{V}\right) m \pm \delta_H \right) z_\mu^2 \\
    &= \frac{1 - (\nu-1)/V}{V-\nu+1} \norm{\z}^2
      \pm  \frac{3 \delta_H}{mV}  \norm{\z}^2.
  \end{align*}
  Therefore, we have 
  \begin{align*}
    \sigma_{x, \nu}^2 
    = \E[ Z_{\nu, 1}^2 \mid \cF_{\nu-1} ] - \bar{z}_\nu^2 
    &= \frac{1 - (\nu-1)/V}{V-\nu+1} \norm{\z}^2
      \pm \frac{3 \delta_H}{mV}  \norm{\z}^2
      \pm \frac{9 \delta_H^2}{m^2 V} \norm{\z}_2^2 \\
    &= \frac{1 - (\nu-1)/V}{V-\nu+1} \norm{\z}^2
      \pm \frac{10 \delta_H^2}{m^2 V} \norm{\z}_2^2 \\
    &= \left(
      1 \pm \frac{30 \delta_H^2}{m^2}
    \right)
    \frac{1 - (\nu-1)/V}{V-\nu+1} \norm{\z}^2.
  \end{align*}
  Now, we are ready to estimate $\E[ \tau_\nu^2 \mid \cF_{\nu-1} ]$ and $\E[ \tau_\nu^4 \mid \cF_{\nu-1} ]$.
  For the second moment, recall from Lemma~\ref{lemma: X conditional expectation and variance} that 
  \[
    \E[ \tau_\nu^2 \mid \cF_{\nu-1} ]
    = \frac{\sigma_{x, \nu}^2 }{m} \left( 1 - \frac{m-1}{m(V-\nu+1)-1}  \right) + \bar{z}_\nu^2.
  \]
  Therefore, for the lower bound, we have 
  \begin{align*}
    \E[ \tau_\nu^2 \mid \cF_{\nu-1} ]
    &\ge \frac{\sigma_{x, \nu}^2 }{m} \left( 1 - \frac{m-1}{m(V-\nu+1)-1}  \right) \\
    &= \left( 1 \pm \frac{30 \delta_H^2}{m^2} \right)
      \frac{1}{m} \left( 1 - \frac{m-1}{m(V-\nu+1)-1}  \right) 
      \frac{1 - (\nu-1)/V}{V-\nu+1} \norm{\z}^2 \\
    &= \left( 1 \pm \frac{30 \delta_H^2}{m^2} \right)
      \left( 1 - \frac{m-1}{m(V-\nu+1)-1}  \right) 
      \left( 1 - \frac{\nu-1}{V} \right)
      \frac{\norm{\z}^2}{m (V-\nu+1) } \\
    &\ge \left( 1 \pm \frac{30 \delta_H^2}{m^2} \right)
      \frac{\norm{\z}^2}{5 m V}, 
  \end{align*}
  where we have used $\nu \in [V/3, 2V/3]$ and $V \gg 1$ in the last line.
  For the upper bound, by our previous bound on $\bar{z}_\nu$, we have 
  \begin{align*}
    \E[ \tau_\nu^2 \mid \cF_{\nu-1} ]
    &\le 
      \left( 1 \pm \frac{30 \delta_H}{m} \right)
      \left( 1 - \frac{m-1}{m(V-\nu+1)-1}  \right) 
      \left( 1 - \frac{\nu-1}{V} \right)
      \frac{\norm{\z}^2}{m (V-\nu+1) }
      + \left( \frac{3 \delta_H}{m \sqrt{V}} \norm{\z}_2 \right)^2 \\
    &\le 
      \left( 1 \pm \frac{30 \delta_H}{m} \right)
      \frac{3 \norm{\z}^2}{m V }
      + \frac{9 \delta_H^2}{m} \frac{3 \norm{\z}_2^2}{m V} \\
    &\le \frac{30 \delta_H^2}{m} \frac{\norm{\z}_2^2}{m V}. 
  \end{align*}
  Now, consider the second moment of $\tau_\nu^2$. By symmetry and the negative association of without-replacement samples, we have 
  \begin{align*}
    \E[ \tau_\nu^4 \mid \cF_{\nu-1} ]
    &= \frac{1}{m^4} \sum_{s_1, s_2, s_3, s_4=1}^m \E[ Z_{\nu, s_1} Z_{\nu, s_2} Z_{\nu, s_3} Z_{\nu, s_4} \mid \cF_{\nu-1} ] \\
    &\lesssim \frac{1}{m^3} \E[ Z_{\nu, 1}^4 \mid \cF_{\nu-1} ]
      + \frac{1}{m^2} \E[ Z_{\nu, 1}^3 Z_{\nu, 2} \mid \cF_{\nu-1} ]
      + \frac{1}{m^2}\E[ Z_{\nu, 1}^2 Z_{\nu, 2}^2 \mid \cF_{\nu-1} ] \\
      &\qquad
      + \frac{1}{m}\E[ Z_{\nu, 1}^2 Z_{\nu, 2} Z_{\nu, 3} \mid \cF_{\nu-1} ]
      + \E[ Z_{\nu, 1} Z_{\nu, 2} Z_{\nu, 3} Z_{\nu, 4} \mid \cF_{\nu-1} ] \\
    &\lesssim 
      \frac{1}{m^3} \E[ Z_{\nu, 1}^4 \mid \cF_{\nu-1} ]
      + \frac{1}{m^2} \E[ Z_{\nu, 1}^3 \mid \cF_{\nu-1} ] \E[ Z_{\nu, 1} \mid \cF_{\nu-1} ]
      + \frac{1}{m^2}\E[ Z_{\nu, 1}^2 \mid \cF_{\nu-1} ]^2 \\
      &\qquad
      + \frac{1}{m} \E[ Z_{\nu, 1}^2 \mid \cF_{\nu-1} ]
        \E[ Z_{\nu, 1} \mid \cF_{\nu-1} ]^2
      + \E[ Z_{\nu, 1} \mid \cF_{\nu-1} ]^4. 
  \end{align*}
  Recall from our previous calculation that 
  \[
    \abs{ \E[ Z_{\nu, 1} \mid \cF_{\nu-1} ] }
    = |\bar{z}_\nu| 
    \lesssim \frac{\delta_H}{m \sqrt{V}} \norm{\z}.
  \]
  In addition, for any $p \ge 2$, we have 
  \[
    \abs{ \E[ Z_{\nu, 1}^p \mid \cF_{\nu-1}  ]  }
    \le \frac{3}{V} \sum_{\mu}  |z_\mu|^p
    \le \frac{3}{V} \norm{\z}_\infty^{p-2} \norm{\z}^2 
    \lesssim \frac{\norm{\z}^p}{V} .  
  \]
  Hence, 
  \begin{align*}
    \E[ \tau_\nu^4 \mid \cF_{\nu-1} ]
    &\lesssim 
      \frac{\norm{\z}^4}{m^3 V} 
      + \frac{1}{m^2 V} \norm{\z}^3 \frac{\delta_H}{m \sqrt{V}} \norm{\z}
      + \frac{1}{m^2 V^2} \norm{\z}^4 
      + \frac{1}{m V} \norm{\z}^2 
        \frac{\delta_H^2}{m^2 V} \norm{\z}^2
      + \frac{\delta_H^4}{m^4 V^2} \norm{\z}^4 \\
    &\lesssim 
      \frac{\norm{\z}^4}{m^2 V}
      \left( \frac{1}{m} \vee \frac{\delta_H^4}{m^2 V} \right).
  \end{align*}
  Finally, plug in $\delta_H := \sqrt{2 m \log(2 V / \delta_{\P})}$, and we get 
  \begin{align*}
    \E[ \tau_\nu^2 \mid \cF_{\nu-1} ]
    &\ge \left( 1 \pm \frac{60\log(2 V / \delta_{\P})}{m } \right) 
      \frac{\norm{\z}^2}{5 m V},  \\
    \E[ \tau_\nu^2 \mid \cF_{\nu-1} ]
    &\le 60 \log(2 V / \delta_{\P}) \frac{\norm{\z}_2^2}{m V}, \\
    \E[ \tau_\nu^4 \mid \cF_{\nu-1} ]
    &\lesssim 
      \frac{\norm{\z}^4}{m^2 V}
      \left( \frac{1}{m} \vee \frac{\log^2(2 V / \delta_{\P}) }{V} \right)
    \lesssim \log^2(2 V / \delta_{\P})  \frac{\norm{\z}^4}{m^2 V^2}.
  \end{align*}
\end{proof}

We will now establish a Bernstein-style lower bound on $\norm{\btau}^2$. To this end, we will need the following simple 
lemma, which appears in most proofs of the standard Bernstein inequality. 

\begin{lemma}
  \label{lemma: bennett, intermediate step}
  Let $W$ be a random variable with $\E W \le 0$ and $W \le b$ a.s. Then, for any $\lambda \in [0, 1/b]$, we have 
  $\E e^{\lambda W} \le \exp( \lambda^2 \E W^2 )$.
\end{lemma}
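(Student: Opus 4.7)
\medskip

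My plan is to follow the standard Bennett/Bernstein-style argument: establish a pointwise bound of the form $e^{x} \le 1 + x + x^2$ valid on the range where $\lambda W$ lives, take expectations, then use $1 + y \le e^{y}$. The hypotheses $W \le b$ a.s.\ and $\lambda \in [0, 1/b]$ are precisely what allow the pointwise bound to apply, while $\E W \le 0$ is what lets the linear term be discarded.

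First, I would verify the elementary inequality $e^{x} \le 1 + x + x^{2}$ for every $x \le 1$. One clean way is to analyze $f(x) := 1 + x + x^{2} - e^{x}$: compute $f(0) = 0$, $f'(0) = 0$, $f''(x) = 2 - e^{x}$, and check that $f$ has a local minimum at $0$ (since $f''(0) > 0$), that $f'(1) = 3 - e > 0$, and that $f$ blows up to $+\infty$ as $x \to -\infty$. A short case analysis (or, equivalently, noting that the function $\varphi(x) := (e^{x} - 1 - x)/x^{2}$ is increasing and equals $e - 2 < 1$ at $x = 1$) gives $f \ge 0$ on $(-\infty, 1]$. This is the only step with any actual content; once in hand, the rest is assembly.

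Next, since $\lambda \ge 0$ and $W \le b$ a.s., we have $\lambda W \le \lambda b \le 1$ a.s., so the pointwise inequality gives
\[
  e^{\lambda W} \le 1 + \lambda W + \lambda^{2} W^{2} \quad \text{a.s.}
\]
Taking expectations and using $\lambda \E W \le 0$ (which follows from $\lambda \ge 0$ and $\E W \le 0$) yields
\[
  \E e^{\lambda W} \le 1 + \lambda \E W + \lambda^{2} \E W^{2} \le 1 + \lambda^{2} \E W^{2}.
\]
Finally, applying $1 + y \le e^{y}$ with $y = \lambda^{2} \E W^{2} \ge 0$ gives the desired bound $\E e^{\lambda W} \le \exp(\lambda^{2} \E W^{2})$.

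The main (mild) obstacle is verifying the scalar inequality $e^{x} \le 1 + x + x^{2}$ on $(-\infty, 1]$; the constant $b$ in the hypothesis $\lambda \le 1/b$ is exactly tuned so that $\lambda W \le 1$ almost surely, which is what the pointwise inequality requires. Everything else is a one-line expectation and the standard $1 + y \le e^{y}$ step, so there is no real conceptual obstruction.
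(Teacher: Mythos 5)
Your proof is correct and follows the same Bennett-style route as the paper: both establish that $e^{\lambda W} \le 1 + \lambda W + \lambda^2 W^2$ almost surely (the paper phrases this as $\phi(z) := (e^z - 1 - z)/z^2$ being increasing with $\phi(1) \le 1$, which you explicitly note is equivalent to your pointwise inequality on $(-\infty,1]$), then take expectations, drop the linear term using $\E W \le 0$, and finish with $1 + y \le e^y$.
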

\begin{proof}
  Define $\phi(z) = (e^z - 1 - z) / z^2$, so that we can write $e^z = 1 + z + z^2 \phi(z)$. 
  It is known that $\phi$ is increasing on $\R$ and $\phi(1) \le 1$ (see any standard proof of the Bennett inequality). 
  Therefore, for any $\lambda \in [0, 1/b]$, we have 
  \[
    \E e^{\lambda W}
    = 1 + \lambda \E W + \lambda^2 \E[ W^2 \phi(\lambda W) ]
    \le 1 + \lambda^2 \E W^2 
    \le \exp( \lambda^2 \E W^2 ).
  \]
\end{proof}

\begin{lemma}[One-step concentration]
  \label{lemma: rhm stats, one step concentration}
  Given $\delta_{\P} \in (0, 1)$, suppose that $\min\{m, V\} \gg \log^3(1/\delta_{\P})$. Then, 
  \[
    \norm{\btau}^2 
    \ge \frac{\norm{\z}^2}{20 m}, \quad 
    \text{with probability at least $1 - O(\delta_{\P})$.}
  \]
\end{lemma}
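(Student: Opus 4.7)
The plan is to lower bound $\norm{\btau}^2 = \sum_{\nu \in \cV_{l-1}} \tau_\nu^2$ by restricting the sum to $I := \{\lceil V/3\rceil, \dots, \lfloor 2V/3 \rfloor\}$, the range on which the sharp conditional-moment estimates of Lemma~\ref{lemma: ynu2 conditional mean and var concentration} apply. Decomposing $\sum_{\nu \in I} \tau_\nu^2 = \sum_{\nu \in I} \E[\tau_\nu^2 \mid \cF_{\nu-1}] - M_I$, where $M_I := \sum_{\nu \in I} ( \E[\tau_\nu^2 \mid \cF_{\nu-1}] - \tau_\nu^2 )$ is a martingale adapted to $(\cF_\nu)$, the first sum is at least $(1-o(1))|I|\cdot\norm{\z}^2/(5mV) \gtrsim \norm{\z}^2/(15m)$ on the event where the conditional lower bound of Lemma~\ref{lemma: ynu2 conditional mean and var concentration} holds uniformly over $\nu \in I$ (obtained by a union bound, replacing $\delta_\P$ with $\delta_\P/V$ and absorbing the resulting logarithm into constants). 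It therefore suffices to show $|M_I| \le \norm{\z}^2/(60m)$ with high probability, since $1/15 - 1/60 > 1/20$.

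The main obstacle is that the increments $\tau_\nu^2$ are not bounded on the correct scale: the a priori estimate $\tau_\nu^2 \le \norm{\z}_\infty^2$ is roughly $V$ times the typical size $\tilde{O}(\norm{\z}^2/(mV))$, which would destroy any raw martingale Bernstein tail. The remedy is to truncate at $u := C \log(V/\delta_\P)\norm{\z}^2/(mV)$ for a sufficiently large absolute constant $C$. To control the truncation event, I apply a Bernstein tail bound to the without-replacement sum $\tau_\nu = m^{-1}\sum_s Z_{\nu,s}$ conditioned on $\cF_{\nu-1}$, using Hoeffding's comparison (Corollary~\ref{cor: sample with replacement, more concentrated}) together with the variance bound $\sigma_{x,\nu}^2 \lesssim \norm{\z}^2/V$ already extracted inside the proof of Lemma~\ref{lemma: ynu2 conditional mean and var concentration} and the uniform bound $|Z_{\nu,s}| \le \norm{\z}_\infty \le \norm{\z}$. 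This yields $\P[\tau_\nu^2 > u \mid \cF_{\nu-1}] \le \delta_\P/V^2$, and a union bound over $\nu \in I$ produces the high-probability event $\cA_{\rm trunc} := \{\tau_\nu^2 \le u \text{ for all } \nu \in I\}$.

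Next, I introduce the stopping time $T := \inf\{\nu \in I : \tau_\nu^2 > u\}$ (set to $+\infty$ if empty) and apply Freedman's martingale Bernstein inequality to the stopped differences $D_\nu := ( \E[\tau_\nu^2 \mid \cF_{\nu-1}] - \tau_\nu^2 )\indi_{\{\nu \le T\}}$, which satisfy $|D_\nu| \le u$. Their predictable quadratic variation is controlled by the fourth-moment estimate
\[
\sum_{\nu \in I} \E[D_\nu^2 \mid \cF_{\nu-1}] \le \sum_{\nu \in I} \E[\tau_\nu^4 \mid \cF_{\nu-1}] \lesssim \frac{\log^2(V/\delta_\P)\norm{\z}^4}{m^2 V}
\]
by Lemma~\ref{lemma: ynu2 conditional mean and var concentration}. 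Freedman's inequality with deviation $t := \norm{\z}^2/(60m)$ then gives failure probability at most $\exp(-c\,t^2/(\sigma_{\rm tot}^2 + ut))$, whose Gaussian exponent scales as $V/\log^2(V/\delta_\P)$ and whose subexponential exponent as $V/\log(V/\delta_\P)$; both exceed $\log(1/\delta_\P)$ under the hypothesis $\min\{m,V\} \gg \log^3(1/\delta_\P)$. The condition on $m$ is separately needed so that the multiplicative factor $(1 - 60\log(V/\delta_\P)/m)$ in the lower bound of Lemma~\ref{lemma: ynu2 conditional mean and var concentration} is $1 - o(1)$.

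On the intersection of the three events --- the conditional lower bound of Lemma~\ref{lemma: ynu2 conditional mean and var concentration} holding uniformly over $I$, the truncation event $\cA_{\rm trunc}$, and the Freedman bound on the stopped martingale --- we have $T > \max I$, so the stopped and unstopped martingales coincide on $I$. Consequently $|M_I| \le \norm{\z}^2/(60m)$ and $\norm{\btau}^2 \ge \sum_{\nu \in I} \tau_\nu^2 \ge (1-o(1))\norm{\z}^2/(15m) - \norm{\z}^2/(60m) \ge \norm{\z}^2/(20m)$. Aggregating the three union bounds, the total failure probability is $O(\delta_\P)$, as claimed.
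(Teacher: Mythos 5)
Your high-level strategy matches the paper's: restrict to the middle bins $\nu \in [V/3, 2V/3]$, decompose $\sum_\nu \tau_\nu^2$ into a conditional-mean term plus a martingale, and close with a martingale concentration inequality. However, your implementation has two genuine gaps, both avoided by the paper.

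\textbf{The stopping-time/truncation device is broken and unnecessary.} You define $T := \inf\{\nu : \tau_\nu^2 > u\}$ and set $D_\nu := (\E[\tau_\nu^2 \mid \cF_{\nu-1}] - \tau_\nu^2)\indi\{\nu \le T\}$, claiming $|D_\nu| \le u$. This is false at $\nu = T$: the predictable indicator $\indi\{\nu \le T\}$ is $\cF_{\nu-1}$-measurable precisely because it does not see $\tau_\nu$, so $\tau_T^2 > u$ and $D_T$ can be very negative. More importantly, the machinery is superfluous. You only need a one-sided tail $\P[\sum D_\nu \ge t]$ (if $M_I$ is very negative, $\norm{\btau}^2$ is even larger), and for Freedman/Bennett-type one-sided bounds one only needs the increments bounded \emph{above}. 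Since $\tau_\nu^2 \ge 0$, the increment $\E[\tau_\nu^2 \mid \cF_{\nu-1}] - \tau_\nu^2 \le \E[\tau_\nu^2 \mid \cF_{\nu-1}]$ is automatically bounded by the conditional mean; no truncation of $\tau_\nu^2$ itself is required. This is the paper's central observation (see Lemma~\ref{lemma: bennett, intermediate step}). In addition, even if you wanted to keep the truncation, the claimed bound $\P[\tau_\nu^2 > u \mid \cF_{\nu-1}] \le \delta_\P/V^2$ does not follow from a Bernstein bound with $u = C\log(V/\delta_\P)\norm{\z}^2/(mV)$: with $m,V = \Theta(\log d)$ and only $|Z_{\nu,s}| \le \norm{\z}_\infty \le \norm{\z}$, the deviation $m\sqrt{u}$ sits in the sub-exponential (not sub-Gaussian) regime, giving an exponent of order $\sqrt{\log(V/\delta_\P)}$, which is far too small.

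\textbf{The conditional-moment bounds hold only with high probability, not almost surely.} When you invoke Freedman, you plug in $\E[\tau_\nu^2\mid\cF_{\nu-1}] \le \xi_*$ and $\E[\tau_\nu^4\mid\cF_{\nu-1}] \le \sigma_*^2$ from Lemma~\ref{lemma: ynu2 conditional mean and var concentration} after a union bound. But Freedman/Bennett requires the predictable upper bound on the increment and the predictable quadratic variation bound to hold almost surely; conditioning on the high-probability event where they hold destroys the martingale structure, and you cannot simply apply the inequality on that event. The paper resolves this with the predictable indicator $\iota_\nu := \indi\{\E[\tau_\nu^2\mid\cF_{\nu-1}] \le \xi_*,\, \E[\tau_\nu^4\mid\cF_{\nu-1}] \le \sigma_*^2\}$: since $\iota_\nu$ is $\cF_{\nu-1}$-measurable, the truncated process $\tilde\tau_\nu^2 := \tau_\nu^2\iota_\nu$ has conditional moments bounded by $\xi_*, \sigma_*^2$ almost surely, satisfies $\tilde\tau_\nu^2 \le \tau_\nu^2$ always, and agrees with the original process on the high-probability event. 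Your proof needs exactly this device (or an equivalent one), and with it in place the stopping time $T$ and the truncation event $\cA_{\mathrm{trunc}}$ can be dropped entirely, reducing your argument to the paper's.
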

\begin{proof}
  Clear that to lower bound $\norm{\btau}^2$, it suffices to lower bound $\sum_{\nu=V/3}^{2V/3} \tau_\nu^2$. By 
  Lemma~\ref{lemma: ynu2 conditional mean and var concentration} and union bound, we have with probability at least 
  $1 - \delta_{\P}$ that 
  \[
    \E[ \tau_\nu^2 \mid \cF_{\nu-1} ]
    \le C \log(V / \delta_{\P}) \frac{\norm{\z}_2^2}{m V}
    =: \xi_*, \quad 
    \E[ \tau_\nu^4 \mid \cF_{\nu-1} ]
    \le C \log^2( V / \delta_{\P})  \frac{\norm{\z}^4}{m^2 V^2}
    =: \sigma_*^2,
  \]
  for all $\nu \in [V/3, 2V/3]$, where $C$ is some universal constant. 
  Note that both $\xi_*$ and $\sigma_*^2$ are deterministic quantities.
  Define 
  \[
    \iota_\nu 
    := \indi\braces{ 
        \E\left[ \tau_\nu^2 \mid \cF_{\nu-1} \right] \le \xi_*, 
        \E\left[ \tau_\nu^4 \mid \cF_{\nu-1} \right] \le \sigma_*^2 
      }, \quad 
    \tilde{\tau}_\nu^2 
    :=  \tau_\nu^2 \iota_\nu. 
  \]
  Note that $\iota_\nu$ is $\cF_{\nu-1}$-measurable and, therefore, the conditional expectation and second moment 
  of $\tilde{\tau}_\nu^2$ are bounded by $\xi_*$ and $\sigma_*^2$, respectively. Moreover, clear that 
  $\tau_\nu^2 \ge \tilde{\tau}_\nu^2$ a.s. 
  Then, by the standard Chernoff argument, for any $\lambda, t > 0$, 
  \begin{align*}
    \P\left[ \sum_{\nu=V/3}^{2V/3} \left( \tilde{\tau}_\nu^2 - \E[ \tilde{\tau}_\nu^2 \mid \cF_{\nu-1} ] \right) \le  -t \right]
    &= \P\left[ 
        e^{- \lambda \sum_{\nu=V/3}^{2V/3} \left( \tilde{\tau}_\nu^2 - \E[ \tilde{\tau}_\nu^2 \mid \cF_{\nu-1} ] \right)} 
        \ge e^{\lambda t} 
      \right] \\
    &\le e^{-\lambda t}
      \E\left[ 
        e^{- \lambda \sum_{\nu=V/3}^{2V/3} \left( \tilde{\tau}_\nu^2 - \E[ \tilde{\tau}_\nu^2 \mid \cF_{\nu-1} ] \right)} 
      \right]. 
  \end{align*}
  For the last expectation, note that $\E[ \tilde{\tau}_{2V/3}^2 \mid \cF_{2V/3-1} ] - \tilde{\tau}_{2V/3}^2$
  is a variable with zero conditional mean and conditional variance bounded by $\sigma_*^2$, and is 
  upper bounded by $\xi_*$ a.s. Hence, by Lemma~\ref{lemma: bennett, intermediate step}, we have, for any 
  $\lambda \le \xi_*\inv$,  
  \begin{align*}
    \E\left[ 
      e^{- \lambda \sum_{\nu=V/3}^{2V/3} \left( \tilde{\tau}_\nu^2 - \E[ \tilde{\tau}_\nu^2 \mid \cF_{\nu-1} ] \right)} 
    \right]
    &= \E\left[ 
        e^{- \lambda \sum_{\nu=V/3}^{2V/3-1} \left( \tilde{\tau}_\nu^2 - \E[ \tilde{\tau}_\nu^2 \mid \cF_{\nu-1} ] \right)} 
        \E\left[
          e^{\lambda \left( \E[ \tilde{\tau}_{2V/3}^2 \mid \cF_{2V/3-1} ] - \tilde{\tau}_{2V/3}^2 \right)} 
          \;\bigg|\; 
          \cF_{2V/3-1}
        \right]
      \right] \\
    &\le 
      \E\left[ 
        e^{- \lambda \sum_{\nu=V/3}^{2V/3-1} \left( \tilde{\tau}_\nu^2 - \E[ \tilde{\tau}_\nu^2 \mid \cF_{\nu-1} ] \right)} 
      \right] 
      e^{\lambda^2 \sigma_*^2}. 
  \end{align*}
  Repeat the same process and we get 
  \[
    \P\left[ \sum_{\nu=V/3}^{2V/3} \left( \tilde{\tau}_\nu^2 - \E[ \tilde{\tau}_\nu^2 \mid \cF_{\nu-1} ] \right) \le  -t \right]
    \le \exp\left( -\lambda t + V \lambda^2 \sigma_*^2 / 3 \right).
  \]
  Choose $t = \beta \norm{\z}^2/ m $ where $\beta > 0$ is a parameter to be determined later. The RHS is minimized 
  at 
  \[
    \lambda_* 
    := \frac{3 t}{2 V \sigma_*^2}
    = \frac{3 t m^2 V}{2 C \log^2( V / \delta_{\P}) \norm{\z}^4 }
    = \frac{3 \beta m V}{2 C \log^2( V / \delta_{\P}) \norm{\z}^2 }.
  \]
  For it to be admissible, it suffices to require 
  \[
    \lambda_*
    \le \frac{1}{\xi_*}
    \quad\Leftarrow\quad 
    \frac{3 \beta m V}{2 C \log^2( V / \delta_{\P}) \norm{\z}^2 } 
    \le \frac{m V}{C \log(V / \delta_{\P}) \norm{\z}_2^2 }
    \quad\Leftarrow\quad 
    \beta 
    \le \frac{\log(V / \delta_{\P})}{3}.
  \]
  In particular, this implies $\lambda_*$ is admissible for all $\beta = O(1)$. As a result, 
  \begin{align*}
    \P\left[ \sum_{\nu=V/3}^{2V/3} \left( \tilde{\tau}_\nu^2 - \E[ \tilde{\tau}_\nu^2 \mid \cF_{\nu-1} ] \right) \le  - \frac{\beta \norm{\z}^2}{m} \right]
    \le \exp\left( 
        - \frac{3 t^2}{4 V \sigma_*^2} 
      \right)\bigg|_{t=\beta\norm{\z}^2/m} 
    = \exp\left( - \frac{3 \beta^2  V}{ 4 C \log^2( V / \delta_{\P}) } \right). 
  \end{align*}
  For the last term to be bounded by $\delta_{\P}$, it suffices to require $\beta^2  V \gg \log^3(V/\delta_{\P})
  \sim (\log\log d + \log(1/\delta_{\P}))^3$.

  Finally, we estimate $\sum_{\nu=V/3}^{2V/3} \E[ \tilde{\tau}_\nu^2 \mid \cF_{\nu-1} ]$. Our goal is to show 
  it is $\Omega( \norm{\z}^2 / m )$ with high probability. 
  By Lemma~\ref{lemma: ynu2 conditional mean and var concentration} and union bound, we know, with probability at least 
  $1 - \delta_{\P}$, that 
  \begin{align*}
    \sum_{\nu=V/3}^{2V/3} \E[ \tilde{\tau}_\nu^2 \mid \cF_{\nu-1} ]
    = \sum_{\nu=V/3}^{2V/3} \E[ \tau_\nu^2 \mid \cF_{\nu-1} ] \iota_\nu 
    &= \sum_{\nu=V/3}^{2V/3} \E[ \tau_\nu^2 \mid \cF_{\nu-1} ] \\
    &\ge \left( 1 \pm \frac{60\log(2 V / \delta_{\P})}{m } \right) 
      \frac{\norm{\z}^2}{5 m V}
    \ge \frac{\norm{\z}^2}{10 m}. 
  \end{align*}
  Pick $\beta = 1/20$. Combining the above two bounds, we conclude that 
  \[
    \norm{\btau}^2 
    \ge \frac{\norm{\z}^2}{20 m}, \quad 
    \text{with probability at least }
    1 - O(\delta_{\P}). 
  \]
\end{proof}

\subsection{Multistep concentration}

Recall that our goal is to lower bound the norm of $\btau\ps{l}_\mu - \btau\ps{l}_{\mu'} = \bP_1 \cdots \bP_l (\e_{\mu} - \e_{\mu'})$ 
for $\mu \ne \mu' \in \cV_l$. In addition, since $(\bP_k)_k$ are column stochastic, $\One\trans\bP_k\x = 0$, as long as 
$\One\trans\x = 0$. In particular, this implies $\One\trans (\bP_k \cdots \bP_l (\e_{\mu} - \e_{\mu'})) = 0$. As a 
result, we can repeatedly apply Lemma~\ref{lemma: rhm stats, one step concentration} to unroll 
$\norm{ \bP_1 \cdots \bP_l (\e_{\mu} - \e_{\mu'}) }$. Formally, we have the following lemma. 

\begin{lemma}
  \label{lemma: rhm, multistep concentration}
  Suppose that $m, V = \Theta(\log d)$ and $L = O(\log d / \log\log d)$.
  Fix $l \in [L]$ and $\mu \ne \mu' \in \cV_l$. As long as $\min\{m, V\} \gg \log^3(1/\delta_{\P})$, then we have 
  $\norm{ \btau\ps{l}_\mu - \btau\ps{l}_{\mu'} } \ge (20 m)^{-l/2}$ with probability at least $1 - O(\delta_{\P})$.
\end{lemma}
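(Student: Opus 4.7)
The plan is to iterate the one-step bound in Lemma~\ref{lemma: rhm stats, one step concentration} along the chain $\bP_1 \cdots \bP_l$, exploiting both the independence of the transition matrices and the preservation of the zero-sum condition under column-stochastic maps. Define the reverse partial products
\[
  \z\ps{l} := \e_\mu - \e_{\mu'}, \qquad
  \z\ps{k-1} := \bP_k \z\ps{k} = \bP_k \bP_{k+1} \cdots \bP_l (\e_\mu - \e_{\mu'}), \quad k = l, l-1, \dots, 1.
\]
so that $\btau\ps{l}_\mu - \btau\ps{l}_{\mu'} = \z\ps{0}$. By Lemma~\ref{lemma: P to balls into bins}, the $\bP_k$ are i.i.d., and in particular $\bP_k$ is independent of $\bP_{k+1}, \dots, \bP_l$, which determine $\z\ps{k}$. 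Moreover, since each $\bP_j$ is column stochastic (as shown in the text preceding Lemma~\ref{lemma: P to balls into bins}), we have $\One^\top \bP_j = \One^\top$, and hence inductively $\One^\top \z\ps{k} = \One^\top(\e_\mu - \e_{\mu'}) = 0$ for every $k \in [l]_0$.

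With these two properties in hand, I would proceed by downward induction on $k$. Fix any realization of $\bP_{k+1}, \dots, \bP_l$ (equivalently, condition on the $\sigma$-algebra they generate). On this event, $\z\ps{k}$ is a deterministic vector with $\One^\top \z\ps{k} = 0$, and $\bP_k$ is independent of the conditioning and distributed as in Lemma~\ref{lemma: P to balls into bins}. Applying Lemma~\ref{lemma: rhm stats, one step concentration} with $\z = \z\ps{k}$ and $\btau = \bP_k \z\ps{k} = \z\ps{k-1}$, and with failure probability parameter $\delta_{\P}/L$ in place of $\delta_{\P}$, gives
\[
  \norm{\z\ps{k-1}}^2 \ge \frac{\norm{\z\ps{k}}^2}{20 m}
\]
with conditional probability at least $1 - O(\delta_{\P}/L)$. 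The hypothesis $\min\{m,V\} \gg \log^3(L/\delta_{\P}) = \log^3(1/\delta_{\P}) + O(\log^3\log d)$ still holds in the regime $m,V = \Theta(\log d)$ and $L = O(\log d/\log\log d)$, so the one-step lemma is applicable at each level.

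Taking a union bound over $k \in [l]$, with probability at least $1 - O(l \delta_{\P}/L) = 1 - O(\delta_{\P})$, all $l$ one-step bounds hold simultaneously. Chaining them and using $\norm{\z\ps{l}}^2 = 2$ yields
\[
  \norm{\btau\ps{l}_\mu - \btau\ps{l}_{\mu'}}^2 = \norm{\z\ps{0}}^2 \ge \frac{\norm{\z\ps{l}}^2}{(20m)^{l}} = \frac{2}{(20m)^{l}} \ge (20m)^{-l},
\]
which gives the stated bound after taking square roots. The only subtlety is the conditional/independence bookkeeping above; once that is set up, the proof reduces to a clean inductive application of Lemma~\ref{lemma: rhm stats, one step concentration} together with a union bound.
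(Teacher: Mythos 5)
Your proposal is correct and follows essentially the same route as the paper: iterate Lemma~\ref{lemma: rhm stats, one step concentration} down the chain $\bP_1\cdots\bP_l$, using the i.i.d.\ property from Lemma~\ref{lemma: P to balls into bins} and the preservation of $\One^\top\z=0$ under column-stochastic maps, take a union bound over the $l$ (at most $L$) steps with per-step failure probability $\delta_{\P}/L$, and check that $\min\{m,V\}\gg\log^3(L/\delta_{\P})$ still holds because $\log^3 L\lesssim(\log\log d)^3\ll\log d$. Your explicit conditioning on $\bP_{k+1},\dots,\bP_l$ before invoking the one-step lemma is a more careful statement of the same argument that the paper leaves implicit.
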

\begin{proof}
  First, recall from Lemma~\ref{lemma: P to balls into bins} that $\bP_1, \dots, \bP_L$ are i.i.d.~copies of $\bP$. 
  Hence, by Lemma~\ref{lemma: rhm stats, one step concentration} and union bound, we have with probability at least 
  $1 - O(L \delta_{\P})$ that 
  \[
    \norm{\bP_1 \cdots \bP_l (\e_{\mu} - \e_{\mu'})}^2 
    \ge \frac{1}{20 m} \norm{\bP_2 \cdots \bP_l (\e_{\mu} - \e_{\mu'})}^2 
    \ge \cdots 
    \ge (20 m)^{-l} \norm{\e_\mu - \e_{\mu'}}^2
    \ge (20 m)^{-l},
  \]
  as long as $\min\{m, V\} \gg \log^3(1/\delta_{\P})$. To complete the proof, it suffices to replace $\delta_{\P}$ with 
  $\delta_{\P}/L$ and note that $\log^3 L \le (\log\log d)^3 \lesssim \log d$. 
\end{proof}

We are now ready to prove Proposition~\ref{prop: signal lower bound}, which we restate below. 
\rhoLowerBound*
\begin{proof}
  Let $Q_l: \cV_{l-1} \times \cV_l^s \to \R_{\ge 0}$ denote the transition function at level $l$. That is, 
  for any $\nu \in \cV_{l-1}$ and $\bmu \in \cV_l^s$, $Q_l(\nu, \bmu)$ is the probability of $\nu$ generating $\bmu$. 
  We also abuse notations and write $\Q_l \in \R^{V \times V^s}$ for the corresponding transition matrix. 
  Then, for any label $\zeta \in \cV_0$ and level-$l$ patch $\bmu \in \cV_l^s$, we have 
  \[
    \P\left[ \text{first level-$l$ patch is $\bmu$} \mid \text{label is $\zeta$} \right]
    = \e_\zeta\trans \bP_1 \cdots \bP_{l-1} \Q_l \e_{\bmu}. 
  \]
  Since the marginal distribution of the first token at each level is uniform, 
  $\P\left[ \text{first level-$l$ patch is $\bmu$} \right] = 1/(mV)$. 
  Hence, by the Bayes rule, for any possible level-$l$ patch $\bmu \in \cP_l$, we have 
  \begin{align*}
    \P\left[ \text{label is $\zeta$} \mid \text{first level-$l$ patch is $\bmu$} \right]
    &= \frac{
        \P\left[ \text{first level-$l$ patch is $\bmu$} \mid \text{label is $\zeta$} \right]
        \P\left[ \text{label is $\zeta$} \right]
      }{
        \P\left[ \text{first level-$l$ patch is $\bmu$} \right]
      } \\
    &= \frac{
        \P\left[ \text{first level-$l$ patch is $\bmu$} \mid \text{label is $\zeta$} \right]
        /V
      }{
        1 / (mV)
      } \\
    &= m \e_\zeta\trans \bP_1 \cdots \bP_{l-1} \Q_l \e_{\bmu}.
  \end{align*}
  In other words, for any $l \in [L]$ and $\bmu \in \cP_l$, we have 
  \[
    \q\ps{l}_{\bmu}
    = m \e_\zeta\trans \bP_1 \cdots \bP_{l-1} \Q_l \e_{\bmu}. 
  \]
  Since the RHM is assumed to be $(V, m)$-uniform, we have $Q_l(\nu, \bmu) = \indi\{ \bmu \in \cP_{l, \nu} \} / m$.
  In particular, this implies $\Q_l \e_{\bmu} = \e_{\nu_{\bmu}}$, where $\nu_{\bmu} \in \cV_{l-1}$ is the unique 
  level-$(l-1)$ symbol that can generate $\bmu$. Thus, for any pair of non-synonyms $\bmu, \bmu' \in \cP_l$, we have 
  \[
    \q\ps{l}_{\bmu} - \q\ps{l}_{\bmu'}
    = m \e_\zeta\trans \bP_1 \cdots \bP_{l-1} \left( \e_{\nu_{\bmu}} - \e_{\nu_{\bmu'}}  \right). 
  \]
  Since $\bmu, \bmu'$ are non-synonyms, $\| \e_{\nu_{\bmu}} - \e_{\nu_{\bmu'}} \|^2 = 2$. Thus, by 
  Lemma~\ref{lemma: rhm, multistep concentration}, $\norm{ \q\ps{l}_{\bmu} - \q\ps{l}_{\bmu'} } \ge (20 m)^{-(l-1)/2}$
  with probability at least $1 - O(\delta_{\P})$, as long as $\min\{m, V\} \gg \log^3(1/\delta_{\P})$. Take union bound 
  over $l \in [L]$ and $\bmu, \bmu' \in \cP_l^2$, and we complete the proof.
\end{proof}

\newpage
\section{Application of Informal Principle~\ref{principle: shallow to deep chaining}: Deep quadratic boolean functions}
\label{sec: deep quad}

We include in this section another application of Informal Principle~\ref{principle: shallow to deep chaining}
to show its generality. The argument here will be informal. In particular, the example distribution considered here contains all the structure in the targets while having unstructured inputs. 
This can be viewed as a simplified boolean version of the target class considered in \cite{allen-zhu_backward_2023}.
Hierarchical learning for a related class of tree-structured targets defined on Gaussian inputs was also considered in \cite{dandi_computational_2025}.

Let $\{\pm 1\}^d$ be our input space. For each $S \subset [d]$, define $\chi_S: \{\pm 1\}^n \to \R$ to be the 
multilinear monomial $\chi_S(\x) = \prod_{i \in S} x_i$, with $\chi_{\varnothing} \equiv 1$ by convention. 
Equip the input space with the uniform distribution $\cD := \Unif(\{\pm 1\}^d)$. It is well-known that 
$\{ \chi_S \}_{S \subset [d]}$ is an orthonormal basis of $L^2(\cD)$, which is often called the Fourier basis 
of boolean functions (see, for example, \cite{odonnell_analysis_2014}). 
Therefore, any $f : \{\pm 1\}^d \to \R$ can be uniquely decomposed as $f = \sum_{S \subset [d]} \hat{f}_S \chi_S$, 
where $\hat{f}_S := \inprod{f}{\chi_S}_{L^2(\cD)} = \E_{\x \sim \cD}[ f(\x) \chi_S(\x) ]  \in \R$. 

Provided that all nonzero Fourier coefficients $\hat{f}_S$ are bounded away from $0$, learning $f$ can be reduced to 
identifying the support $\supp\hat{f} := \{ S \subset [d] \,:\, \hat{f}_S \ne 0 \}$, since with $\supp\hat{f}$ known,
the corresponding coefficients $\hat{f}_S = \E_{\x \sim \cD}[ f(\x) \chi_S(\x) ]$ can be efficiently estimated. 

In this section, we consider deep quadratic (boolean) functions, which is a special class of boolean functions whose 
support has a binary tree/forest structure. Roughly speaking, we consider functions of form $f(\x) = x_1 x_2 + x_3 x_4 
+ (x_1 x_2) (x_3 x_4) + x_5 x_6$. That is, the support of the higher-order terms can be obtained by joining two 
lower-order terms, and within each level, the supports of different terms are disjoint. See Figure~\ref{fig: example, 
deep quad} for another example. 

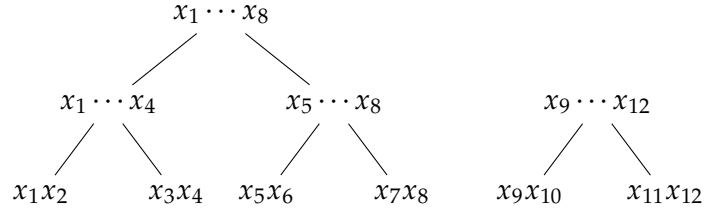
\begin{figure}[htbp]
  \centering
  \begin{tikzpicture}[
    grow=down,
    level distance=12mm,
    every node/.style={},
    edge from parent/.style={draw}
  ]

  \begin{scope}[
    xshift=0cm,
    level 1/.style={sibling distance=30mm},
    level 2/.style={sibling distance=18mm}
  ]
  \node {$x_1 \cdots x_8$}
    child { node {$x_1 \cdots x_4$}
      child { node {$x_1 x_2$} }
      child { node {$x_3 x_4$} }
    }
    child { node {$x_5 \cdots x_8$}
      child { node {$x_5 x_6$} }
      child { node {$x_7 x_8$} }
    };
  \end{scope}

  \begin{scope}[
    xshift=5cm,
    yshift=-12mm,
    level 1/.style={sibling distance=18mm}
  ]
  \node { $x_9 \cdots x_{12}$ }
    child { node {$x_9 x_{10}$}
    }
    child { node {$x_{11} x_{12}$}
    };
  \end{scope}
  \end{tikzpicture}
  \caption{
    Example of a deep quadratic function. The target function is the sum of all terms appearing in the 
    graph. That is, $f(\x) = x_1 \cdots x_8 + x_9 \cdots x_{12} + x_1 \cdots x_4 + x_5 \cdots x_8 + x_1x_2 + \cdots x_7 x_8$.
  }
  \label{fig: example, deep quad}
\end{figure}

Formally, we define our target function class as follows. For ease of presentation, we assume the target function
has no constant and degree-$1$ terms, both of them can be efficiently learned and removed from the target by estimating 
the corresponding $d+1$ Fourier coefficients. 

\begin{definition} 
  \label{def: deep quad}
  Consider $f: \{ \pm 1 \}^d \to \R$ and let $f = \sum_{S \subset [d]} \hat{f}_S \chi_S$ denote its Fourier 
  decomposition. For each $q \in \bbN$, let $\cS_q := \{ S \subset [d] \,:\, |S| = q, \hat{f}_S \ne 0 \}$ denote the 
  supports of the degree-$q$ terms. We say $f$ is a \tnbf{deep quadratic boolean function} if the following hold. 
  \begin{enumerate}[(a)]
    \item $\cS_q \ne \varnothing$ only if $q = 2^k$ for some $k \in \bbN_{>0}$. 
    \item \label{itm: deep quad: disjoint supports} 
      For any $q \in \bbN$, the elements of $\cS_q$ are disjoint. 
    \item \label{itm: deep quad: union}
      For any $S \in \cS_q$ with $q > 2$, there exists $S_1, S_2 \in \cS_{f, q/2}$ such that $S = S_1 \cup S_2$.
  \end{enumerate}
\end{definition}
\begin{remark}
  By condition~\ref{itm: deep quad: disjoint supports}, the decomposition $S = S_1 \cup S_2$ in condition~\ref{itm: deep 
  quad: union} is unique. Moreover, while the support of a general boolean function can have exponentially many terms, 
  the support size of a deep quadratic boolean function is bounded by $O(d)$, as going up one level halves the 
  number of possible terms. 
\end{remark}

Let $f_*: \{\pm 1\}^d \to \R$ denote the target deep quadratic function.
First, we claim $f_*$ can be represented using a deep network. 
For each $D \in \bbN_{>0}$, let $\bPhi: \R^D \to \R^{D(D-1)/2}$ denote the quadratic feature map: 
\[
  \bPhi_D(\x) = ( x_i x_j )_{1 \le i < j \le D} = \{ \chi_S(\x) \}_{S \subset [D], |S|=2}.
\]
For the first layer of our learner network, it suffices to set 
\[
  \y\ps{1} :=  \left( \inprod{\e_S}{ \Phi_d(\x) } \right)_{S \in \cS_1}, \quad 
  f\ps{1} := \sum_{S \in \cS_1} \hat{f}_S y\ps{1}_S.
\]
Note that this is a simple two-layer (linear) network with a nonlinear feature map $\bPhi$. Then, for any $l \ge 2$, 
we inductively construct the $l$-th layer as 
\begin{equation}
  \label{eq: deep quad: learner repr}
  \x\ps{l} := \y\ps{l-1}, \quad 
  \y\ps{l} := \left( \inprod{\e_{S_1S_2}}{\bPhi_{|\cS_{2^{l-1}}|}(\x\ps{l}) } \right)_{S = S_1 \cup S_2 \in |\cS_{2^l}|}, \quad
  f\ps{l} := \sum_{S \in \cS_{2^l}} \hat{f}_S y\ps{l}_S, 
\end{equation}
where $S = S_1 \cup S_2$ is the decomposition guaranteed by condition~\ref{itm: deep quad: union} and $\e_{S_1S_2}$
is the one-hot vector corresponding to that entry in the output of $\bPhi_{|\cS_{l-1}|}$. By construction, 
$f\ps{l}$ is equal to degree-$2^l$ terms in the Fourier decomposition of $f$, and $f := \sum_{l=1}^{\log d} f\ps{l}$
is equal to the target function $f_*$. 

Now, we consider the optimization task and how Informal Principle~\ref{principle: shallow to deep chaining} applies 
to this function class. 

Clear that condition~(i) of Informal Principle~\ref{principle: shallow to deep chaining} holds, as $f_*$ is the summation 
of $\sum_{S \in \cS_{2^l}} \hat{f}_S \chi_S$. 

Consider condition~(ii), which requires the signal received by the lower levels to be clean, so that the lower part of 
the learner will not overfit the higher part of the target. To see how this condition is satisfied, first recall that 
$\{ \chi_S \}_S$ form an orthonormal basis. Then, note that the disjoint supports assumption 
(condition~\ref{itm: deep quad: disjoint supports} of Definition~\ref{def: deep quad}) implies that, if the earlier
layers have identified the correct supports (cf.~\eqref{eq: deep quad: learner repr}), then the inputs of the current 
layer will also be independent Rademacher variables. As a result, once the bottom level is learned, we can essentially 
``remove'' it and reduce the level number of the problem by one. 

Finally, consider condition~(iii) of Informal Principle~\ref{principle: shallow to deep chaining}. Since the features 
$\{ \chi_S \}_S$ are orthonormal, identifiability is no issue in this setting, and there are various ways to enforce it in the learner
model. One such strategy is to choose the $l$-th layer of the learner model to be 
$\x\ps{l} \mapsto \inprod{ \w\ps{l} }{ \bPhi(\x\ps{l}) }$ and train using the MSE loss 
\[
  \Loss\ps{l}(\w\ps{l}) 
  := \hat{\E} \left( f_*(\x) - \sum_{l=1}^{L-1} f\ps{l}(\x) - \inprod{ \w\ps{l} }{ \bPhi(\x\ps{l}) } \right)^2.
\]
Assume (as an induction hypothesis) that $f\ps{l}(\x)$ approximates the degree-$2^l$ terms of $f$ well. Then, 
by our previous discussion on the orthogonality and distribution of $\x\ps{l}$, we know the unique population minimizer 
is $\w\ps{l} = \sum_{S = S_1 \cup S_2 \in \cS_{2^l}} \hat{f}_S \e_{S_1 S_2}$. As long as the nonzero coefficients 
$\hat{f}_S$ are bounded away from $0$, we can approximately recover this population solution using polynomially many 
samples, and then form a feature mapping similar to \eqref{eq: deep quad: learner repr}.

\end{document}